\documentclass{article}

     \PassOptionsToPackage{numbers,compress}{natbib}



     \usepackage[final]{neurips_2020}

\usepackage[utf8]{inputenc} 
\usepackage[T1]{fontenc}    
\usepackage{hyperref}       
\usepackage{url}            
\usepackage{booktabs}       
\usepackage{amsfonts}       
\usepackage{nicefrac}       
\usepackage{microtype}      
\usepackage{amsthm}
\usepackage{amssymb}
\usepackage{graphicx}
\usepackage{hhline}
\usepackage{xcolor}
\usepackage{multirow}
\usepackage{subcaption}
\newcommand{\comment}[1]{}
\usepackage{floatrow}
\usepackage{wrapfig,lipsum,booktabs}

\newfloatcommand{capbtabbox}{table}[][\FBwidth]
\floatsetup[table]{capposition=top}
\newcommand{\cellcenter}[1]{\multicolumn{1}{|c|}{#1}}
\usepackage[framemethod=tikz]{mdframed}
\usepackage{cancel}

\definecolor{boxcolor}{rgb}{0.122, 0.435, 0.698}

\newmdenv[innerlinewidth=0.5pt, roundcorner=4pt,linecolor=boxcolor,innerleftmargin=6pt,
innerrightmargin=6pt,innertopmargin=6pt,innerbottommargin=6pt]{mybox}

\newtheorem{theorem}{Theorem}
\newtheorem{observation}{Observation}
\newtheorem{definition}{Definition}
\usepackage{commath}

\usepackage{amsmath}
\usepackage{ulem}

\usepackage[utf8]{inputenc} 
\usepackage[T1]{fontenc}    
\usepackage{hyperref}       
\usepackage{url}            
\usepackage{booktabs}       
\usepackage{amsfonts}       
\usepackage{nicefrac}       
\usepackage{microtype}      
\usepackage{amsthm}
\usepackage{graphicx}
\usepackage{hhline}
\usepackage{xcolor}
\usepackage{multirow}
\usepackage{algpseudocode}
\usepackage{algorithm}
\usepackage{subcaption}
\usepackage{graphicx}
\usepackage{adjustbox}
\usepackage{rotating}
\usepackage{hhline}
\newtheorem{lemma}{Lemma}
\usepackage{commath}
\algnewcommand\algorithmicforeach{\textbf{for each}}
\algdef{S}[FOR]{ForEach}[1]{\algorithmicforeach\ #1\ \algorithmicdo}
\usepackage[toc,page]{appendix}
\usepackage{amsmath}

\newenvironment{customthm}[1]
  {\innercustomthm}
  {\endinnercustomthm}

\title{A Ranking-based, Balanced Loss Function Unifying Classification and Localisation in Object Detection 
}

%

\author{%
  Kemal Oksuz, Baris Can Cam, Emre Akbas$^*$, Sinan Kalkan\thanks{Equal contribution for senior authorship.}\\
  Dept. of Computer Engineering, Middle East Technical University\\
  Ankara, Turkey \\
  \texttt{\{kemal.oksuz, can.cam, eakbas, skalkan\}@metu.edu.tr} 
}

\begin{document}

\maketitle

\begin{abstract}
We propose \textit{average Localisation-Recall-Precision} (aLRP), a unified, bounded, balanced and ranking-based loss function for both classification and localisation tasks in object detection. aLRP extends the Localisation-Recall-Precision (LRP) performance metric (Oksuz et al., 2018) inspired from how Average Precision (AP) Loss extends precision to a ranking-based loss function for classification (Chen et al., 2020). aLRP has the following distinct advantages: (i) aLRP is the first ranking-based loss function for both classification and localisation tasks. (ii) Thanks to using ranking for both tasks, aLRP naturally enforces high-quality localisation for high-precision classification. (iii) aLRP provides provable balance between positives and negatives. (iv) Compared to on average $\sim$6 hyperparameters in the loss functions of state-of-the-art detectors, aLRP Loss has only one hyperparameter, which we did not tune in practice. On the COCO dataset, aLRP Loss improves its ranking-based predecessor, AP Loss, up to around $5$ AP points, achieves $48.9$ AP without test time augmentation and outperforms all one-stage detectors. Code available at: \url{https://github.com/kemaloksuz/aLRPLoss}.






\end{abstract}

\section{Introduction}
\label{sec:intro}
Object detection requires jointly optimizing a classification objective ($\mathcal{L}_c$) and a localisation objective ($\mathcal{L}_r$) combined conventionally with a balancing hyperparameter ($w_r$) as follows: 
\begin{align}
    \mathcal{L} = \mathcal{L}_c + w_r \mathcal{L}_r.
    \label{eq:classical_loss}
\end{align}
Optimizing $\mathcal{L}$ in this manner has three critical drawbacks: (D1) It does not correlate the two tasks, and hence, does not guarantee high-quality localisation for high-precision examples (Fig. \ref{fig:Teaser}). (D2) It requires a careful tuning of $w_r$ \cite{CenterNet,GIoULoss,FreeAnchor}, which is prohibitive since a single training may last on the order of days, and ends up with a sub-optimal constant $w_r$ \cite{LapNet,WrLearningviaUncertainty}. (D3) It is adversely impeded by the positive-negative imbalance in $\mathcal{L}_c$ and inlier-outlier imbalance in $\mathcal{L}_r$, thus it requires sampling strategies \cite{gradientharmonizing, FocalLoss} or specialized loss functions \cite{FastRCNN, LibraRCNN}, introducing more hyperparameters (Table \ref{tab:Hyperparameters}). 


A recent solution for D3 is to directly maximize Average Precision (AP) with a loss function called AP Loss \cite{APLoss}. AP Loss is a ranking-based loss function to optimize the ranking of the classification outputs and provides balanced training between positives and negatives.

In this paper, we extend AP Loss to address all three drawbacks (D1-D3) with one, unified loss function called average Localisation Recall Precision (aLRP) Loss. In analogy with the link between precision and AP Loss, we formulate aLRP Loss as the average of LRP values \cite{LRP} over the positive examples on the Recall-Precision (RP) curve. aLRP has the following benefits: (i) It exploits ranking for both classification and localisation, enforcing high-precision detections to have high-quality localisation (Fig. \ref{fig:Teaser}). (ii) aLRP has a single hyperparameter (which we did not need to tune) as opposed to $\sim$6 in state-of-the-art loss functions (Table \ref{tab:Hyperparameters}). (iii) The network is trained by a single loss function that provides provable balance between positives and negatives.

Our contributions are: 
\textbf{(1)} We develop a generalized framework to  optimize non-differentiable ranking-based functions by extending the error-driven optimization of AP Loss.
\textbf{(2)} We prove that ranking-based loss functions conforming to this generalized form provide a natural balance between positive and negative samples.
\textbf{(3)} We introduce aLRP Loss (and its gradients) as a special case of this generalized formulation. Replacing AP and SmoothL1 losses by aLRP Loss for training RetinaNet improves the performance by up to $5.4$AP, and our best model reaches $48.9$AP without test time augmentation, outperforming all existing one-stage detectors with significant margin.
    

\begin{figure}[t]
    \centerline{
        \includegraphics[width=0.98\textwidth]{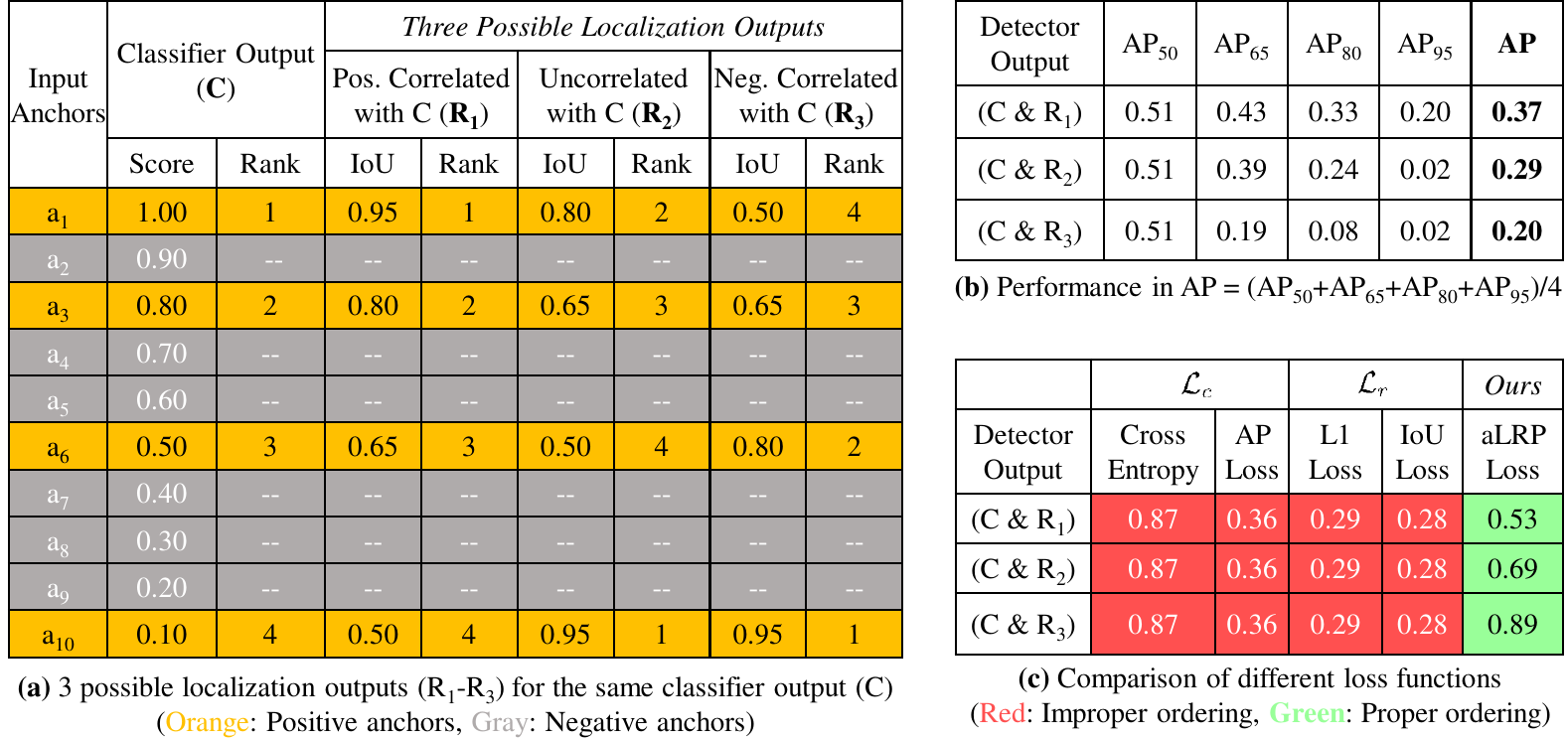}
    }
    \caption{\textbf{aLRP Loss enforces high-precision detections to have high-IoUs, while others do not.} \textbf{(a)} Classification and three possible localisation outputs for $10$ anchors and the rankings of the positive anchors with respect to (wrt) the scores (for $C$) and IoUs (for $R_1$, $R_2$ and $R_3$). Since the regressor is only trained by positive anchors, ``--'' is assigned for negative anchors. \textbf{(b,c)} Performance and loss assignment comparison of $R_1$, $R_2$ and $R_3$ when combined with $C$. When correlation between the rankings of classifier and regressor outputs decreases, performance degrades up to $17$ AP (b). While any combination of $\mathcal{L}_c$ and $\mathcal{L}_r$ cannot distinguish  them, aLRP Loss penalizes the outputs accordingly (c). The details of the calculations are presented in Appendix A. 
} 
\label{fig:Teaser}
\end{figure}

\begin{table}[t]
    \centering
    \caption{State-of-the-art loss functions have several hyperparameters ($6.4$ on avg.). aLRP Loss has only one for step-function approximation (Sec. \ref{subsec:APLoss}). See Appendix B for descriptions of the required hyperparameters. FL: Focal Loss, CE: Cross Entropy, SL1: Smooth L1, H: Hinge Loss. }
    \label{tab:Hyperparameters}
    \begin{tabular}{|l|l|c|}\hline
         \cellcenter{Method}& \cellcenter{$\mathcal{L}$} & Number of hyperparameters \\ \hline \hline
        AP Loss \cite{APLoss}&AP Loss+$\alpha$ SL1&3 \\ \hline
        Focal Loss \cite{FocalLoss}&FL+ $\alpha$ SL1&4\\ \hline
        FCOS \cite{FCOS}&FL+$\alpha$ IoU+$\beta$ CE&4\\  \hline
        DR Loss \cite{DRLoss}&DR Loss+$\alpha$ SL1&5\\ \hline
        FreeAnchor \cite{FreeAnchor}& $\alpha \log(\max( e^\text{CE} \times e^{\beta \text{SL1}}))$+$\gamma$ FL&8\\\hline
        Faster R-CNN \cite{FasterRCNN}&CE+$\alpha$ SL1+$\beta$CE+$\gamma$ SL1&9\\ \hline
        Center Net \cite{CenterNet}&FL+FL+$\alpha$ L2+$\beta$ H+$\gamma$ (SL1+SL1)&10\\ \hline \hline
        Ours&aLRP Loss&1 
        \\ \hline
    \end{tabular}\\
\end{table}

\subsection{Related Work}
\label{sec:RelatedWork}
\textbf{Balancing $\mathcal{L}_c$ and $\mathcal{L}_r$} in Eq. \eqref{eq:classical_loss}, an open problem in object detection (OD) \cite{Review}, bears important challenges: Disposing $w_r$, and correlating $\mathcal{L}_c$ and $\mathcal{L}_r$. \textit{Classification-aware regression loss} \cite{PrimeSample} links the branches by weighing $\mathcal{L}_r$ of an anchor using its classification score. Following Kendall et al. \cite{WrLearningviaUncertainty}, \textit{LapNet} \cite{LapNet} tackled the challenge by making $w_r$ a learnable parameter based on homoscedastic uncertainty of the tasks. Other approaches \cite{IoUNet,BoundedIoU} combine the outputs of two branches during non-maximum suppression (NMS) at inference. Unlike these methods, aLRP Loss considers the ranking wrt scores for both branches and  addresses the imbalance problem naturally.


\textbf{Ranking-based objectives in OD:} An inspiring solution for balancing classes is to optimize a ranking-based objective. However, such objectives are discrete wrt the scores, rendering their direct incorporation challenging. A solution is to use black-box solvers for an interpolated AP loss surface \cite{BlackboxCombinatorialSolvers}, which, however, provided only little gain in performance. AP Loss \cite{APLoss} takes a different approach by using an error-driven update mechanism to calculate gradients (Sec. \ref{eq:APLoss}). An alternative, DR Loss \cite{DRLoss}, employs Hinge Loss to enforce a margin between the scores of the positives and negatives. Despite promising results, these methods are limited to classification and leave localisation as it is. In contrast, we propose a single, balanced, ranking-based loss  to train both branches.


\section{Background}
\label{subsec:background}


\subsection{AP Loss and Error-Driven Optimization}
\label{subsec:APLoss}

AP Loss \cite{APLoss} directly optimizes the following loss for AP with intersection-over-union (IoU) thresholded at 0.50:
\begin{align}
    \label{eq:APLoss}
    \mathcal{L}^{\mathrm{AP}} = 1- \mathrm{\mathrm{AP}}_{50} = 1 - \frac{1}{|\mathcal{P}|} \sum \limits_{i \in \mathcal{P}} \mathrm{precision}(i) 
    = 
    1 - \frac{1}{|\mathcal{P}|} \sum \limits_{i \in \mathcal{P}} \frac{\mathrm{rank}^+(i)}{\mathrm{rank}(i)},
\end{align}
where $\mathcal{P}$ is the set of positives; $\mathrm{rank}^+(i)$ and $\mathrm{rank}(i)$ are respectively the ranking positions of the $i$th sample among positives and all samples. $\mathrm{rank}(i)$ can be easily defined using a step function $H(\cdot)$ applied on the  difference between the score of $i$ ($s_i$) and the score of each other sample:
\begin{align}
\mathrm{rank}(i) =  
1+\sum \limits_{j \in \mathcal{P}, j \neq i} H(x_{ij})+\sum \limits_{j \in  \mathcal{N}} H(x_{ij}),
\end{align}
where $x_{ij} = -(s_i-s_j)$ is positive if $s_i<s_j$; $\mathcal{N}$ is the set of negatives; and $H(x)=1$ if $x \geq 0$ and $H(x)=0$ otherwise. In practice, $H(\cdot)$ is replaced by $x / 2 \delta+0.5$ in the interval $[-\delta,\delta]$ (in aLRP, we use $\delta=1$ as set by AP Loss \cite{APLoss} empirically; this is the only hyperparameter of aLRP -- Table \ref{tab:Hyperparameters}). $\mathrm{rank}^+(i)$ can be defined similarly over $j\in\mathcal{P}$. With this notation, $\mathcal{L}^{\mathrm{AP}}$ can  be rewritten as follows:
\begin{align}
    \label{eq:APPrimaryTerms}
    \mathcal{L}^{\mathrm{AP}} = \frac{1}{|\mathcal{P}|} \sum \limits_{i \in \mathcal{P}} \sum \limits_{j \in \mathcal{N}}  \frac{H(x_{ij})} {\mathrm{rank}(i)}
    =
    \frac{1}{|\mathcal{P}|} \sum \limits_{i \in \mathcal{P}} \sum \limits_{j \in \mathcal{N}}  L_{ij}^{\mathrm{AP}} ,
\end{align}
where  $L_{ij}^{\mathrm{AP}}$ is called a \textit{primary term} which is zero if $i \notin \mathcal{P}$ or $j \notin \mathcal{N}$
\footnote{By setting $L_{ij}^{\mathrm{AP}}=0$ when $i \notin \mathcal{P}$ or $j \notin \mathcal{N}$, we do not require the $y_{ij}$ term used by Chen et al. \cite{APLoss}.}.

Note that this system is composed of two parts: (i) The differentiable part up to $x_{ij}$, and (ii) the non-differentiable part that follows $x_{ij}$. Chen et al. proposed that an error-driven update of $x_{ij}$ (inspired from perceptron learning \cite{Rosenblatt}) can be combined with derivatives of the differentiable part. Consider the update in $x_{ij}$ that minimizes $L^{\mathrm{AP}}_{ij}$ (and hence $\mathcal{L}^{\mathrm{AP}}$): $\Delta x_{ij}={L^{{\mathrm{AP}}*}_{ij}}-L^{\mathrm{AP}}_{ij}=0-L^{\mathrm{AP}}_{ij}=-L^{\mathrm{AP}}_{ij}$, with the target, $L^{{\mathrm{AP}}*}_{ij}$, being zero for perfect ranking. Chen et al. showed that the gradient of $L^{\mathrm{AP}}_{ij}$ wrt $x_{ij}$ can be taken as $- \Delta x_{ij}$. With this, the gradient of $\mathcal{L}^{\mathrm{AP}}$ wrt scores can be calculated as follows:
\begin{align}
    \label{eq:APGradients}
    \frac{\partial \mathcal{L}^{\mathrm{AP}}}{\partial s_i} 
    = \sum \limits_{j,k} \frac{\partial \mathcal{L}^{\mathrm{AP}}}{\partial x_{jk}} \frac{\partial x_{jk}}{\partial s_i} 
    = 
    -\frac{1}{|\mathcal{P}|} \sum \limits_{j,k} \Delta x_{jk} \frac{\partial x_{jk}}{\partial s_i} 
    = \frac{1}{|\mathcal{P}|} \left( \sum \limits_{j} \Delta x_{ij} - \sum \limits_{j} \Delta x_{ji} \right). 
\end{align}

\subsection{Localisation-Recall-Precision (LRP) Performance Metric}
\label{subsec:LRP}
LRP \cite{LRP,LRParXiv} is a metric that  quantifies  classification and localisation performances jointly. Given a detection set thresholded at a score ($s$) and their matchings with the ground truths,  LRP  aims to assign an error value within $[0,1]$ by considering localisation, recall and precision:
\begin{align}
\label{eq:LRPdefcompact}
\mathrm{LRP}(s) = \frac{1}{N_{FP} +N_{FN}+{N_{TP}}}\left( N_{FP} +N_{FN} + \sum \limits_{k \in {TP}} \mathcal{E}_{loc}(k) \right),
\end{align}
where $N_{FP}, N_{FN}$ and $N_{TP}$ are the number of false positives (FP), false negatives (FN) and true positives (TP);  A detection is a TP if $\mathrm{IoU}(k) \geq \tau$ where $\tau=0.50$ is the conventional TP labeling threshold, and a TP has a localisation error of $\mathcal{E}_{loc}(k) = (1-\mathrm{IoU}(k))/(1-\tau)$. The detection performance is, then,  $\min \limits_s (\mathrm{LRP}(s))$ on the precision-recall (PR) curve, called optimal LRP (oLRP).

\section{A Generalisation of Error-Driven Optimization for Ranking-Based Losses}
\label{sec:generalization}

Generalizing the error-driven optimization technique of AP Loss \cite{APLoss} to other ranking-based loss functions is not trivial. In particular, identifying the primary terms is a challenge especially when the loss has components that involve only  positive examples, such as the localisation error in aLRP Loss. 

Given a ranking-based loss function, $\mathcal{L}=\frac{1}{Z}\sum_{i \in \mathcal{P}} \ell(i)$, defined as a sum over individual losses, $\ell(i)$, at positive examples (e.g., Eq. \eqref{eq:APLoss}), with $Z$ as a problem specific normalization constant, our goal is to express $\mathcal{L}$ as a sum of \textit{primary terms} in a more general form than Eq. \eqref{eq:APPrimaryTerms}: 

\begin{definition} The \textbf{primary term} $L_{ij}$ concerning examples $i \in \mathcal{P}$ and $j \in \mathcal{N}$  is the loss originating from $i$ and distributed over $j$ via a probability mass function $p (j | i)$. Formally, 
\begin{align}
    \label{eq:GeneralPrimaryTermDefinition}
    L_{ij} = \begin{cases} \ell (i) p(j|i), & \mathrm{for}\;i \in \mathcal{P}, j \in \mathcal{N} \\
    0, & \mathrm{otherwise}.
    \end{cases}
\end{align}
\end{definition}
Then, as desired, we can express $\mathcal{L}=\frac{1}{Z}\sum_{i \in \mathcal{P}} \ell(i)$ in terms of $L_{ij}$:
\begin{theorem}
\label{theorem:PrimaryTerms}
$\mathcal{L}= \frac{1}{Z}\sum \limits_{i \in \mathcal{P} } \ell(i) =
\frac{1}{Z}\sum \limits_{i \in \mathcal{P} }\sum \limits_{j \in \mathcal{N} }  L_{ij}$. See Appendix C for the proof.
\end{theorem}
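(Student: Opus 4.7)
The proof is essentially a one-line unrolling of Definition 1, so I would just verify the chain of equalities carefully and make sure the definition of $p(j|i)$ as a probability mass function is used exactly once in the right place. The plan is to start from the right-hand side and peel off the two summations in sequence.

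First, I would fix $i \in \mathcal{P}$ and evaluate the inner sum $\sum_{j \in \mathcal{N}} L_{ij}$. Since $i \in \mathcal{P}$, every term falls into the nontrivial case of Equation (6), so $L_{ij} = \ell(i) p(j|i)$. Factoring out $\ell(i)$ (which does not depend on $j$) gives $\sum_{j \in \mathcal{N}} L_{ij} = \ell(i) \sum_{j \in \mathcal{N}} p(j|i)$. The key step—and really the only nontrivial ingredient—is that $p(\cdot \mid i)$ is a probability mass function on $\mathcal{N}$, so $\sum_{j \in \mathcal{N}} p(j|i) = 1$, and the inner sum collapses to $\ell(i)$.

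Summing the result over $i \in \mathcal{P}$ and dividing by $Z$ then gives $\frac{1}{Z}\sum_{i \in \mathcal{P}} \sum_{j \in \mathcal{N}} L_{ij} = \frac{1}{Z}\sum_{i \in \mathcal{P}} \ell(i) = \mathcal{L}$, which is the claim. The zero case of Equation (6) (when $i \notin \mathcal{P}$ or $j \notin \mathcal{N}$) is not actually needed here, since the outer indices are already restricted to $\mathcal{P}$ and $\mathcal{N}$; it only matters if one wants to extend the double sum over all $(i,j)$ pairs, in which case the zero convention makes that extension harmless.

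The main (mild) obstacle is not a computational one but a modelling one: one must confirm that in each instantiation of this framework (e.g., AP Loss in Equation (4), and later aLRP), the quantity that plays the role of $p(j|i)$ is genuinely a probability mass function over $\mathcal{N}$. For the statement of Theorem 1 as given, this is baked into Definition 1, so the proof itself reduces to the two-line calculation above; I would present it in exactly that form in the appendix.
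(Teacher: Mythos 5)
Your proof is correct and is essentially the paper's own argument run in reverse: the paper inserts the factor $\sum_{j\in\mathcal{N}}p(j|i)=1$ into $\frac{1}{Z}\sum_{i\in\mathcal{P}}\ell(i)$ and reorganizes, while you collapse the inner sum of the double summation using the same normalization of $p(\cdot|i)$. The approach and the single key ingredient are identical, so no further comparison is needed.
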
{}
%
Eq. \eqref{eq:GeneralPrimaryTermDefinition} makes it easier to define primary terms and adds more flexibility on the error distribution: e.g., AP Loss takes $p(j|i) = H(x_{ij})/N_{FP}(i)$, which distributes error uniformly (since it is reduced to $1/N_{FP}(i)$) over $j \in \mathcal{N}$ with $s_j \geq s_i$; though, a skewed $p(j|i)$  can be used to promote harder examples (i.e. larger $x_{ij}$). Here, $N_{FP}(i) = \sum_{j \in \mathcal{N}} H(x_{ij})$ is the number of false positives for $i \in \mathcal{P}$.

Now we can identify the gradients of this generalized definition following Chen et al. (Sec. \ref{subsec:APLoss}): The error-driven update in $x_{ij}$ that would minimize $\mathcal{L}$ is $\Delta x_{ij} = {L_{ij}}^* - L_{ij}$, where ${L_{ij}}^*$ denotes ``the primary term when $i$ is ranked properly''. Note that ${L_{ij}}^*$, which is set to zero in AP Loss, needs to be carefully defined (see Appendix G for a bad example). With $\Delta x_{ij}$ defined, the gradients can be derived  similar to Eq. \eqref{eq:APGradients}. The steps for obtaining the gradients of $\mathcal{L}$ are summarized in Algorithm \ref{alg:Errordriven}.

\begin{algorithm}
\caption{Obtaining the gradients of a ranking-based function with error-driven update. \label{alg:Errordriven}}
\begin{flushleft}
\textbf{Input:} A ranking-based function $\mathcal{L} = (\ell(i), Z)$, and a probability mass function $p(j|i)$ \\ 
 \textbf{Output:} The gradient of $\mathcal{L}$ with respect to model output $\mathbf{s}$
 \end{flushleft}
\begin{algorithmic}[1]
\State $\forall i,j$ find primary term: $L_{ij} = \ell (i) p(j|i)$ if $i \in \mathcal{P}, j \in \mathcal{N}$; otherwise $L_{ij}=0$ (c.f. Eq. \eqref{eq:GeneralPrimaryTermDefinition}).
\State $\forall i,j$ find target primary term: ${L_{ij}}^*= \ell(i)^* p(j|i)$ ($\ell(i)^*$: the error on $i$ when $i$ is ranked properly.)
\State $\forall i,j$ find error-driven update: $\Delta x_{ij} = {L_{ij}}^* - L_{ij}=\left(\ell(i)^* - \ell(i)\right) p(j|i)$.
\State  \textbf{return} $\frac{1}{Z} ( \sum \limits_{j} \Delta x_{ij} - \sum \limits_{j} \Delta x_{ji} )$ for each $s_i \in \mathbf{s}$ (c.f. Eq. \eqref{eq:APGradients}).
\end{algorithmic}
\end{algorithm}

This optimization provides balanced training for  ranking-based losses conforming to Theorem \ref{theorem:PrimaryTerms}:
\begin{theorem} 
\label{theorem:BalancedTraining}
Training is balanced between positive and negative examples at each iteration; i.e. the summed gradient magnitudes of positives and negatives are equal (see Appendix C for the proof):
\begin{align}
\label{eq:Theorem1}
\sum \limits_{i \in \mathcal{P}} \abs{\frac{\partial \mathcal{L}}{\partial s_i}} = \sum \limits_{i \in \mathcal{N}} \abs{\frac{\partial \mathcal{L}}{\partial s_i}}.
\end{align}
\end{theorem}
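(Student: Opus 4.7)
The plan is to start from the error-driven gradient formula of Eq.~\eqref{eq:APGradients} (which carries over verbatim once the primary terms $L_{ij}$ are identified via Theorem~\ref{theorem:PrimaryTerms}), then exploit the fact that $L_{ij}$ is supported only on $\mathcal{P}\times\mathcal{N}$ in order to collapse each of the two sums in $\sum_j \Delta x_{ij} - \sum_j \Delta x_{ji}$ to a single sum, and finally argue a sign argument that lets the absolute value move inside the sum.

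Concretely, I would first note that by the generalized definition in Eq.~\eqref{eq:GeneralPrimaryTermDefinition}, $L_{ij}=0$ whenever $i\notin\mathcal{P}$ or $j\notin\mathcal{N}$, and by the natural choice ``target primary term is zero when $i$ is ranked properly'' we also have $L_{ij}^{*}=0$ on $\mathcal{P}\times\mathcal{N}$. Hence $\Delta x_{ij}=L_{ij}^{*}-L_{ij}=-L_{ij}$ is supported on $\mathcal{P}\times\mathcal{N}$ and, crucially, is non-positive there since $\ell(i)\ge 0$ and $p(j\mid i)\ge 0$. Applying Eq.~\eqref{eq:APGradients}, for any $i\in\mathcal{P}$ the term $\sum_{j}\Delta x_{ji}$ vanishes (it would force $j\in\mathcal{P}$ and $i\in\mathcal{N}$), so $\partial \mathcal{L}/\partial s_i=\sum_{j\in\mathcal{N}}\Delta x_{ij}$; symmetrically, for $i\in\mathcal{N}$ the first sum vanishes and $\partial \mathcal{L}/\partial s_i=-\sum_{j\in\mathcal{P}}\Delta x_{ji}$.

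Because $\Delta x_{ij}\le 0$ uniformly, the triangle inequality is an equality and the absolute value commutes with the sum:
\begin{align*}
\sum_{i\in\mathcal{P}}\left|\frac{\partial\mathcal{L}}{\partial s_i}\right|
=\sum_{i\in\mathcal{P}}\sum_{j\in\mathcal{N}}|\Delta x_{ij}|
=\sum_{i\in\mathcal{P}}\sum_{j\in\mathcal{N}} L_{ij},
\end{align*}
and, by Fubini plus the analogous reduction on the negative side,
\begin{align*}
\sum_{i\in\mathcal{N}}\left|\frac{\partial\mathcal{L}}{\partial s_i}\right|
=\sum_{i\in\mathcal{N}}\sum_{j\in\mathcal{P}}|\Delta x_{ji}|
=\sum_{j\in\mathcal{P}}\sum_{i\in\mathcal{N}} L_{ji},
\end{align*}
which is the same double sum with the dummy indices renamed, establishing Eq.~\eqref{eq:Theorem1}. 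As a bonus, both sides equal $Z\cdot\mathcal{L}$ by Theorem~\ref{theorem:PrimaryTerms}, so the common balanced magnitude is exactly the loss itself up to the normalisation constant.

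The step I expect to be the actual obstacle is the justification of $L_{ij}^{*}=0$: Theorem~\ref{theorem:PrimaryTerms} does not fix this target uniquely, and the preamble explicitly warns that ill-chosen targets break the argument (Appendix~G). So the cleanest version of the proof probably either (i) states the non-negativity $L_{ij}^{*}\ge 0$ together with $L_{ij}^{*}\le L_{ij}$ as a regularity condition on the loss (which holds for AP and, as I would verify for aLRP in the next section, still holds there), or (ii) assumes the ``perfect-ranking target'' convention $L_{ij}^{*}=0$ used by AP Loss. Once that sign/target issue is pinned down, everything else is bookkeeping on the supports of $L_{ij}$ and $\Delta x_{ij}$.
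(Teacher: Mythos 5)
Your proposal is correct and follows essentially the same route as the paper's Appendix C proof: restrict the support of $\Delta x_{ij}$ to $\mathcal{P}\times\mathcal{N}$ so that exactly one of the two sums in Eq.~\eqref{eq:APGradients} survives for each example, use the uniform sign of $\Delta x_{ij}$ to move the absolute value inside the summation, and then swap/rename indices to identify the two double sums. On the one step you flagged as the obstacle, the paper takes your option (i), not (ii): it does \emph{not} assume $L_{ij}^{*}=0$ (which is false for aLRP, whose target retains the localisation error $\mathcal{E}_{loc}(i)/\mathrm{rank}(i)\cdot H(x_{ij})/N_{FP}(i)$), but instead uses the two facts that $L_{ij}^{*}$, like $L_{ij}$, vanishes off $\mathcal{P}\times\mathcal{N}$, and that ``a primary term cannot be less than its target,'' i.e.\ $L_{ij}^{*}\le L_{ij}$, which gives $\Delta x_{ij}\le 0$ and hence the sign uniformity you need. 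The only casualty of the nonzero target is your closing ``bonus'' identity: the common balanced magnitude is $\sum_{i\in\mathcal{P}}\sum_{j\in\mathcal{N}}(L_{ij}-L_{ij}^{*})$, which equals $Z\mathcal{L}$ (up to the normalisation convention) only when the targets vanish, as for AP Loss; for aLRP the paper claims this only as an approximation $\approx\mathcal{L}^{\mathrm{aLRP}}$ in Appendix F. None of this affects the validity of the balance statement itself.
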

\textcolor{blue}{\rule{\textwidth}{0.1mm}}\\
\textbf{Deriving AP Loss.} Let us derive AP Loss as a case example for this generalized framework: $\ell^{\mathrm{AP}}(i)$ is simply $1-\mathrm{precision(i)}=N_{FP}(i) /\mathrm{rank}(i)$, and $Z=|\mathcal{P}|$. $p(j|i)$ is assumed to be uniform, i.e. $p(j|i)=H(x_{ij})/N_{FP}(i)$. These give us $L_{ij}^{\mathrm{AP}}=\frac{N_{FP}(i) }{\mathrm{rank}(i)} \frac{H(x_{ij})}{N_{FP}(i)}= \frac{H(x_{ij})}{\mathrm{rank}(i)}$ (c.f. $L_{ij}^{\mathrm{AP}}$ in Eq. \eqref{eq:APPrimaryTerms}). Then, since ${L_{ij}^{\mathrm{AP}}}^*=0$, $\Delta x_{ij}=0-L^{\mathrm{AP}}_{ij}=-L^{\mathrm{AP}}_{ij}$ in Eq. \eqref{eq:APGradients}.

\textbf{Deriving Normalized Discounted Cumulative Gain Loss \cite{OptimizingUpperBound}}: See Appendix D.\\
\textcolor{blue}{\rule{\textwidth}{0.1mm}}


\comment{
\section{[OLD] Error-Driven Optimization of the Ranking-Based Functions}
\label{sec:generalization}
While the formulation in Section \ref{subsec:APLoss} reaches state-of-the-art results with a ranking-based loss for OD problem; it is not easily applicable to a larger set of ranking-based functions. After defining the set of targeted ranking-based functions, this section first introduces a formal definition for primary terms and then derives a general error-driven update rule to optimize them.

Given a set of examples, $\mathcal{D}$, the ranking task aims to promote a subset $\mathcal{A} \subset \mathcal{D}$, while demoting the remaining examples, $\mathcal{B} = \mathcal{D} - \mathcal{A}$. Here, we consider the ranking functions, $ \mathcal{R}(\mathcal{D})$, that can be decomposed on either $i \in \mathcal{A}$ or $i \in \mathcal{B}$.
that is,
    $\mathcal{R}(\mathcal{D}) = \frac{1}{Z} \sum \limits_{i \in \mathcal{C}}  \mathcal{E}^\mathcal{R} (i)$.
Here, $\mathcal{E}^\mathcal{R} (i)$ is the local error of the $i \in \mathcal{C}$ with $\mathcal{C} = \mathcal{A}$ or $\mathcal{C}= \mathcal{B}$ and $Z$ is a normalization constant. WLOG setting $\mathcal{C} = \mathcal{A}$, we follow a two-step framework to generalize error-driven update rule for $\mathcal{R}(\mathcal{D})$:


\textbf{Step 1: Identifying Primary Terms:} Here, we provide a more intuitive and general definition of the primary terms of $\mathcal{R}(\mathcal{D})$:

\begin{definition}
For $\mathcal{R}(\mathcal{D})$, the primary term between $i \in \mathcal{A}$ and $j \in \mathcal{B}$ (else the primary term is set to $0$), $\Psi^\mathcal{R}_{ij}$, is the reciprocal-error originating from $i$ and distributed over $j$ via the probability density function $p^ \mathcal{R} (i, j)$ (i.e. $\forall i \sum \limits_{j \in \mathcal{B}} p^\mathcal{R}(i, j) = 1$). Formally, 
\begin{align}
    \label{eq:GeneralPrimaryTermDefinition}
    \Psi^\mathcal{R}_{ij} = \mathcal{E}^\mathcal{R} (i) \times p^\mathcal{R} (i, j).
\end{align}
\end{definition}
Then, as desired, we can compute  the value of $\mathcal{R}(\mathcal{D})$ by summing over this reciprocal errors:
\begin{theorem}
\label{theorem:PrimaryTerms}
$R(\mathcal{D})=  
\frac{1}{Z} \sum \limits_{i \in \mathcal{A} }\sum \limits_{j \in \mathcal{B} }  \Psi^ \mathcal{R}_{ij}$.
\end{theorem}{}
\begin{proof}
\begin{align}
    \mathcal{R}(\mathcal{D}) =\frac{1}{Z} \sum \limits_{i \in \mathcal{A} }  \mathcal{E}^\mathcal{R} (i)  
    =\frac{1}{Z} \sum \limits_{i \in \mathcal{A}}  \mathcal{E}^\mathcal{R} (i) \left( \sum \limits_{j \in  \mathcal{B} }p^\mathcal{R} (i, j)    \right)
    = \frac{1}{Z} \sum \limits_{i \in \mathcal{A} }\sum \limits_{j \in \mathcal{B} }  \Psi^ \mathcal{R}_{ij}
\end{align}
\end{proof}

Defining primary terms in this fashion provides following advantages over the top-down approach: (1) $\mathcal{E}^\mathcal{R} (i)$ is trivial (e.g. $\mathcal{E}^{AP} (i) = (1-\mathrm{precision}(i))$). (2) Determining a $p^\mathcal{R}(i, j)$ becomes more intuitive and adds more flexibility to the error distribution. To illustrate, while $p^{\mathcal{R}}(i, j) = \frac{H(x_{ij})}{\sum \limits_{k \in {\mathcal{N}}}H(x_{ik})}$ distributes the error uniformly over $j \in \mathcal{B}$ with $s_j \geq s_i$, a more left-skewed one will promote harder examples (i.e. larger $x_{ij}$) more. 

\textbf{Step 2: Identifying Gradients:} Here, we define the ``error'' of a difference transformation to use it for the update following perceptron learning algorithm. In particular, $x_{ij}$ is updated by its deviation from ``the primary term when $i$ is ranked properly'' denoted by ${\Psi^\mathcal{R}_{ij}}^*$ , as follows:
%
\begin{align}
    \Delta x_{ij} = {\Psi^R_{ij}}^* - \Psi^R_{ij}
\end{align}
Then, the gradients of the $R$ wrt the input of the difference transform is determined as: 
\begin{align}
    \label{eq:GeneralizedGradients}
    \frac{\partial R(\mathcal{D})}{\partial s_i} 
    = -\sum \limits_{j,k} \Delta x_{jk} \frac{\partial x_{jk}}{\partial s_i} 
    = \sum \limits_{j} \Delta x_{ij} - \sum \limits_{j} \Delta x_{ji}  
    = \sum \limits_{j \in \mathcal{A}} \left( {\Psi^R_{ji} - \Psi^R_{ji}}^*  \right) - \sum \limits_{j \in \mathcal{B}}  \left( { \Psi^R_{ij} - \Psi^R_{ij}}^*\right)
\end{align}

\begin{theorem} 
\label{theorem:BalancedTraining}
The magnitudes of the total gradients of the examples to be promoted (i.e. $\mathcal{A}$) and demoted (i.e. $\mathcal{B}$) are equal at every training iteration, which implies balanced training. Formally,
\begin{align}
\label{eq:Theorem1}
\abs{\sum \limits_{i \in \mathcal{A}} \frac{\partial \mathcal{R}(\mathcal{D})}{\partial s_i}} = \abs{\sum \limits_{i \in \mathcal{B}} \frac{\partial \mathcal{R}(\mathcal{D})}{\partial s_i}}.    
\end{align}
\end{theorem}
\begin{proof}
$\frac{\partial R(\mathcal{D})}{\partial s_i}=- \sum \limits_{j \in \mathcal{B}}  \left( { \Psi^R_{ij} - \Psi^R_{ij}}^*\right)$ for $i \in \mathcal{A}$ since $\Psi^R_{ij}>0$ only if $j \in \mathcal{B}$. Similarly, $\frac{\partial R(\mathcal{D})}{\partial s_i}$ for $j \in \mathcal{B}$ is simply the summation of the individual contributions from each $i \in \mathcal{A}$ following $p(i, j)^\mathcal{R}$. Since $\forall i \sum \limits_{j \in \mathcal{B}} p^\mathcal{R}(i, j) = 1$, theorem holds. See the Supp.Mat. for the formal proof.
\end{proof}
\begin{observation}
\label{observation:GradMatrix}
Equations \ref{eq:GeneralizedGradients} and Theorem \ref{theorem:BalancedTraining} together allow us to model the gradients from a different perspective: Let us define a gradient matrix by $G ^ \mathcal{R}$ for $\mathcal{R}(\mathcal{D})$ such that $G_{ij}$ represents an element at $i$th row and $j$th column. Then, assuming $i \in \mathcal{A}$ and $j \in \mathcal{B}$ correspond to the rows and columns, one can define the elements of the gradient matrix for the ranking function $\mathcal{R}(\mathcal{D})$ as follows:
\begin{align}
    G_{ij} ^ \mathcal{R} =\Psi^\mathcal{R}_{ij} - {\Psi^\mathcal{R}_{ij}}^* 
\end{align}
Then simply, summing over the rows and columns yields the magnitude of the gradients for each $i \in \mathcal{A}$ and for each $j \in \mathcal{B}$ respectively. For exact gradients with their directions, it is sufficient to multiply summed rows by $-1$ for $i \in \mathcal{A}$. Hence, using this observation, we reduce the gradient formulation of a ranking-based function $\mathcal{R}$ to define $G_{ij} ^ \mathcal{R}$.
\end{observation}
}
\section{Average Localisation-Recall-Precision (aLRP) Loss}
\label{sec:aLRPLoss}

\begin{figure}[t]
    \centerline{
        \includegraphics[width=0.99\textwidth]{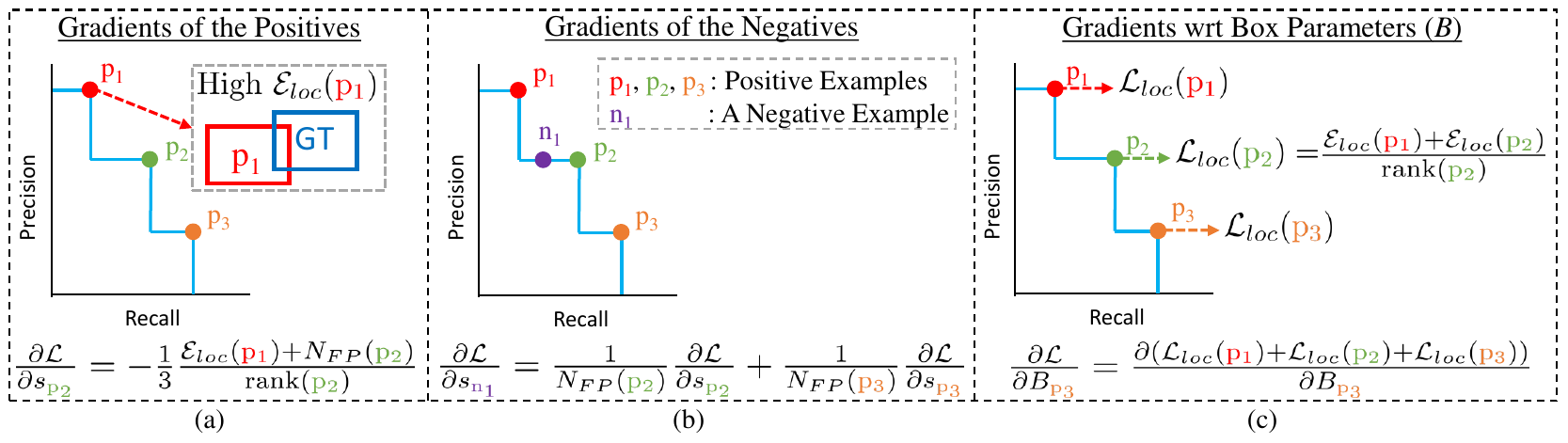}
    }
    \caption{\textbf{aLRP Loss assigns gradients to each branch based on the outputs of both branches.} Examples on the PR curve are in sorted order wrt scores ($s$). $\mathcal{L}$ refers to $\mathcal{L}^\mathrm{aLRP}$. \textbf{(a)} A $\mathrm{p}_i$'s gradient wrt its score considers (i) localisation errors of  examples with larger $s$ (e.g. high $\mathcal{E}_{loc}(\mathrm{p_1})$ increases the gradient of $s_{\mathrm{p_2}}$ to suppress  $\mathrm{p_1}$), (ii) number of negatives with larger $s$. \textbf{(b)} Gradients wrt $s$ of the negatives: The gradient of a $\mathrm{p}_i$ is uniformly distributed over the negatives with larger $s$. Summed  contributions from all positives determine the gradient of a negative. \textbf{(c)} Gradients of the box parameters: While $\mathrm{p_1}$ (with highest $s$) is included in total localisation error on each positive, i.e. $\mathcal{L}_{loc}(i)=\frac{1}{\mathrm{rank}(i)} (  \mathcal{E}_{loc}(i) +\sum \limits_{k \in \mathcal{P}, k \neq i}  \mathcal{E}_{loc}(k) H(x_{ik}))$,  $\mathrm{p_3}$ is included once with the largest $\mathrm{rank}(\mathrm{p}_i)$.
    \label{fig:ToyExample}
} 
\end{figure}

Similar to the relation between precision and AP Loss, aLRP Loss is defined as the average of LRP values ($\ell^{\mathrm{LRP}}(i)$) of positive examples:
\begin{align}
\label{eq:aLRPLoss}
    \mathcal{L}^\mathrm{aLRP}:=\frac{1}{|\mathcal{P}|}\sum \limits_{i \in \mathcal{P}} \ell^{\mathrm{LRP}}(i) 
    .
\end{align}
For LRP, we assume that anchors are dense enough to cover all ground-truths, i.e. $N_{FN}=0$. Also, since a detection is enforced to follow the label of its anchor during training, TP and FP sets are replaced by the thresholded subsets of $\mathcal{P}$ and $\mathcal{N}$,  respectively. This is applied by $H(\cdot)$, and $\mathrm{rank}(i)=N_{TP}+N_{FP}$ from Eq. \eqref{eq:LRPdefcompact}. Then, following the definitions in Sec. \ref{subsec:APLoss}, $\ell^{\mathrm{LRP}}(i)$ is:
\begin{align}
\label{eq:LRPReformulation}
    \ell^{\mathrm{LRP}}(i) 
    = \frac{1}{\mathrm{rank(i)}}
    \left(N_{FP}(i) + \mathcal{E}_{loc}(i) +  \sum \limits_{k \in \mathcal{P}, k \neq i}  \mathcal{E}_{loc}(k) H(x_{ik}) \right).
\end{align}
Note that Eq. \eqref{eq:LRPReformulation}  allows using robust forms of IoU-based  losses (e.g. generalized IoU (GIoU) \cite{GIoULoss}) only by replacing IoU Loss (i.e. $1- \mathrm{IoU}(i)$) in $\mathcal{E}_{loc}(i)$ and normalizing the range to $[0,1]$.

In order to provide more insight and facilitate gradient derivation, we split Eq. \eqref{eq:aLRPLoss} into two as localisation and classification components such that $\mathcal{L}^\mathrm{aLRP}=\mathcal{L}^\mathrm{aLRP}_{cls}+\mathcal{L}^\mathrm{aLRP}_{loc}$, where
\begin{align}
    \label{eq:aLRPComponents}
    \mathcal{L}^\mathrm{aLRP}_{cls} &= \frac{1}{|\mathcal{P}|}\sum \limits_{i \in \mathcal{P}} \frac{N_{FP}(i)}{\mathrm{rank}(i)} \text{, and } \mathcal{L}^\mathrm{aLRP}_{loc} = \frac{1}{|\mathcal{P}|}\sum \limits_{i \in \mathcal{P}} \frac{1}{\mathrm{rank}(i)} \left(  \mathcal{E}_{loc}(i) +\sum \limits_{k \in \mathcal{P}, k \neq i}  \mathcal{E}_{loc}(k) H(x_{ik}) \right).
\end{align}

\subsection{Optimization of the aLRP Loss}
$\mathcal{L}^\mathrm{aLRP}$ is differentiable wrt the estimated box parameters, $B$, since $\mathcal{E}_{loc}$ is differentiable \cite{GIoULoss,UnitBox} (i.e. the derivatives of $\mathcal{L}^\mathrm{aLRP}_{cls}$ and $\mathrm{rank}(\cdot)$ wrt $B$ are $0$). However, $\mathcal{L}^\mathrm{aLRP}_{cls}$ and $\mathcal{L}^\mathrm{aLRP}_{loc}$ are not differentiable wrt the classification scores, and therefore, we need the generalized framework from Sec. \ref{sec:generalization}. 

Using the same error distribution from AP Loss, the primary terms of aLRP Loss can be defined as $L^\mathrm{aLRP}_{ij} = \ell^\mathrm{LRP} (i) p(j|i)$. As for the target primary terms, we use the following desired LRP Error:

%
\begin{align}
    {\ell^\mathrm{LRP}(i)}^* = \frac{1}{\mathrm{rank(i)}}
    \left(\textcolor{red}{\cancelto{0}{N_{FP}(i)}} + \mathcal{E}_{loc}(i) +  \textcolor{red}{\cancelto{0}{\sum \limits_{k \in \mathcal{P}, k \neq i}  \mathcal{E}_{loc}(k) H(x_{ik})}}\right)=\frac{ \mathcal{E}_{loc}(i)}{\mathrm{rank}(i)},
\end{align}
yielding a target primary term, ${L^\mathrm{aLRP}_{ij}}^*={\ell^\mathrm{LRP}(i)}^* p(j|i)$, which includes localisation error and can be non-zero when $s_i<s_j$, unlike AP Loss. Then, the resulting error-driven update for $x_{ij}$ is (line 3 of Algorithm \ref{alg:Errordriven}):
\begin{align}
      \Delta x_{ij} = \left({\ell^\mathrm{LRP}(i)}^*- \ell^\mathrm{LRP}(i) \right) p(j|i) = -\frac{ 1}{\mathrm{rank}(i)} \left( N_{FP}(i) + \sum \limits_{k \in \mathcal{P}, k \neq i}  \mathcal{E}_{loc}(k) H(x_{ik}) \right)  \frac{H(x_{ij})}{N_{FP}(i)}.
\end{align}
Finally, ${\partial \mathcal{L}^\mathrm{aLRP}}/{\partial s_i}$ can be obtained with Eq. \eqref{eq:APGradients}. Our algorithm to compute the loss and gradients is presented in Appendix E in detail and has the same time\&space complexity with AP Loss. 

\begin{wrapfigure}{r}{0.49\textwidth}
  \begin{center}
    \includegraphics[width=1\textwidth]{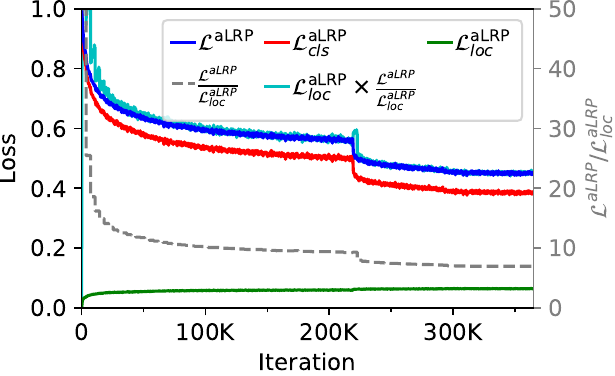}
  \end{center}
    \caption{aLRP Loss and its components. The localisation component is self-balanced.\label{fig:LossPlot}}
\end{wrapfigure}

\textbf{Interpretation of the Components:} A distinctive property of aLRP Loss is that classification  and  localisation errors are handled in a unified manner: i.e. with aLRP, both  classification and localisation branches use the entire output of the detector, instead of working in their separate domains as conventionally done. As shown in Fig. \ref{fig:ToyExample}(a,b),  $\mathcal{L}^\mathrm{aLRP}_{cls}$ takes into account localisation errors of detections with larger scores ($s$) and promotes the detections with larger IoUs to have higher $s$, or suppresses the detections with high-$s$\&low-IoU.  Similarly, $\mathcal{L}^\mathrm{aLRP}_{loc}$ inherently weighs each positive based on its classification rank (see Appendix F for the weights): the contribution of a positive increases if it has a larger $s$. To illustrate, in Fig. \ref{fig:ToyExample}(c), while $\mathcal{E}_{loc}(p_1)$ (i.e. with largest $s$) contributes to each $\mathcal{L}_{loc}(i)$; $\mathcal{E}_{loc}(p_3)$ (i.e. with the smallest $s$) only contributes once with a very low weight due to its rank normalizing $\mathcal{L}_{loc}(\mathrm{p}_3)$. Hence,  the localisation branch effectively focuses on detections ranked higher wrt $s$.  

\subsection{A Self-Balancing Extension for the Localisation Task}\label{sect:self_balance}
LRP metric yields localisation error only if a detection is classified correctly (Sec. \ref{subsec:LRP}). Hence,  when the classification performance is poor (e.g. especially at the beginning of training), the aLRP Loss is dominated by the classification error ($N_{FP}(i)/\mathrm{rank}(i) \approx 1$ and $\ell ^{\mathrm{LRP}} (i) \in [0,1]$ in Eq. \eqref{eq:LRPReformulation}). As a result,  the localisation head is hardly trained at the beginning (Fig. \ref{fig:LossPlot}). Moreover, Fig. \ref{fig:LossPlot} also shows that $\mathcal{L}^{\mathrm{aLRP}}_{cls}/\mathcal{L}^{\mathrm{aLRP}}_{loc}$ varies significantly throughout training. To alleviate this, we propose a simple and dynamic \textit{self-balancing} (SB) strategy using the gradient magnitudes: note that $ \sum_{i \in \mathcal{P}} \abs{{\partial \mathcal{L}}^{\mathrm{aLRP}}/{\partial s_i}} = \sum_{i \in \mathcal{N}} \abs{{\partial \mathcal{L}}^{\mathrm{aLRP}}/{\partial s_i}} \approx \mathcal{L}^{\mathrm{aLRP}}$ (see Theorem \ref{theorem:BalancedTraining} and Appendix F). Then, assuming that the gradients wrt scores and boxes are proportional to  their contributions to the aLRP Loss, we multiply ${\partial \mathcal{L}}^{\mathrm{aLRP}}/{\partial B}$ by the average $\mathcal{L}^{\mathrm{aLRP}}/\mathcal{L}^{\mathrm{aLRP}}_{loc}$ of the previous epoch.

\comment{
\section{[OLD] Average Localization-Recall-Precision (aLRP) Loss}
\label{sec:aLRPLoss}

\begin{figure}[t]
    \centerline{
        \includegraphics[width=0.98\textwidth]{figures/ToyExample.pdf}
    }
    \caption{\textbf{An example presenting that aLRP Loss assigns gradients to each branch by considering the outputs of both branches.} The examples on the RP curve are in sorted order wrt their scores. \textbf{(a)} Gradients of the foreground examples: For any $x$, the gradients of the classifier consider (i) the localization errors of the detections with larger scores than itself and (ii) number of background objects with larger scores. For example, the large low IoU of $\textcolor{red}{x}$ increases the gradient of $s_{\textcolor{green}{x}}$ to suppress high-precision $\textcolor{red}{x}$. \textbf{(b)} Gradients of the foreground examples: The gradient of a foreground detection is uniformly distributed over the background detections with larger scores. Summing over the contributions from all foreground detections yield the gradient of a background detection. \textbf{(c)} Gradients of the box parameters, $B$: While $\textcolor{red}{x}$, with highest score, is included by each local regression error, $\mathrm{LRP}_{loc}(x)$, $\textcolor{orange}{x}$ contributes only once with the largest $rank(\textcolor{orange}{x})$. (Best viewed in color.) 
    \label{fig:ToyExample}
} 
\end{figure}

This section defines aLRP Loss, derives its gradients and provides details for our training scheme.

\subsection{Definition}
While incorporating LRP into our loss function, we assume that: (1) Since the object hypotheses (i.e. anchors) are dense, $N_{FN}=0$. (2) Since the anchors are split as positives and negatives, $\mathcal{TP}$ and $\mathcal{FP}$ in Equation \ref{eq:LRPdefcompact} are replaced by $\mathcal{P}$ and $\mathcal{N}$ respectively. With these assumptions and sticking to the notation introduced in Section \ref{subsec:APLoss}, $\mathrm{LRP}(s_i)$ is:
\begin{align}
\label{eq:LRPReformulation}
    \mathrm{LRP}(s_i) 
    = \frac{1}{\mathrm{rank(i)}}
    \left( \sum \limits_{k \in \mathcal{P}, k \neq i}  \mathcal{E}_{loc}(k) H(x_{ik})+ \mathcal{E}_{loc}(i) + N_{FP}(i) \right)
\end{align}
such that $\mathcal{E}_{loc}(i) = \frac{1-IoU(i)}{1-\tau}$ and $N_{FP}(i)=\sum \limits_{k \in \mathcal{N}} H(x_{ik})$. Then, we define aLRP Loss, $\mathcal{L}^{aLRP}$, as the average of the $\mathrm{LRP}(s_i)$ errors over the $i \in \mathcal{P}$. Formally,

\begin{align}
\label{eq:aLRPLoss}
    \mathcal{L}^{aLRP}:=\frac{1}{|\mathcal{P}|}\sum \limits_{i \in \mathcal{P}} \mathrm{LRP}(s_i) 
    = \frac{1}{|\mathcal{P}|}\sum \limits_{i \in \mathcal{P}} \frac{1}{\mathrm{rank(i)}}
    \left( \sum \limits_{k \in \mathcal{P}, k \neq i}  \mathcal{E}_{loc}(k) H(x_{ik})+ \mathcal{E}_{loc}(i) + N_{FP}(i) \right).
\end{align}

Noting that $\tau$ is the positive labelling threshold of the sampler (i.e. $0.50$), Eq. \ref{eq:aLRPLoss} does not contain any hyperparameter.

\subsection{Optimization and Interaction of the Branches}
In order to provide more insight and facilitate gradient derivation, we split Equation \ref{eq:aLRPLoss} into two as regression and classification components such that $\mathcal{L}^{aLRP}=\mathcal{L}^{aLRP}_{cls}+\mathcal{L}^{aLRP}_{loc}$, where
\begin{align}
    \label{eq:aLRPComponents}
    \mathcal{L}^{aLRP}_{cls} &= \frac{1}{|\mathcal{P}|}\sum \limits_{i \in \mathcal{P}} \frac{N_{FP}(i)}{\mathrm{rank}(i)} \text{, and }\;\; \mathcal{L}^{aLRP}_{loc} = \frac{1}{|\mathcal{P}|}\sum \limits_{i \in \mathcal{P}} \frac{\sum \limits_{k \in \mathcal{P}, k \neq i}  \mathcal{E}_{loc}(k) H(x_{ik})+ \mathcal{E}_{loc}(i) }{\mathrm{rank}(i)}.
\end{align}

To start with the gradients of the classifier, we follow our formulation from Section \ref{sec:generalization}:

\textbf{Step 1: Identifying Primary Terms:} The local error of a positive example is $\mathrm{LRP}(i)$, hence $\mathcal{E}^{aLRP}(i) = \mathrm{LRP}(i)$ and adopt a uniform distribution in Eq. \ref{eq:GeneralPrimaryTermDefinition} to obtain the primary terms of aLRP:
\begin{align}
    \label{eq:aLRPGeneralPrimaryTermDefinition}
    \Psi^{aLRP}_{ij} = \mathrm{LRP}(i)  \times \frac{H(x_{ij})}{N_{FP}(i)} = \frac{ \sum \limits_{k \in \mathcal{P}, k \neq i}  \mathcal{E}_{loc}(k) H(x_{ik})+ \mathcal{E}_{loc}(i) + N_{FP}(i) }{\mathrm{rank}(i)} \times \frac{H(x_{ij})}{N_{FP}(i)}.
\end{align}

\textbf{Step 2: Identifying Gradients:} Following Observation \ref{observation:GradMatrix}, here we define $G_{ij} ^ {aLRP}$. Note that when $i$th positive is ranked accordingly, there is still a localization loss, hence, ${\Psi^{aLRP}_{ij}}^*$ is:
\begin{align}
    {\Psi^{aLRP}_{ij}}^* = \frac{ \mathcal{E}_{loc}(i)}{rank(i)} \times \frac{H(x_{ij})}{N_{FP}(i)}
\end{align}
Having determined ${\Psi^{aLRP}_{ij}}^*$, $G_{ij} ^{aLRP}$ is simply:
\begin{align}
    \label{eq:aLRPClassificationGrads}
    G_{ij} ^{aLRP} = \Psi^{aLRP}_{ij} - {\Psi^{aLRP}_{ij}}^* 
    = \frac{ \sum \limits_{k \in \mathcal{P}, k \neq i}  \mathcal{E}_{loc}(k) H(x_{ik}) + N_{FP}(i) }{\mathrm{rank}(i)} \times \frac{H(x_{ij})}{N_{FP}(i)}.
\end{align}
We note that while $\mathcal{L}^{aLRP}_{cls}=\mathcal{L}^{AP}$ in quantity their gradients are different since $\mathcal{L}^{aLRP}_{loc}$ requires ranking, hence we consider $\mathcal{L}^{aLRP}$ in total to derive the gradients of the aLRP Loss with respect to the scores. Fig. \ref{fig:ToyExample}(a,b) provide more insight on the gradient signals obtained using $G_{ij} ^{aLRP}$ values. By taking into account the localization errors of the detections with larger scores, the classifier aims to promote the detections with larger IoUs to have higher scores, or suppress the detections with high-score and low-IoU. Hence, we conclude that classification branch considers the entire output of the detector appropriately.

Secondly, $\mathcal{L}^{aLRP}$ is differentiable wrt to the regressor output. To be more specific, $\mathcal{L}^{aLRP}_{cls}$ does not include any regressor output, hence the gradients can be determined from $\mathcal{L}^{aLRP}_{loc}$ in which $rank(i)$ is constant and $\mathcal{E}_{loc}(i)$ is differentiable \cite{UnitBox,GIoULoss,DIoULoss}. Note that $\mathcal{L}^{aLRP}_{loc}$ inherently weighs each positive considering its rank. In other words, the contribution of the positive anchor increases if it has a larger score. To illustrate, in Fig. \ref{fig:ToyExample}(c), while $\mathcal{E}_{loc}(x)$ of the example with the largest score contributes at all levels to the individual $\mathcal{L}^{aLRP}_{loc}$ terms, the one with the smallest score only contributes once with a very low weight due to its rank normalizing the local regression error. Hence, in such a way, the regressor focuses on the detections ranked higher with respect to the the scores.
 


\subsection{Training Details}
\label{subsec:TrainingDetails}
Here, we note two critical ideas to improve the performance of our design. Firstly, faster converging forms of IoU-based regression losses (e.g. generalized IoU (GIoU) \cite{GIoULoss} and distance IoU (DIoU) \cite{DIoULoss}) can easily be incorporated into our design by replacing IoU Loss (i.e. $1- \mathrm{IoU}(i)$) in $\mathcal{E}_{loc}(i)$ and normalizing the range to $[0,1]$. Secondly, in the early epochs of the training the classifier is not able to classify the objects and the range of aLRP (i.e. $[0,1]$) is occupied by the classifier. In otherwords, since there is very few true positive with large $\mathrm{rank}(i)$ in $\mathcal{L}^{aLRP}_{loc}$ (Eq. \ref{eq:aLRPComponents}), the regression loss is low as well. To alleviate that, similar to Chen et al.\cite{GradNorm}, we add a hyperparameter-free adaptive self-balancing (ASB) by manipulating directly the gradients using the information from the last epoch: We note that normalizing the total magnitude of the gradients wrt background and foreground scores (Theorem \ref{theorem:BalancedTraining}) by $|\mathcal{P}|$  makes them to approximately $\mathcal{L}^{aLRP}$ (see. Eq. \ref{eq:aLRPGeneralPrimaryTermDefinition}). Assuming that the gradients wrt scores and boxes are proportional to the their contributions to the $\mathrm{aLRP}$ loss value, we multiply the gradients of $\mathrm{aLRP}$ wrt to the boxes by the average $ \frac{\mathcal{L}^{aLRP}}{\mathcal{L}^{aLRP}_{loc}}$ in the last epoch, which makes the normalized gradients with respect to foreground scores, background scores and boxes (three competing tasks) are approximately equal during training. Finally, similar to Chen et al. \cite{APLoss}, we replace the step function by piecewise step function (see Supp.Mat. or \cite{APLoss} for the details). Overall algorithm is presented in the Supp.Mat. and has the same time (i.e. $O(|\mathcal{P}| \times|\mathcal{N}|)$) and space (i.e. $O(|\mathcal{N}|)$) complexity with the AP Loss.

}
\section{Experiments}
\label{sec:Experiments}

\textbf{Dataset:} We train all our models on  COCO \textit{trainval35K} set \cite{COCO} (115K images), test on \textit{minival} set (5k images) and compare with the state-of-the-art (SOTA) on \textit{test-dev} set (20K images). 

\textbf{Performance Measures:} COCO-style AP \cite{COCO} and when possible optimal LRP \cite{LRP} (Sec. \ref{subsec:LRP}) are used for comparison. For more insight into aLRP Loss, we use Pearson correlation coefficient ($\rho$) to measure correlation between the rankings of classification and localisation, averaged over classes.

\textbf{Implementation Details:} For training, we use $4$ v100 GPUs. The batch size is 32 for training with $512 \times 512$ images (aLRPLoss500), whereas it is 16 for  $800 \times 800$ images (aLRPLoss800). Following AP Loss, our models are trained for 100 epochs using stochastic gradient descent with a momentum factor of $0.9$. We use a learning rate of $0.008$ for aLRPLoss500 and $0.004$ for aLRPLoss800, each decreased by factor $0.1$ at epochs 60 and 80. Similar to previous work \cite{APLoss,CenterNet}, standard data augmentation methods from SSD \cite{SSD} are used. At test time, we rescale  shorter sides of images  to $500$ (aLRPLoss500) or $800$ (aLRPLoss800) pixels by ensuring that the longer side does not exceed $1.66 \times$ of the shorter side. NMS is applied to $1000$ top-scoring detections using $0.50$ as IoU threshold.

\subsection{Ablation Study}
\label{sect:ablation}
In this section, in order to provide a fair comparison, we build upon the official implementation of our baseline, AP Loss \cite{APOfficialRepo}. Keeping all design choices fixed, otherwise stated, we just replace AP \& Smooth L1 losses by aLRP Loss to optimize RetinaNet \cite{FocalLoss}. We conduct ablation analysis using aLRPLoss500 on ResNet-50 backbone (more ablation experiments are presented in the Appendix G).

\textbf{Effect of using ranking for localisation:} Table \ref{tab:minival} shows that using a ranking loss for localisation improves AP (from $35.5$ to $36.9$). For better insight, $\mathrm{AP_{90}}$ is also included in Table \ref{tab:minival}, which shows $\sim$5 points increase  despite similar $\mathrm{AP_{50}}$ values. This confirms that aLRP Loss does produce high-quality outputs for both branches, and boosts the performance for larger IoUs.

\textbf{Effect of Self-Balancing (SB):} Section \ref{sect:self_balance} and Fig. \ref{fig:LossPlot} discussed how $\mathcal{L}^\mathrm{aLRP}_{cls}$ and $\mathcal{L}^\mathrm{aLRP}_{loc}$ behave during training and introduced self-balancing to improve training of the localisation branch. Table \ref{tab:minival} shows that SB provides +1.8AP gain, similar $\mathrm{AP}_{50}$ and +8.4 points in $\mathrm{AP}_{90}$ against AP Loss. Comparing SB with constant weighting in Table \ref{tab:scalerweight}, our SB approach provides slightly better performance than constant weighting, which requires extensive tuning and end up with different $w_r$ constants for IoU and GIoU. Finally, Table \ref{tab:initialization} presents that initialization of SB (i.e. its value for the first epoch) has a negligible effect on the performance even with very large values. We use 50 for initialization.


\textbf{Using GIoU:} Table \ref{tab:minival} suggests robust IoU-based regression (GIoU) improves performance slightly. 

\textbf{Using ATSS:} Finally, we replace the standard IoU-based assignment by ATSS \cite{ATSS}, which uses less anchors and decreases training time notably for aLRP Loss: One iteration drops from 0.80s to 0.53s with ATSS (34\% more efficient with ATSS) -- this time is 0.71s and 0.28s for AP Loss and Focal Loss respectively. With ATSS, we also observe +1.3AP improvement (Table \ref{tab:minival}). See App. G for details.

Hence, we use GIoU \cite{GIoULoss} as part of aLRP Loss, and employ ATSS \cite{ATSS} when training RetinaNet. 
%

\begin{table}[]
    \centering
    \setlength{\tabcolsep}{0.3em}
    \footnotesize
    \caption{Ablation analysis on COCO \textit{minival}. For optimal LRP (oLRP), lower is better.}
    \label{tab:minival}
    \begin{tabular}{|c|c|c|c|c|c|c|c|c|c||c|} \hline
        Method& Rank-Based $\mathcal{L}_c$ & Rank-Based $\mathcal{L}_r$&SB&ATSS&$\mathrm{AP}$&$\mathrm{AP_{50}}$&$\mathrm{AP_{75}}$&$\mathrm{AP_{90}}$&$\mathrm{oLRP}$&$\rho$\\ \hline \hline
        AP Loss \cite{APLoss}&\checkmark& & & & $35.5$&$58.0$&$37.0$&$9.0$&$71.0$ &$0.45$
        \\
        \hline 
         \multirow{4}{*}{aLRP Loss} &\checkmark&\checkmark (w IoU)& & &$36.9$&$57.7$&$38.4$&$13.9$&$69.9$ &$0.49$
         \\ 
         &\checkmark&\checkmark (w IoU)&\checkmark& &$38.7$&$58.1$&$40.6$&$17.4$&$68.5$&$0.48$
         \\ 
         &\checkmark&\checkmark (w GIoU)&\checkmark& &$38.9$&$58.5$&$40.5$&$17.4$&$68.4$&$0.48$
         \\ 
         &\checkmark&\checkmark (w GIoU)&\checkmark&\checkmark&$40.2$&$60.3$&$42.3$&$18.1$&$67.3$&$0.48$ 
         \\ 

        \hline
    \end{tabular}
\end{table}

\begin{table}[]
\RawFloats
\parbox{.58\linewidth}{
    \centering
    \setlength{\tabcolsep}{0.4em}
    \footnotesize
    \caption{SB does not require tuning and slightly outperforms constant weighting for both IoU types.}
    \label{tab:scalerweight}
    \begin{tabular}{|c|c c c c c c c|c|} \hline
        $w_r$&$1$&$2$&$5$&$10$&$15$&$20$&$25$&SB \\ \hline 
        w IoU&$36.9$&$37.8$&$38.5$&$38.6$&$38.3$&$37.1$&$36.0$&$\mathbf{38.7}$ \\ \hline
        w GIoU&$36.0$&$37.0$&$37.9$&$38.7$&$38.8$&$38.7$&$38.8$&$\mathbf{38.9}$ \\ \hline
    \end{tabular}
}
\hfill
\parbox{.375\linewidth}{
    \centering
    \setlength{\tabcolsep}{0.5em}
    \footnotesize
    \caption{SB is not affected significantly by the initial weight in the first epoch ($w_r$) even for large values.}
    \label{tab:initialization}
    \begin{tabular}{|c|c c c c|} \hline
        $w_r$&$1$&$50$&$100$&$500$ \\ \hline 
        AP&$38.8$&$\mathbf{38.9}$&$38.7$&$38.5$\\ \hline
    \end{tabular}
}
\end{table}
\begin{wraptable}{r}{6.5cm}
    \setlength{\tabcolsep}{0.1em}
    \footnotesize
    \caption{Effect of correlating rankings.}
    \label{tab:correlationEffect}
    \begin{tabular}{|c|c|c|c|c|c|} \hline
        $\mathcal{L}$&$\rho$&$\mathrm{AP}$&$\mathrm{AP_{50}}$&$\mathrm{AP_{75}}$&$\mathrm{AP_{90}}$\\ \hline \hline
        aLRP Loss&$0.48$&$38.7$&$58.1$&$40.6$&$17.4$\\ 
        \hline
        Lower Bound&$-1.00$&$28.6$&$58.1$&$23.6$&$5.6$\\ 
        Upper Bound&$1.00$&$48.1$&$58.1$&$51.9$&$33.9$\\ 
        \hline
    \end{tabular}
\end{wraptable}
\subsection{More insight on aLRP Loss}
\textbf{Potential of Correlating Classification and Localisation.}
We analyze two bounds: (i) A \textit{Lower Bound} where localisation provides an inverse ranking compared to classification. (ii) An \textit{Upper Bound} where localisation provides exactly the same ranking as classification. Table \ref{tab:correlationEffect} shows that correlating ranking can have a significant effect (up to $20$ AP) on the performance especially for  larger IoUs. Therefore, correlating rankings promises significant  improvement (up to $\sim10$AP). Moreover, while $\rho$ is $0.44$ and $0.45$ for Focal Loss (results not provided in the table) and AP Loss (Table \ref{tab:minival}), respectively, aLRP Loss yields higher correlation ($0.48, 0.49$).
%


\begin{figure*}[t!]
    \begin{subfigure}[t]{0.45\textwidth}
        \centering
        \includegraphics[width=1\textwidth]{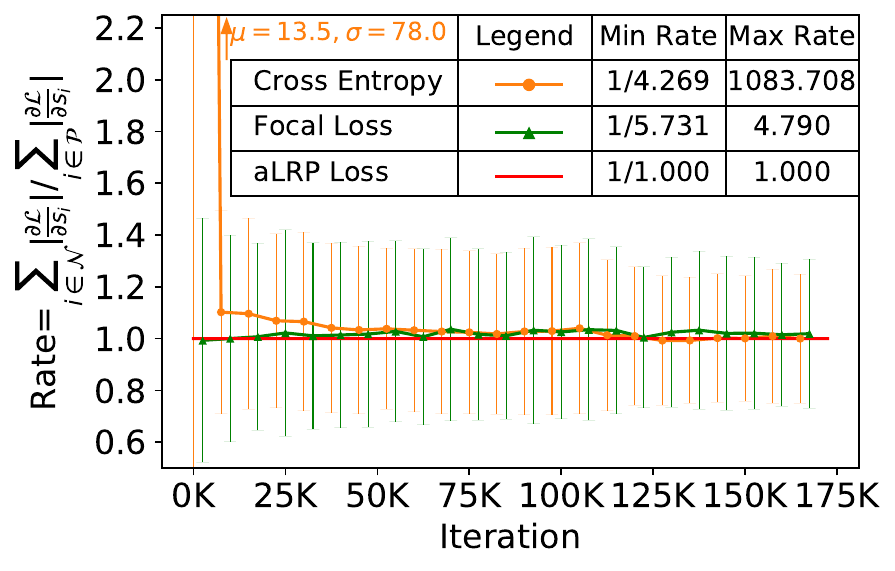}
    \end{subfigure}
    ~
    \begin{subfigure}[t]{0.42\textwidth}
        \centering
        \includegraphics[width=1\textwidth]{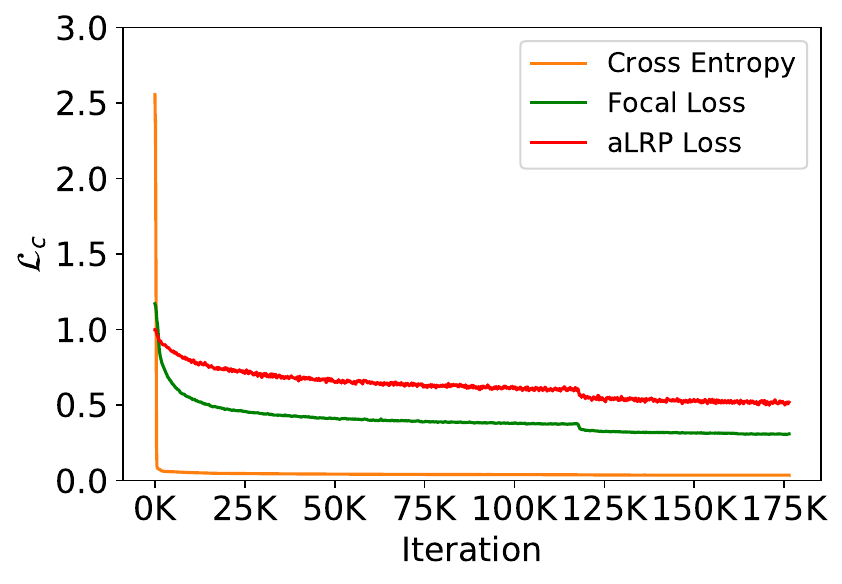}
    \end{subfigure}    
    \caption{\textbf{(left)} The rate of the total gradient magnitudes of negatives to positives. \textbf{(right)} Loss values. \label{fig:GradComp}}
\end{figure*}
\textbf{Analysing Balance Between Positives and Negatives.}
For this analysis, we compare Cross Entropy Loss (CE), Focal Loss (FL) and aLRP Loss on RetinaNet trained for 12 epochs and average results over 10 runs. Fig. \ref{fig:GradComp} experimentally confirms Theorem \ref{theorem:BalancedTraining} for aLRP Loss ($\mathcal{L}^{\mathrm{aLRP}}_{cls}$), as it exhibits perfect balance between the gradients throughout training. However, we see large fluctuations in derivatives of CE and FL (left), which biases training towards positives or negatives alternately across iterations. As expected, imbalance impacts CE more as it quickly drops (right), overfitting in favor of negatives since it is dominated by the error and gradients of these large amount of negatives.

\comment{
\textbf{It is crucial to set ${L_{ij}}^*$ as the target error when an example is ranked properly.} Fig. \ref{fig:GradComp} presents a case where ${L_{ij}}^*$ is set to 0 (i.e. minimum value of aLRP). For this case, the training continues properly similar to aLRP Loss up to a point and then diverges. Note that this occurs when the positives start to be ranked properly but are still assigned gradients since ${L_{ij}}^*-{L_{ij}} \neq 0$ due to the nonzero localization error. This causes $\sum \limits_{i \in \mathcal{P}} \abs{\frac{\partial \mathcal{L}}{\partial s_i}} > \sum \limits_{i \in \mathcal{N}} \abs{\frac{\partial \mathcal{L}}{\partial s_i}}$, violating Theorem \ref{theorem:BalancedTraining} (compare min-rate and max-rate in Fig. \ref{fig:GradComp}). Therefore, assigning proper targets as indicated in Section \ref{sec:generalization} is crucial for balanced training.
}


\subsection{Comparison with State of the Art (SOTA)}
\label{sect:comparison}
Different from the ablation analysis, we find it useful to decrease the learning rate of aLRPLoss500 at epochs 75 and 95. For SOTA comparison, we use the mmdetection framework \cite{mmdetection} for efficiency (we reproduced Table \ref{tab:minival} using our mmdetection implementation, yielding similar results - see our repository). Table \ref{tab:testdev} presents the results, which are discussed below: 

\textbf{Ranking-based Losses.} aLRP Loss yields significant gains over other ranking-based solutions: e.g., compared with AP Loss, aLRP Loss provides +5.4AP for scale 500 and +5.1AP for scale 800. Similarly, for scale 800, aLRP Loss performs  4.7AP better than DR Loss with ResNeXt-101.

\textbf{Methods combining branches.} Although a direct comparison is not fair since different conditions are used, we observe a significant margin (around 3-5AP in scale 800) compared to other approaches that combine localisation and classification.

\textbf{Comparison on scale 500.} We see that, even with ResNet-101, aLRPLoss500  outperforms all other methods with 500 test scale. With ResNext-101, aLRP Loss outperforms its closest counterpart (HSD) by 2.7AP and also in all sizes ($\mathrm{AP_S}$-$\mathrm{AP_L}$).

\textbf{Comparison on scale 800.} For 800 scale, aLRP Loss achieves 45.9 and 47.8AP on ResNet-101 and ResNeXt-101 backbones respectively. Also in this scale, aLRP Loss consistently outperforms its closest counterparts (i.e. FreeAnchor and CenterNet) by 2.9AP and reaches the highest results wrt all performance measures. With DCN \cite{DCNv2}, aLRP Loss reaches 48.9AP, outperforming ATSS by 1.2AP.



\begin{table}[t]
    \centering
    \footnotesize 
    \caption{Comparison with the SOTA detectors on COCO \textit{test-dev}. $S, \times 1.66$ implies that the image is rescaled such that its longer side cannot exceed $1.66 \times S$ where $S$ is the size of the shorter side.  R:ResNet, X:ResNeXt, H:HourglassNet, D:DarkNet, De:DeNet. We use ResNeXt101 64x4d.}
    \label{tab:testdev}
    \setlength{\tabcolsep}{0.25em}
    \begin{tabular}{|l|c|c|c|c|c|c|c|c|c|} \hline
        \cellcenter{Method}&Backbone&Training Size&Test Size&$\mathrm{AP}$&$\mathrm{AP_{50}}$&$\mathrm{AP_{75}}$&$\mathrm{AP_{S}}$ &$\mathrm{AP_{M}}$&$\mathrm{AP_{L}}$ \\ 
        \hhline{==========}
        \textit{One-Stage Methods}& & & & & & & & &\\
        RefineDet \cite{AnchorRefine}$^{\ddagger}$&R-101&$512\times512$&$512\times512$&$36.4$&$57.5$&$39.5$&$16.6$&$39.9$&$51.4$\\
        EFGRNet \cite{EnrichedFeatureGuided}$^{\ddagger}$&R-101&$512\times512$&$512\times512$&$39.0$&$58.8$&$42.3$&$17.8$&$43.6$&$54.5$\\ 
        ExtremeNet \cite{ExtremeNet}$^{* \ddagger}$&H-104&$511\times511$&original&$40.2$&$55.5$&$43.2$&$20.4$&$43.2$&$53.1$\\
        RetinaNet \cite{FocalLoss}&X-101&$800, \times 1.66$&$800, \times 1.66$&$40.8$&$61.1$&$44.1$&$24.1$&$44.2$&$51.2$\\
        HSD \cite{HierarchicalShotDet} $^{\ddagger}$&X-101&$512\times512$&$512\times512$&$41.9$&$61.1$&$46.2$&$21.8$&$46.6$&$57.0$\\ 
        FCOS \cite{FCOS}$^\dagger$&X-101&$(640, 800), \times 1.66$&$800, \times 1.66$&$44.7$&$64.1$&$48.4$&$27.6$&$47.5$&$55.6$\\         
        CenterNet \cite{CenterNet}$^{* \ddagger}$&H-104&$511\times511$&original&$44.9$&$62.4$&$48.1$&$25.6$&$47.4$&$57.4$\\          
        ATSS \cite{ATSS}$^\dagger$&X-101-DCN&$(640, 800), \times 1.66$&$800, \times 1.66$&$47.7$&$66.5$&$51.9$&$29.7$&$50.8$&$59.4$\\        
        \hhline{==========}
        \textit{Ranking Losses}& & & & & & & & &\\  
        AP Loss500 \cite{APLoss}$^\ddagger$&R-101&$512 \times 512$&$500, \times 1.66$&$37.4$&$58.6$&$40.5$&$17.3$&$40.8$&$51.9$\\
        AP Loss800 \cite{APLoss}$^\ddagger$&R-101&$800 \times 800$&$800, \times 1.66$&$40.8$&$63.7$&$43.7$&$25.4$&$43.9$&$50.6$\\
        DR Loss \cite{DRLoss}$^\dagger$&X-101&$(640, 800), \times 1.66$&$800, \times 1.66$&$43.1$&$62.8$&$46.4$&$25.6$&$46.2$&$54.0$\\        
        \hhline{==========}
        \textit{Combining Branches}& & & & & & & & &\\
        LapNet \cite{LapNet}&D-53&$512 \times 512$&$512 \times 512$&$37.6$&$55.5$&$40.4$&$17.6$&$40.5$&$49.9$\\ 
        Fitness NMS \cite{BoundedIoU}&De-101&$512, \times 1.66$&$768, \times 1.66$&$39.5$&$58.0$&$42.6$&$18.9$&$43.5$&$54.1$\\ 
        Retina+PISA \cite{PrimeSample}&R-101&$800, \times 1.66$&$800, \times 1.66$&$40.8$&$60.5$&$44.2$&$23.0$&$44.2$&$51.4$\\
        FreeAnchor \cite{FreeAnchor}$^\dagger$&X-101&$(640, 800), \times 1.66$&$800, \times 1.66$&$44.9$&$64.3$&$48.5$&$26.8$&$48.3$&$55.9$\\
        \hhline{==========}
        \textit{Ours}& & & & & & & & &\\
        aLRP Loss500$^\ddagger$&R-50&$512 \times 512$&$500, \times 1.66$&$41.3$&$61.5$&$43.7$&$21.9$&$44.2$&$54.0$\\  
        aLRP Loss500$^\ddagger$&R-101&$512 \times 512$&$500, \times 1.66$&$42.8$&$62.9$&$45.5$&$22.4$&$46.2$&$56.8$\\  
        aLRP Loss500$^\ddagger$&X-101&$512 \times 512$&$500, \times 1.66$&$44.6$&$65.0$&$47.5$&$24.6$&$48.1$&$58.3$  \\
        aLRP Loss800$^\ddagger$&R-101&$800 \times 800$&$800, \times 1.66$&$45.9$&$66.4$&$49.1$&$28.5$&$48.9$&$56.7$\\ 
        aLRP Loss800$^\ddagger$&X-101&$800 \times 800$&$800, \times 1.66$&$47.8$&$68.4$&$51.1$&$30.2$&$50.8$&$59.1$ \\ 
        aLRP Loss800$^\ddagger$&X-101-DCN&$800 \times 800$&$800, \times 1.66$&$\mathbf{48.9}$&$\mathbf{69.3}$&$\mathbf{52.5}$&$\mathbf{30.8}$&$\mathbf{51.5}$&$\mathbf{62.1}$ \\ 
        \hline
        \hhline{==========}
        \textit{Multi-Scale Test}& & & & & & & & &\\
        aLRP Loss800$^\ddagger$&X-101-DCN&$800 \times 800$&$800, \times 1.66$&$50.2$&$70.3$&$53.9$&$32.0$&$53.1$&$63.0$\\ 
        \hline        
    \end{tabular}
    {\\ \footnotesize $^\dagger$: multiscale training, $^\ddagger$: SSD-like augmentation, $^*$: Soft NMS \cite{SoftNMS} and flip augmentation at test time}
\end{table}

\subsection{Using aLRP Loss with Different Object Detectors}
Here, we use aLRP Loss to train FoveaBox \cite{FoveaBox} as an anchor-free detector, and Faster R-CNN \cite{FasterRCNN} as a two-stage detector. All models use $500$ scale setting, have a ResNet-50 backbone and follow our mmdetection implementation \cite{mmdetection}. Further implementation details are presented in Appendix G.


\textbf{Results on FoveaBox:} To train FoveaBox, we keep the learning rate same with RetinaNet (i.e. $0.008$) and only replace the loss function by aLRP Loss. Table \ref{tab:anchorfree} shows that aLRP Loss outperforms Focal Loss and AP Loss, each combined by Smooth L1 (SL1 in Table \ref{tab:anchorfree}), by $1.4$ and $3.2$ AP points (and similar oLRP points) respectively. Note that aLRP Loss also simplifies tuning hyperparameters of Focal Loss, which are set in FoveaBox to different values from RetinaNet. One training iteration of Focal Loss, AP Loss and aLRP Loss take $0.34$, $0.47$ and $0.54$ sec respectively.

\textbf{Results on Faster R-CNN:} To train Faster R-CNN, we remove sampling, use aLRP Loss to train both stages (i.e. RPN and Fast R-CNN) and reweigh aLRP Loss of RPN by $0.20$. Thus, the number of hyperparameters is reduced from nine (Table \ref{tab:Hyperparameters}) to three (two $\delta$s for step function, and a weight for RPN). We validated the learning rate of aLRP Loss as $0.012$, and train baseline Faster R-CNN by both L1 Loss and GIoU Loss for fair comparison. aLRP Loss outperforms these baselines by more than 2.5AP and 2oLRP points while simplifying the training pipeline (Table \ref{tab:twostage}). One training iteration of Cross Entropy Loss (with L1) and aLRP Loss take $0.38$ and $0.85$ sec respectively.

\begin{table}[]
\RawFloats
\parbox{.48\linewidth}{
    \centering
    \setlength{\tabcolsep}{0.15em}
    \footnotesize
    \caption{Comparison on FoveaBox \cite{FoveaBox}.}
    \label{tab:anchorfree}
    \begin{tabular}{|c|c|c|c|c|c|} \hline
    $\mathcal{L}$&$\mathrm{AP}$&$\mathrm{AP_{50}}$&$\mathrm{AP_{75}}$&$\mathrm{AP_{90}}$&$\mathrm{oLRP}$ \\ \hline
    Focal Loss+SL1&$38.3$&$57.8$&$40.7$&$15.7$&$68.8$\\ \hline
    AP Loss+SL1&$36.5$&$58.3$&$38.2$&$11.3$&$69.8$\\ \hline
    aLRP Loss (Ours)&$\mathbf{39.7}$&$\mathbf{58.8}$&$\mathbf{41.5}$&$\mathbf{18.2}$&$\mathbf{67.2}$\\ \hline
    \end{tabular}
}
\hfill
\parbox{.52\linewidth}{
    \centering
    \setlength{\tabcolsep}{0.15em}
    \footnotesize
    \caption{Comparison on Faster R-CNN \cite{FasterRCNN}}
    \label{tab:twostage}
    \begin{tabular}{|c|c|c|c|c|c|} \hline
    $\mathcal{L}$&$\mathrm{AP}$&$\mathrm{AP_{50}}$&$\mathrm{AP_{75}}$&$\mathrm{AP_{90}}$&$\mathrm{oLRP}$ \\ \hline
    Cross Entropy+L1&$37.8$&$58.1$&$41.0$&$12.2$&$69.3$\\ \hline
    Cross Entropy+GIoU&$38.2$&$58.2$&$41.3$&$13.7$&$69.0$\\ \hline
    aLRP Loss (Ours)&$\mathbf{40.7}$&$\mathbf{60.7}$&$\mathbf{43.3}$&$\mathbf{18.0}$&$\mathbf{66.7}$\\ \hline
    \end{tabular}
}
\end{table}

\comment{
\begin{table}[t]
    \centering
    \caption{Comparison on COCO \textit{test-dev}. RetinaNet is optimized using aLRP Loss. }
    \label{tab:testdev}
    \begin{tabular}{|l|c|c|c|c|c|c|c|} \hline
        \cellcenter{Method}&Backbone&$\mathrm{AP}$&$\mathrm{AP_{50}}$&$\mathrm{AP_{75}}$&$\mathrm{AP_{S}}$ &$\mathrm{AP_{M}}$&$\mathrm{AP_{L}}$ \\ 
        \hhline{========}
        \textit{One-Stage Methods}& & & & & & &\\
        AP Loss &ResNet-101&$40.8$&$63.7$&$43.7$&$25.4$&$43.9$&$50.6$\\
        FCOS &ResNeXt-101&$44.7$&$64.1$&$48.4$&$27.6$&$47.5$&$55.6$\\         
        CenterNet &Hourglass-104&$44.9$&$62.4$&$48.1$&$25.6$&$47.4$&$57.4$\\
        FreeAnchor &ResNeXt-101&$44.9$&$64.3$&$48.5$&$26.8$&$48.3$&$55.9$\\ 
        ATSS &ResNeXt-101-DCN&$47.7$&$66.5$&$51.9$&$29.7$&$50.8$&$59.4$\\        
        \hhline{========}


        \textit{Ours}& & & & & & &\\
        aLRP Loss800&ResNet-101&$45.9$&$66.4$&$49.1$&$28.5$&$48.9$&$56.7$\\ 
        aLRP Loss800&ResNeXt-101&$\mathbf{47.8}$&$\mathbf{68.4}$&$\mathbf{51.1}$&$\mathbf{30.2}$&$\mathbf{50.8}$&$59.1$ \\ 
        aLRP Loss800&ResNeXt-101-DCN&$\mathbf{?}$&$\mathbf{?}$&$\mathbf{?}$&$\mathbf{?}$&$\mathbf{?}$&$?$ \\ 
        \hline
        \comment{
        \hhline{==========}
        \textit{Multi-Scale Test}& & & & & & & & &\\
        DR Loss \cite{DRLoss}$^\dagger$&X-101&$(640, 800), \times 1.66$&N/A&$44.7$&$63.8$&$48.7$&$28.2$&$47.4$&$56.2$\\
        CenterNet \cite{CenterNet}$^*$&H-104&$511\times511$&N/A&$47.0$&$64.5$&$50.7$&$28.9$&$49.9$&$58.9$\\        
        FreeAnchor \cite{FreeAnchor}$^\dagger$&X-101&$(640, 800), \times 1.66$&N/A&$47.3$&$66.3$&$51.5$&$30.6$&$50.4$&$59.0$\\
        aLRP Loss800-78e&X-101&$800 \times 800$&N/A&$\mathbf{47.1}$&$\mathbf{68.1}$&$\mathbf{50.4}$&$\mathbf{30.3}$&$\mathbf{49.4}$&$\mathbf{59.1}$ \\}
        \hline        
    \end{tabular}
\end{table}

\begin{table}[]
    \centering
    \caption{Comparison on FoveaBox using COCO \textit{minival}.}
    \label{tab:anchorfree}
    \begin{tabular}{|c|c|c|c|c|c|c|} \hline
    $\mathcal{L}$&Backbone&$\mathrm{AP}$&$\mathrm{AP_{50}}$&$\mathrm{AP_{75}}$&$\mathrm{AP_{90}}$&$\mathrm{oLRP}$ \\ \hline
    Focal Loss+SL1&ResNet-50&$38.3$&$57.8$&$40.7$&$?$&$?$\\ \hline
    AP Loss+SL1&ResNet-50&$36.5$&$58.3$&$38.2$&$11.3$&$69.8$\\ \hline
    aLRP Loss (Ours)&ResNet-50&$\mathbf{39.7}$&$\mathbf{58.8}$&$\mathbf{41.5}$&$\mathbf{18.2}$&$\mathbf{67.2}$\\ \hline
    \end{tabular}
\end{table}

\begin{table}[]
    \centering
    \caption{Comparison on Faster R-CNN using COCO \textit{minival}.}
    \label{tab:twostage}
    \begin{tabular}{|c|c|c|c|c|c|c|} \hline
    $\mathcal{L}$&Backbone&$\mathrm{AP}$&$\mathrm{AP_{50}}$&$\mathrm{AP_{75}}$&$\mathrm{AP_{90}}$&$\mathrm{oLRP}$ \\ \hline
    Cross Entropy+L1&ResNet-50&$37.8$&$58.1$&$41.0$&$?$&$?$\\ \hline
    Cross Entropy+GIoU&ResNet-50&$?$&$?$&$?$&$?$&$?$\\ \hline
    aLRP Loss (Ours)&ResNet-50&$40.4$&$60.1$&$42.6$&$?$&$?$\\ \hline
    \end{tabular}
\end{table}
}
\section{Conclusion}
\label{sec:Conclusion}

In this paper, we provided a general framework for the error-driven optimization of ranking-based functions. As a special case of this generalization, we introduced aLRP Loss, a  ranking-based, balanced loss function which handles the classification and localisation errors in a unified manner. aLRP Loss has only one hyperparameter which we did not need to tune, as opposed to around 6 in SOTA loss functions.  We showed that using aLRP improves its baselines significantly over different detectors by simplifying parameter tuning, and outperforms all one-stage detectors. 

\section*{Broader Impact}

We anticipate our work to significantly impact the following  domains: 
\begin{enumerate}
    \item \textbf{Object detection}: Our loss function is unique in many important aspects: It unifies localisation and classification in a single loss function. It uses ranking for both classification and localisation. It provides provable balance between negatives and positives, similar to AP Loss. 
    
    These unique merits will contribute to a paradigm shift in the object detection community towards more capable and sophisticated loss functions such as ours. 
    
    \item \textbf{Other computer vision problems with multiple objectives}: Problems including multiple objectives (such as instance segmentation, panoptic segmentation -- which actually has classification and regression objectives) will benefit significantly from our proposal of using ranking for both classification and localisation.     
    
    \item \textbf{Problems that can benefit from ranking}: Many vision problems can be easily converted into a ranking problem. They can then exploit our generalized framework to easily define a loss function and to determine the derivatives. 
    
    
\end{enumerate}

Our paper does not have direct social implications. However, it inherits the following implications of object detectors: Object detectors can be used for surveillance purposes for the betterness of society albeit privacy concerns. When used for detecting targets, an object detector's failure may have severe consequences depending on the application (e.g. self-driving cars). Moreover, such detectors are affected by the bias in data, although they will not try to exploit them for any purposes.


\begin{ack}
This work was partially supported by the Scientific and Technological Research Council of Turkey (T\"UB\.ITAK) through a project titled ``Object Detection in Videos with Deep Neural Networks'' (grant number 117E054). Kemal \"Oks\"uz is supported by the T\"UB\.ITAK 2211-A National Scholarship Programme for Ph.D. students. The numerical calculations reported in this paper were performed at TUBITAK ULAKBIM High Performance and Grid Computing Center (TRUBA), and Roketsan Missiles Inc. sources.
\end{ack}

\bibliographystyle{spbasic}
\bibliography{detectionbibliography}
\appendix
\section*{APPENDIX}


\addcontentsline{toc}{chapter}{\appendixname}
\renewcommand{\thefigure}{A.\arabic{figure}}
\renewcommand{\thetable}{A.\arabic{table}}
\renewcommand{\thetheorem}{A.\arabic{theorem}}
\renewcommand{\theequation}{A.\arabic{equation}}
\renewcommand{\thealgorithm}{A.\arabic{algorithm}}

\section{Details of Figure 1: Comparison of Loss Functions on a Toy Example}
\label{sec:Teaser}


This section aims to present the scenario considered in Figure 1 of the main paper. Section \ref{subsec:the_scenario} explains the scenario, Section \ref{subsec:performance_estimation} and Section \ref{subsec:loss_est} clarify how the performance measures (AP, $\mathrm{AP_{50}}$, etc.) and loss values (cross-entropy, AP Loss, aLRP Loss, etc.) are calculated.

\subsection{The Scenario}
\label{subsec:the_scenario}
We assume that the scenario in Figure 1(a) of the paper includes five ground truths of which four of them are detected as true positives with different Intersection-over-Union (IoU) overlaps by three different detectors (i.e. $\mathrm{C\&R_1}$, $\mathrm{C\&R_2}$, $\mathrm{C\&R_3}$). Each detector has a different ranking for these true positives with respect to their IoUs. In addition, the output of each detector contains the same six detections with different scores as false positives. Note that the IoUs of these false positives are marked with "--" in Figure 1(a) since they do not match with any ground truth and therefore their IoUs are not being considered neither by the performance measure (i.e. Average Precision) nor by loss computation.

\begin{figure}
\centering
\includegraphics[width=\linewidth]{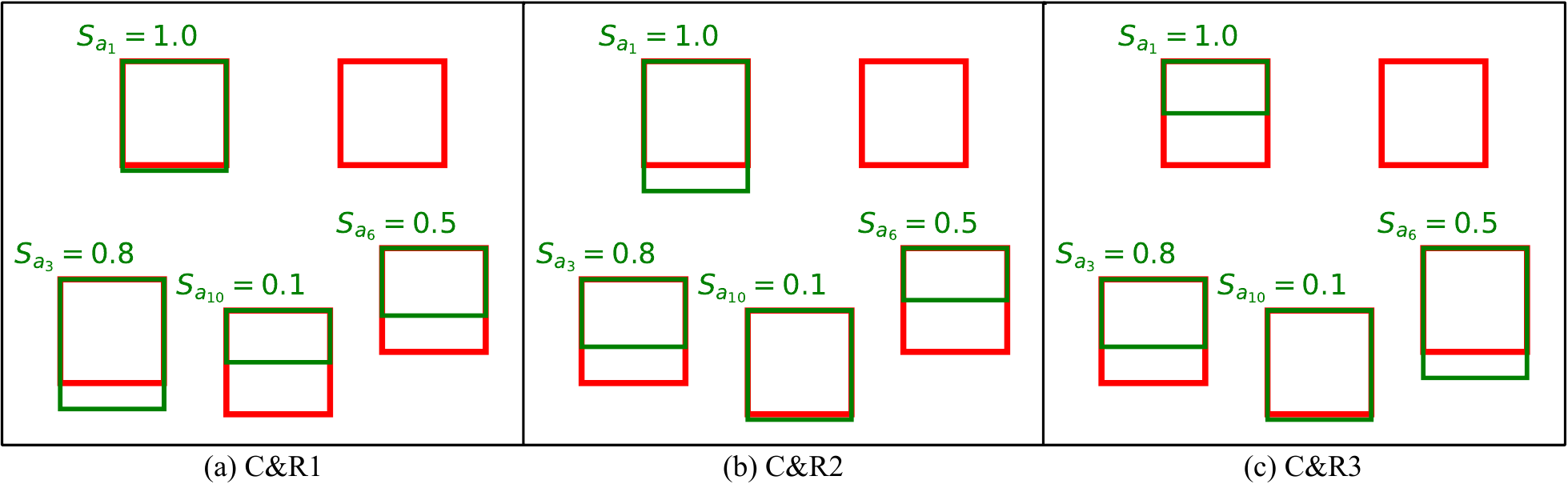}
\caption{Visualization of anchor boxes in the scenarios used in Figure 1. Green and red boxes are positive anchors and  ground truths respectively. $p_i$ refers to confidence score of the anchor $a_i$. Note that for all of the scenarios, there are additionally six false positives (see Figure 1), which are excluded in this figure for clarity.}
\label{fig:fig_regressors}
\end{figure}

\subsection{Performance Evaluation}
\label{subsec:performance_estimation}
There are different ways to calculate Average Precision (AP) and loss values. For example, in PASCAL \cite{PASCAL} and COCO  \cite{COCO} datasets, the recall domain is divided into 11 and 101 evenly spaced points, respectively, and the  precision values at these points are averaged to compute AP for a single IoU threshold.

Here, we present how Average Precision ($\mathrm{AP}$) is calculated in Figure 1(b). Similar to the widely adopted performance metric, COCO-style AP, we use $\mathrm{AP}_\mathrm{IoU}$ with different $\mathrm{IoU}$ thresholds. In order to keep things simple but provide the essence of the performance metric, we use four samples with $0.15$ increments (i.e. $\{0.50, 0.65, 0.80,0.95\}$) instead of ten samples with $0.05$ increments as done by original COCO-style AP. 

In order to compute a single average precision with an IoU as the conventional true positive labelling threshold, denoted by $\mathrm{AP}_\mathrm{IoU}$, the approaches use different methods for sampling/combining individual precision values on a PR curve. The PR curves corresponding to each detector-$\mathrm{AP}_\mathrm{IoU}$ pair are presented in Figure \ref{fig:pr_curves}. While drawing these curves, similar to Pascal-VOC and COCO, we also adopt interpolation on the PR curve, which requires keeping the larger precision value in the case that the larger one resides in a lower recall. Then, again similar to what these common methods do for a single AP threshold, we check the precision values on different recall values after splitting the recall axis equally. Here, we use 10 recall points between 0.1 and 1.0 in 0.1 increments. Then,  based on the PR curves in Figure \ref{fig:pr_curves}, we check the precision under these different recall values and present them in Table \ref{tab:perf_calculation}. Having generated these values in Table \ref{tab:perf_calculation} for each $\mathrm{AP}_\mathrm{IoU}$s, the computation is trivial: Just averaging over these precisions (i.e. row-wise average) yields $\mathrm{AP}_\mathrm{IoU}$s. Finally, averaging over these four $\mathrm{AP}_\mathrm{IoU}$s produces the final detection performance as $0.37$, $0.29$ and $0.20$ for $\mathrm{C\&R_1}$, $\mathrm{C\&R_2}$, $\mathrm{C\&R_3}$ respectively (see Table \ref{tab:perf_calculation}). 

\begin{figure}
\centering
\begin{tabular}{cccc}
\begin{subfigure}{0.20\textwidth}
\includegraphics[width=\linewidth]{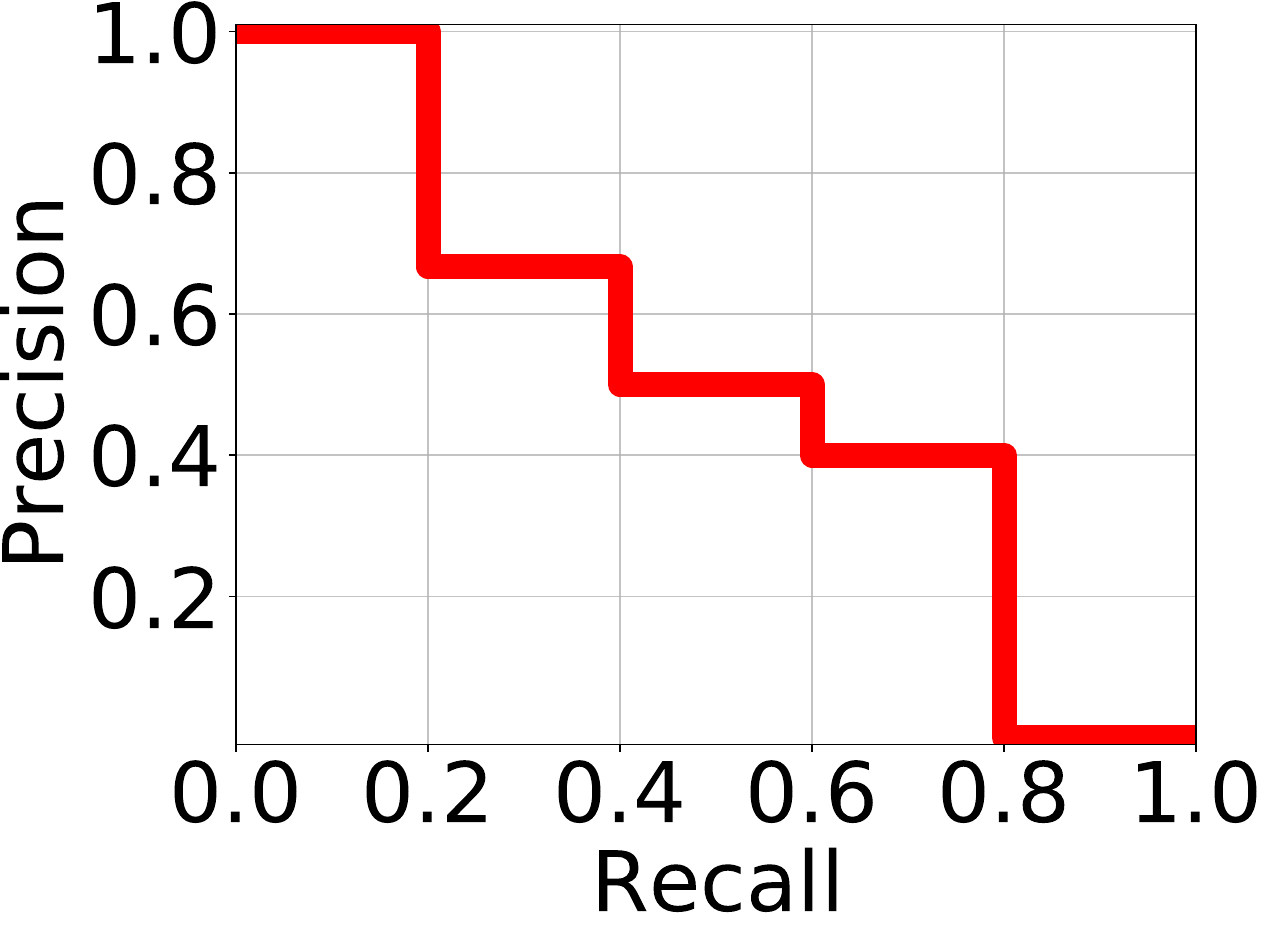}
\caption{C\&R1 $AP_{50}$}\label{fig:pr_1}
\end{subfigure}&
\begin{subfigure}{0.20\textwidth}
\includegraphics[width=\linewidth]{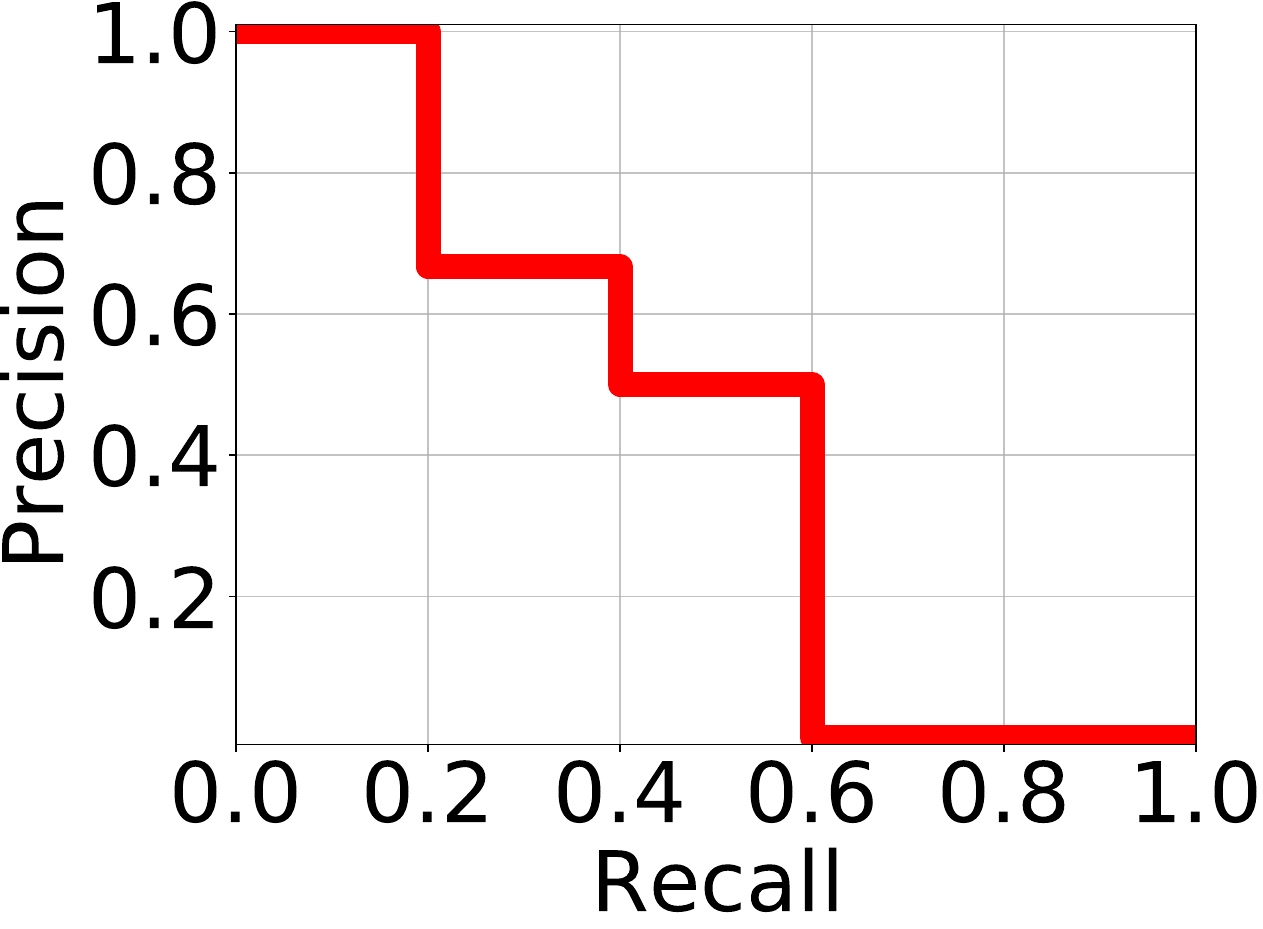}
\caption{C\&R1 $AP_{65}$}\label{fig:pr_2}
\end{subfigure}&
\begin{subfigure}{0.20\textwidth}
\includegraphics[width=\linewidth]{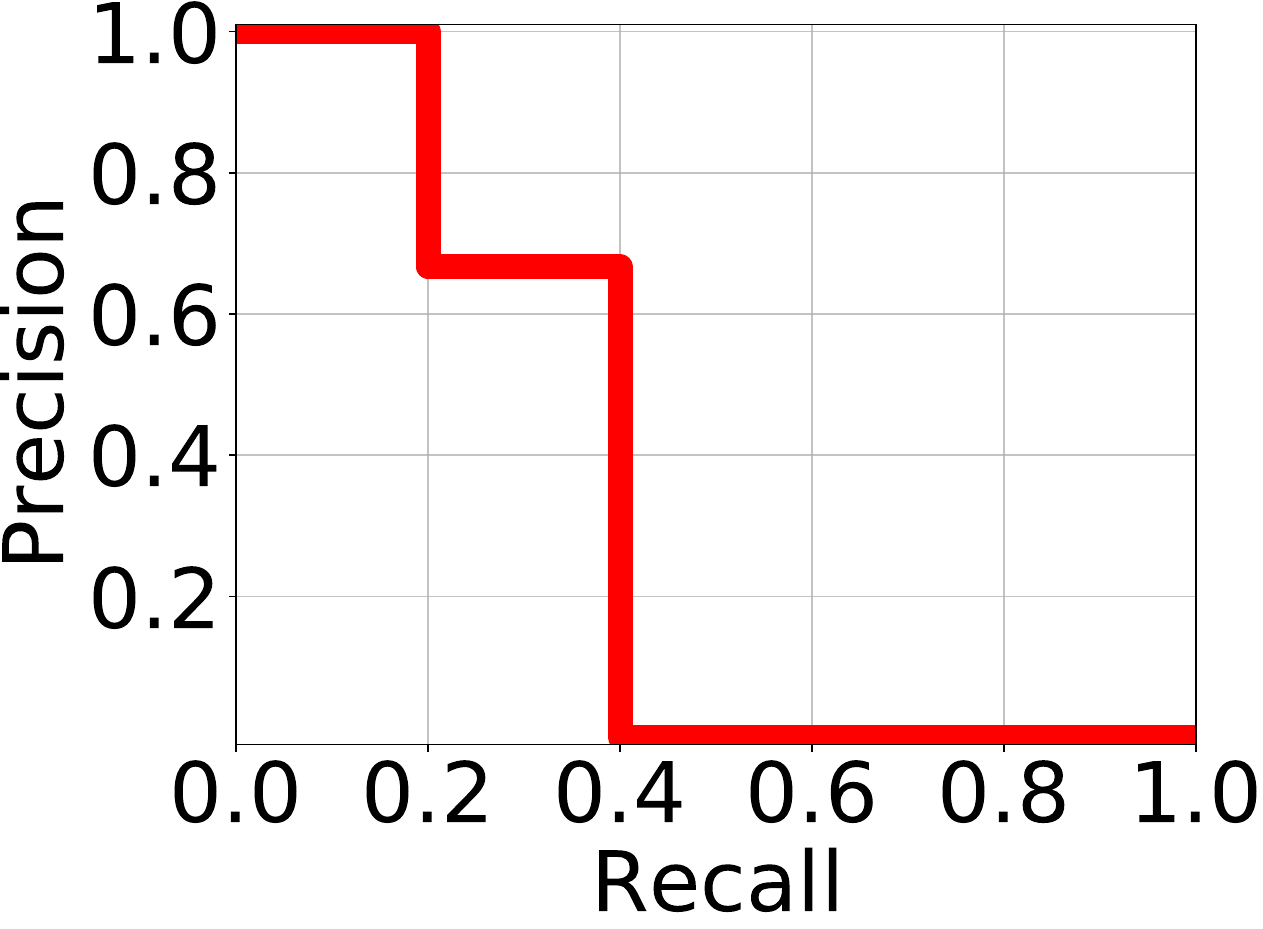}
\caption{C\&R1 $AP_{80}$}\label{fig:pr_3}
\end{subfigure}&
\begin{subfigure}{0.20\textwidth}
\includegraphics[width=\linewidth]{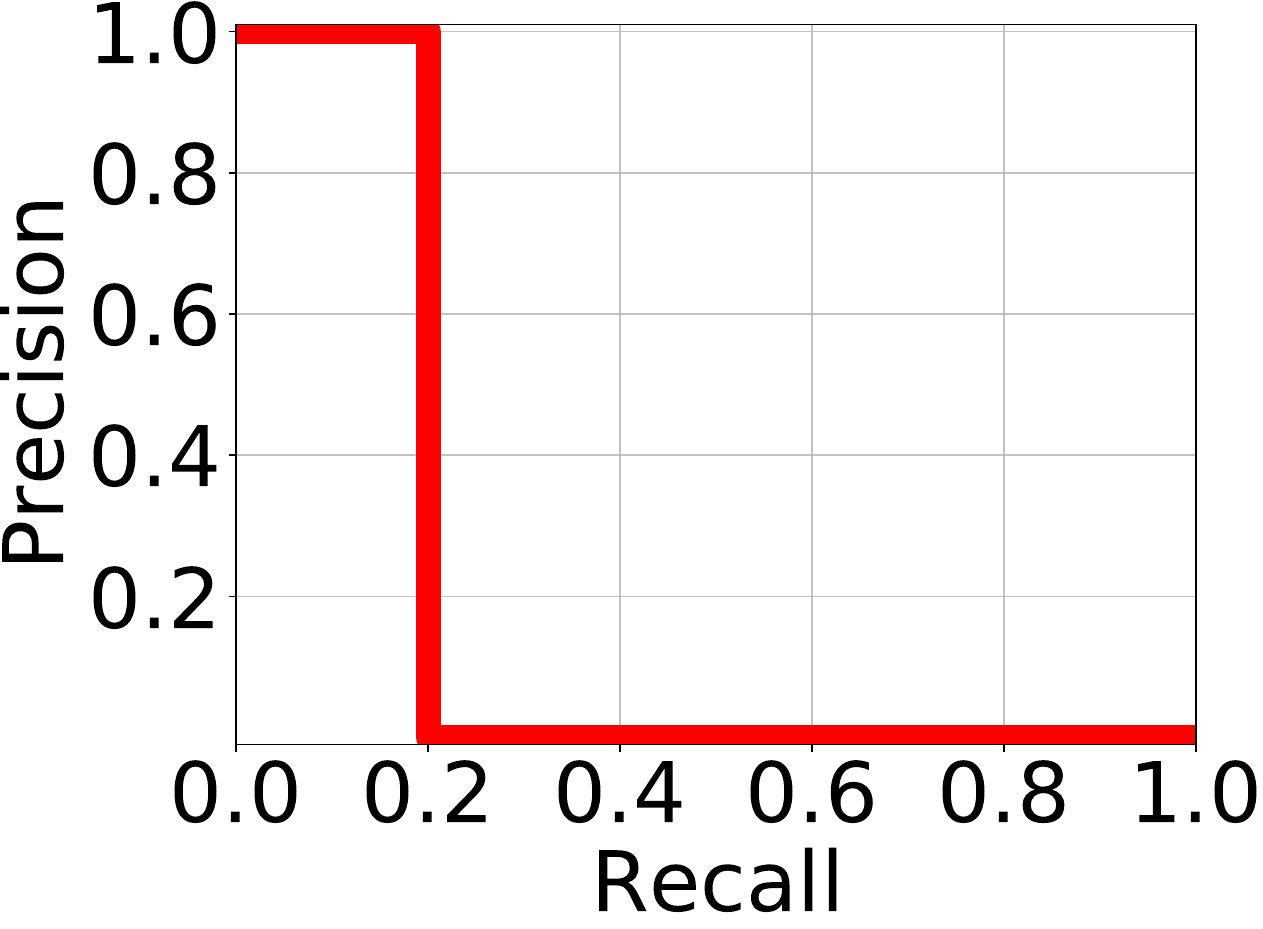}
\caption{C\&R1 $AP_{95}$}\label{fig:pr_4}
\end{subfigure}\\

\begin{subfigure}{0.20\textwidth}
\includegraphics[width=\linewidth]{figures_appendix/fig1_R1R2R3_AP50.pdf}
\caption{C\&R2 $AP_{50}$}\label{fig:pr_5}
\end{subfigure}&
\begin{subfigure}{0.20\textwidth}
\includegraphics[width=\linewidth]{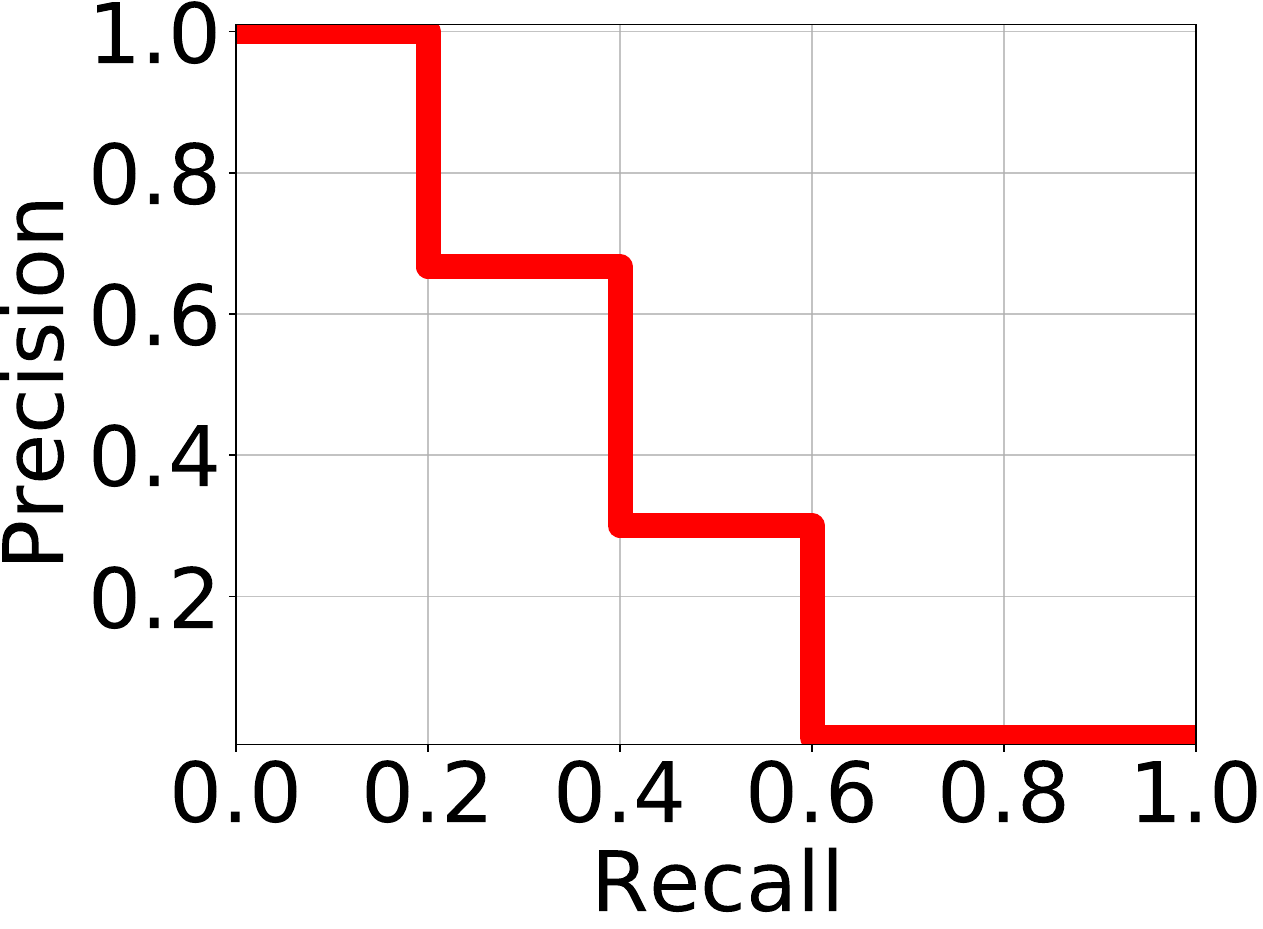}
\caption{C\&R2 $AP_{65}$}\label{fig:pr_6}
\end{subfigure}&
\begin{subfigure}{0.20\textwidth}
\includegraphics[width=\linewidth]{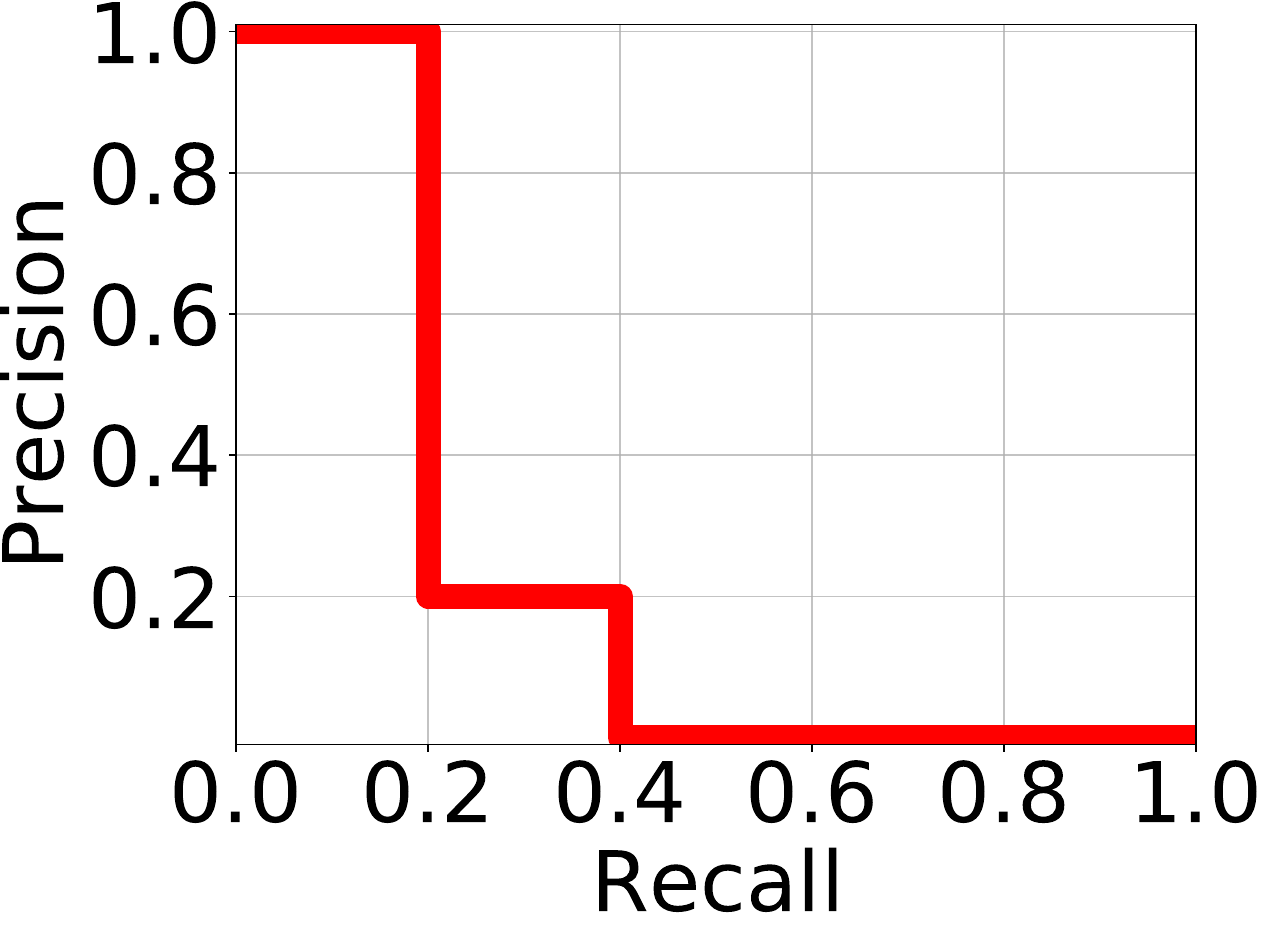}
\caption{C\&R2 $AP_{80}$}\label{fig:pr_7}
\end{subfigure}&
\begin{subfigure}{0.20\textwidth}
\includegraphics[width=\linewidth]{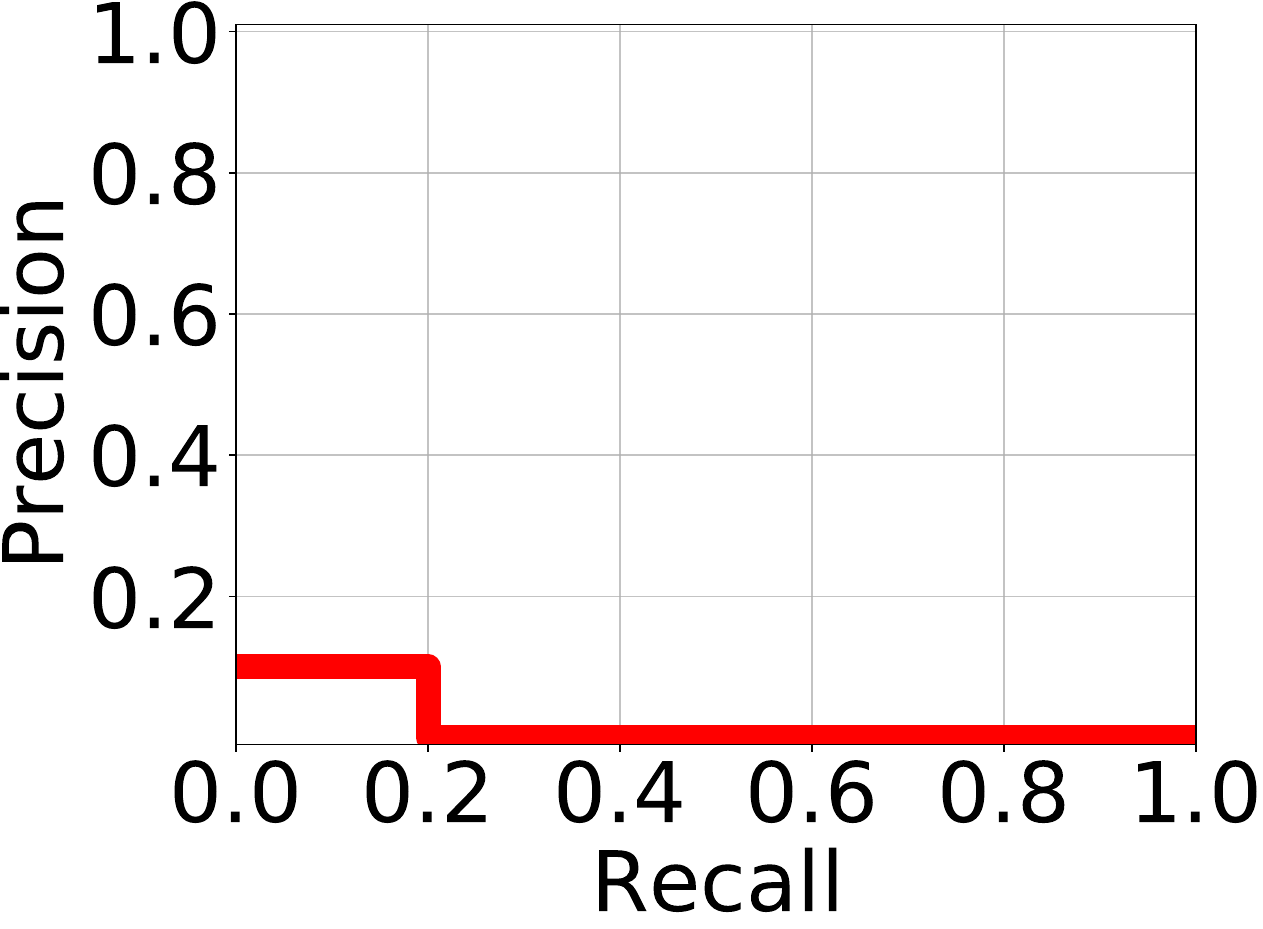}
\caption{C\&R2 $AP_{95}$}\label{fig:pr_8}
\end{subfigure}\\

\begin{subfigure}{0.20\textwidth}
\includegraphics[width=\linewidth]{figures_appendix/fig1_R1R2R3_AP50.pdf}
\caption{C\&R3 $AP_{50}$}\label{fig:pr_9}
\end{subfigure}&
\begin{subfigure}{0.20\textwidth}
\includegraphics[width=\linewidth]{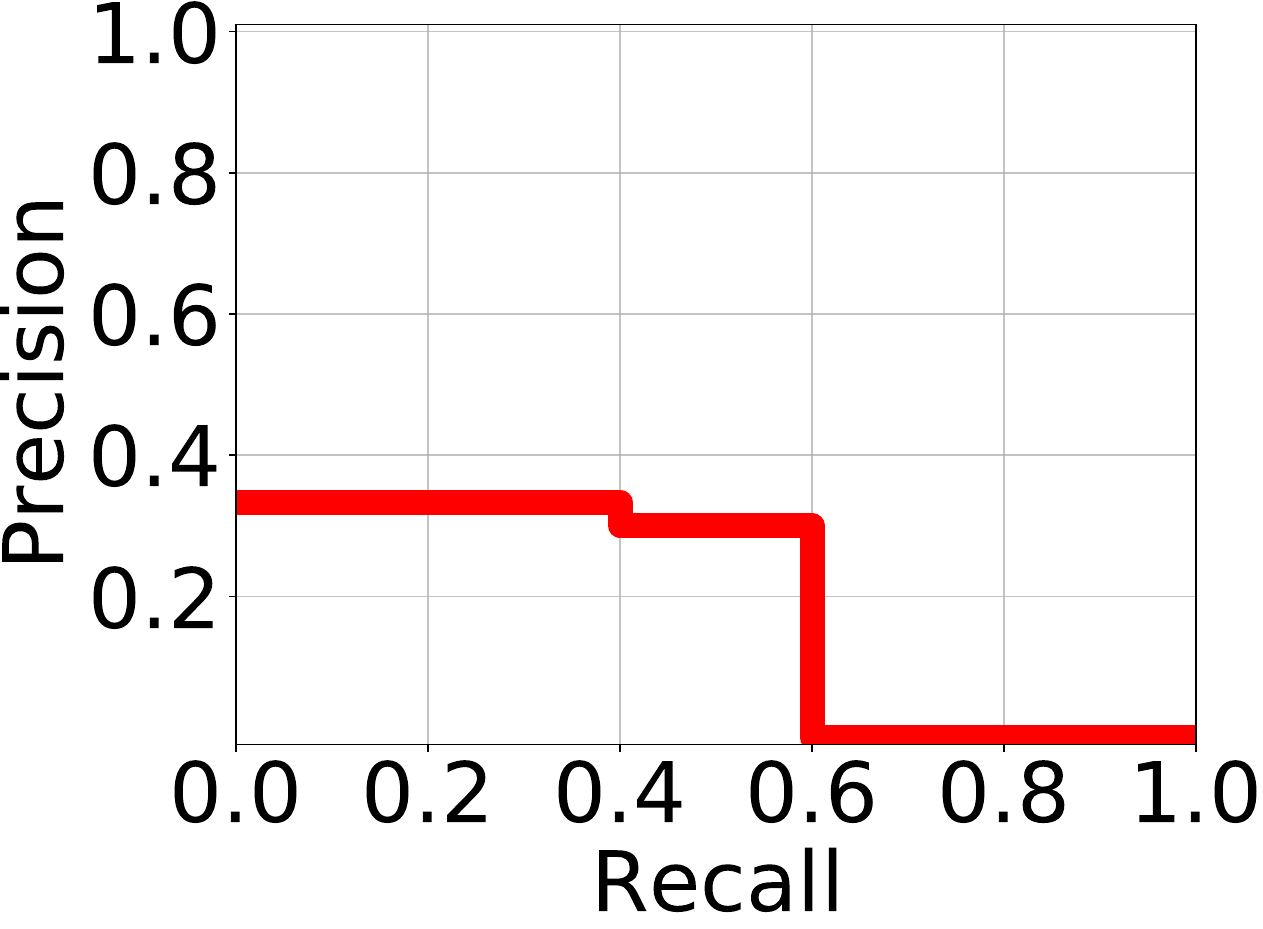}
\caption{C\&R3 $AP_{65}$}\label{fig:pr_10}
\end{subfigure}&
\begin{subfigure}{0.20\textwidth}
\includegraphics[width=\linewidth]{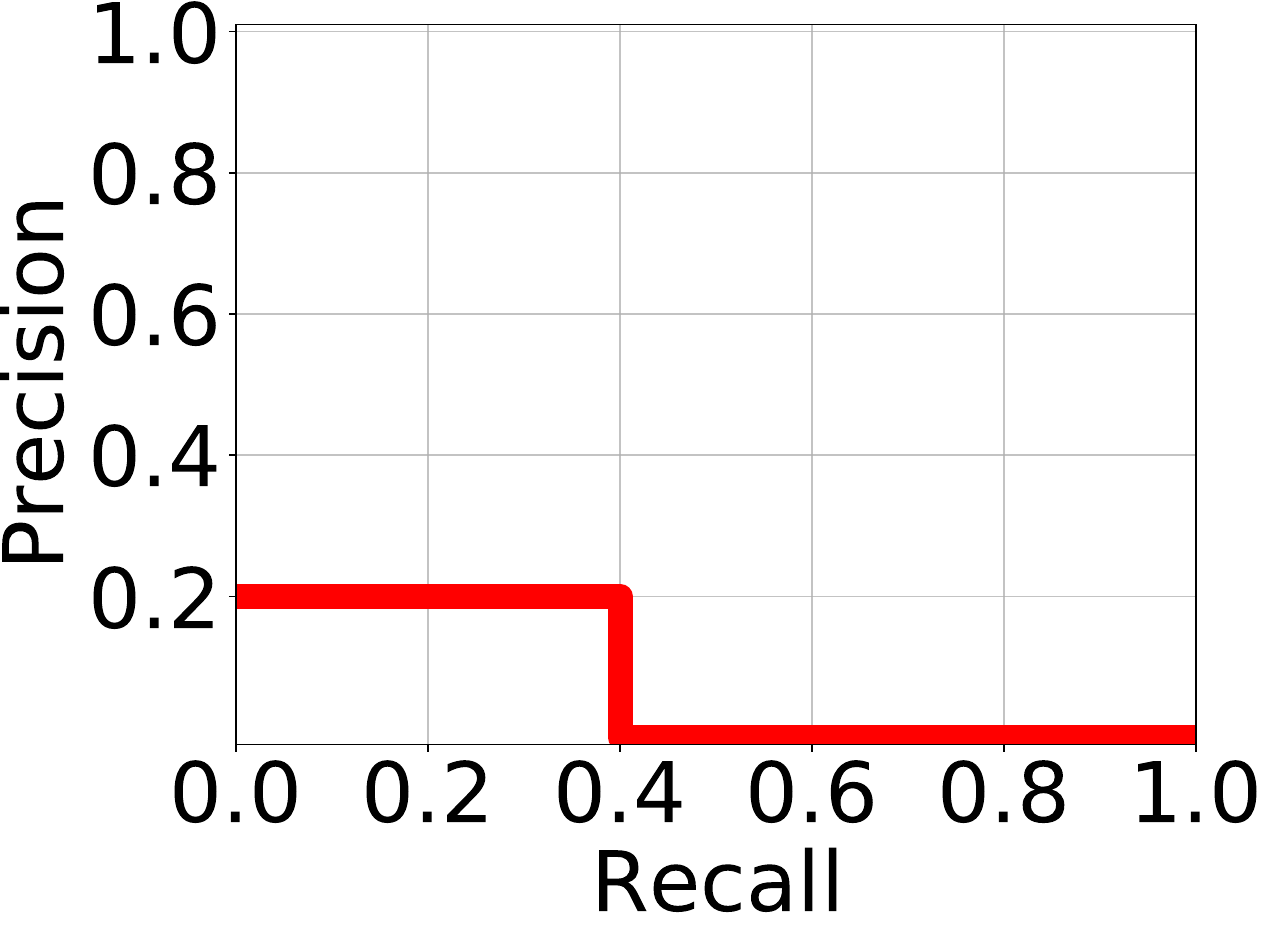}
\caption{C\&R3 $AP_{80}$}\label{fig:pr_11}
\end{subfigure}&
\begin{subfigure}{0.20\textwidth}
\includegraphics[width=\linewidth]{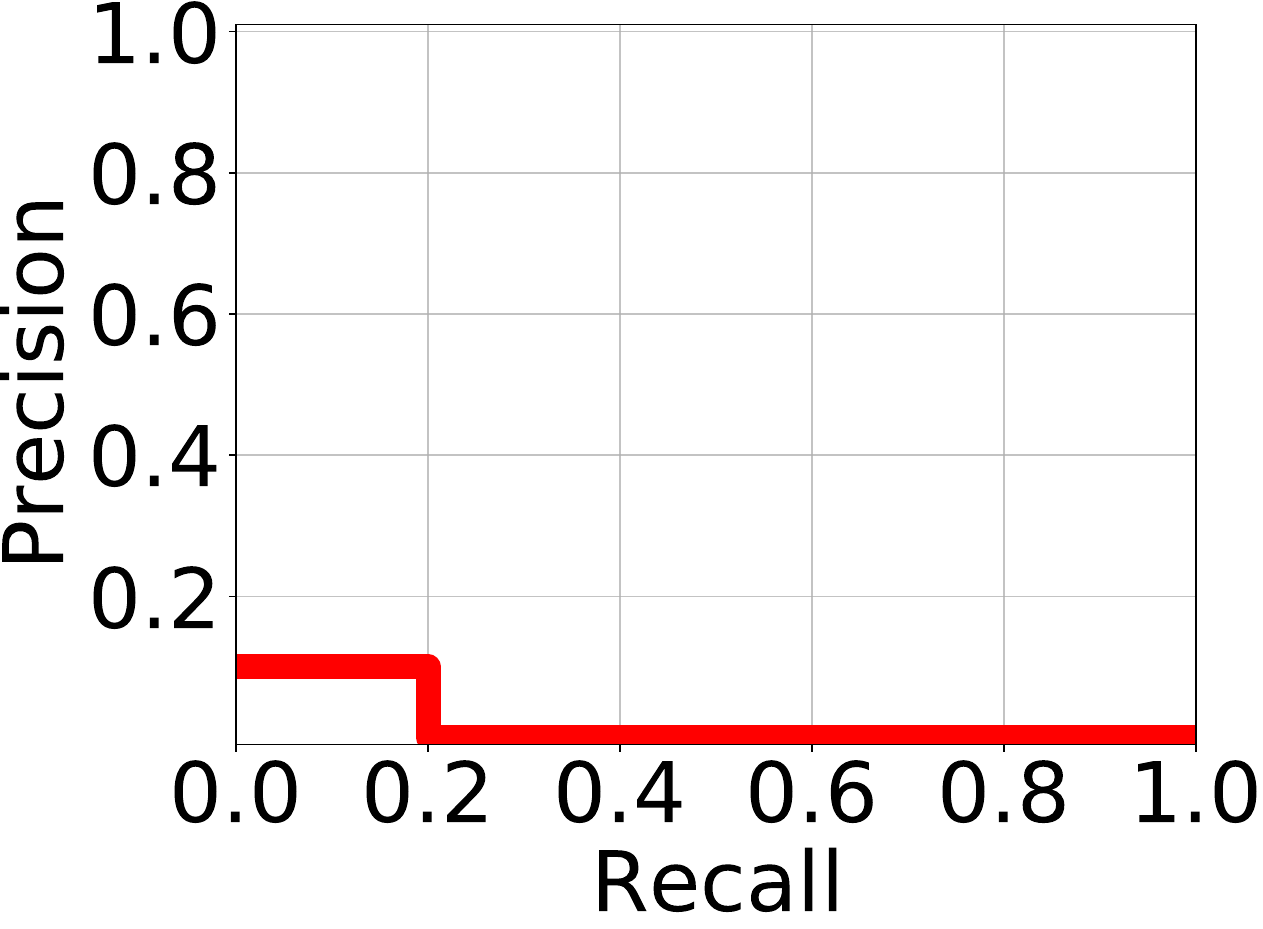}
\caption{C\&R3 $AP_{95}$}\label{fig:pr_12}
\end{subfigure}\\
\end{tabular}

\caption{PR curve of each detector output-$\mathrm{AP}_{IoU}$ pair. Rows and columns correspond to different $\mathrm{AP}_{IoU}$ and detector outputs respectively. PR curves are interpolated (see the text for more detail).}
\label{fig:pr_curves}
\end{figure}

\begin{table}[t]

    \centering
    \caption{Precision of each detector output-$\mathrm{AP}_{IoU}$ pair for evenly spaced recall values. This table is based on the PR curves presented in Fig. \ref{fig:pr_curves}.}
    \label{tab:perf_calculation}
    \setlength{\tabcolsep}{0.25em}
    \begin{tabular}{|c|c|c|c|c|c|c|c|c|c|c|c|c|} \hline
         \multirow{2}{*}{IoU}&\multirow{2}{*}{Output}&\multicolumn{10}{|c|}{Precisions for Different Recalls (R)}&\multirow{2}{*}{$\mathrm{AP_\mathrm{IoU}}$}\\ \cline{3-12}
         & &R=$0.1$&R=$0.2$&R=$0.3$&R=$0.4$&R=$0.5$&R=$0.6$&R=$0.7$&R=$0.8$&R=$0.9$&R=$1.0$& \\ \hline\hline
         \multirow{3}{*}{0.50}
         &$C\&R_1$&1.00&1.00&0.67&0.67&0.50&0.50&0.40&0.40&0.00&0.00&0.51\\
         &$C\&R_2$&1.00&1.00&0.67&0.67&0.50&0.50&0.40&0.40&0.00&0.00&0.51\\
         &$C\&R_3$&1.00&1.00&0.67&0.67&0.50&0.50&0.40&0.40&0.00&0.00&0.51\\
         \hline
         \multirow{3}{*}{0.65}
         &$C\&R_1$&1.00&1.00&0.67&0.67&0.50&0.50&0.00&0.00&0.00&0.00&0.43\\
         &$C\&R_2$&1.00&1.00&0.67&0.67&0.30&0.30&0.00&0.00&0.00&0.00&0.39\\
         &$C\&R_3$&0.33&0.33&0.33&0.33&0.30&0.30&0.00&0.00&0.00&0.00&0.19\\
         \hline
         \multirow{3}{*}{0.80}
         &$C\&R_1$&1.00&1.00&0.67&0.67&0.00&0.00&0.00&0.00&0.00&0.00&0.33\\
         &$C\&R_2$&1.00&1.00&0.20&0.20&0.00&0.00&0.00&0.00&0.00&0.00&0.24\\
         &$C\&R_3$&0.20&0.20&0.20&0.20&0.00&0.00&0.00&0.00&0.00&0.00&0.08\\
         \hline
         \multirow{3}{*}{0.95}
         &$C\&R_1$&1.00&1.00&0.00&0.00&0.00&0.00&0.00&0.00&0.00&0.00&0.20\\
         &$C\&R_2$&0.10&0.10&0.00&0.00&0.00&0.00&0.00&0.00&0.00&0.00&0.02\\
         &$C\&R_3$&0.10&0.10&0.00&0.00&0.00&0.00&0.00&0.00&0.00&0.00&0.02 \\
         \hline
    \end{tabular}
\end{table}

\subsection{Computing the Loss Values}
\label{subsec:loss_est}
In this section, computing the loss values in Figure 1(c) of the paper is presented in detail. Each section is devoted to a loss function presented in  Figure 1(c). To keep things simple, without loss of generality, we make the following assumptions in this section during the calculation of the classification and localisation losses:

\begin{enumerate}
    \item The classifier has sigmoid non-linearity at the top.
    \item There is only one foreground class.
    \item Similar to how localisation losses deal with scale- and translation-variance within an image, we assume that each ground truth box is normalized as $[0, 0, 1, 1]$.
    \item For each loss, the average of its contributors is reported.
\end{enumerate}

\subsubsection{Cross-entropy Loss}
Cross-entropy Loss of the $i$th example is defined as:
\begin{align}
    \label{eq:CrossEntropy}
    \mathcal{L}^{CE} (p_i) = - \mathcal{I}[i \in \mathcal{P}] \log (p_i) - \mathcal{I}[i \in \mathcal{N}] \log (1-p_i),
\end{align}
such that $p_i$ is the confidence score of the $i$th example obtained by applying the sigmoid activation to the classification logit $s_i$, and $\mathcal{I}[\mathrm{Q}]$ is the Iverson bracket which is $1$ if the predicate $\mathrm{Q}$ is true; or else it is $0$.

Seeing that all detector outputs, $C\&R1$, $C\&R2$ and $C\&R3$, involve the same classification output, we apply Eq. \eqref{eq:CrossEntropy} for each anchor on $C$, and then find their average as follows:
\begin{align}
    \mathcal{L}^{CE} &=\frac{1}{|\mathcal{P}|+|\mathcal{N}|} \sum_{p_i} \mathcal{L}^{CE}(p_i) ,\\
    &=-\frac{1}{10} \left( \log(1.00) + \log(1-0.90)+ \log(0.80) +\log(1-0.70) +\log(1-0.60)+ \log(0.50) \right. \\
    & \left. + \log(1-0.40)+\log(1-0.30)+\log(1-0.20) + \log(0.10) \right),\\
    &=0.87.
\end{align}

\subsubsection{Average precision (AP) Loss}
The computation of AP Loss is very similar to the $\mathrm{AP}_{\mathrm{50}}$ computation described in Section \ref{subsec:performance_estimation} except that precision is calculated on (and also averaged over) the positive examples instead of the recall values. With this intuition the precision values on the four positives are $1.00, 0.67, 0.50, 0.40$ respectively. Then, AP Loss for the output $C$ in Figure 1 regardless of the localisation output that it is combined with is:
\begin{align*}
    \mathcal{L}^{AP} = 1- \mathrm{\mathrm{AP}}_{50} &= 1 - \frac{1}{|\mathcal{P}|} \sum \limits_{i \in \mathcal{P}} \mathrm{precision}(i), \\
    &= 1 - \frac{1}{4}\times \left( 1.00+0.67+0.50+0.40 \right) = 0.36.
\end{align*}

\subsubsection{L1 Loss}
For a single ground truth, $\hat{B}_i=[\hat{x}_1, \hat{y}_1, \hat{x}_2, \hat{y}_2]$, and its corresponding detection, $B_i=[x_1, y_1, x_2, y_2]$, L1 Loss is defined simply by averaging over the L1 norm of the differences of the parameters of the detection boxes from their corresponding ground truths:
\begin{align}
    \label{eq:L1Loss}
    \mathcal{L}^{L1} (\hat{B}_i, B_i) =  \abs{\hat{x}_1-x_1}+\abs{\hat{y}_1-y_1}+\abs{\hat{x}_2-x_2}+\abs{\hat{y}_2-y_2},
\end{align}

Then, the average L1 Loss is:
\begin{align}
    \mathcal{L}^{L1} &=\frac{1}{|\mathcal{P}|} \sum_{i \in \mathcal{P}} \mathcal{L}^{L1} (\hat{B}_i, B_i), \\
    &=\frac{1}{4}\left( (0.00+0.00+0.00+0.05)+(0.00+0.00+0.00+0.25) \right.\\
    &\left. +(0.00+0.00+0.00+0.35)+(0.00+0.00+0.00+0.50)\right),\\
    &=0.29.
\end{align}

\subsubsection{IoU Loss}
For a single example, IoU Loss is simply $1-\mathrm{IoU}(\hat{B}_i, B_i)$. Then, for all three outputs in the scenario (also an instance is illustrated in Figure \ref{fig:fig_regressors}), seeing that the IoU distributions are all equal, the average IoU loss of this detection set is:
\begin{align*}
\mathcal{L}^{\mathrm{IoU}} &= \frac{1}{|\mathcal{P}|}\sum \limits_{i \in \mathcal{P}} 1-\mathrm{IoU}(\hat{B}_i, B_i), \\
&= \frac{1}{4} \left( (1-0.95)+ (1-0.80) + (1-0.65) + (1-0.50) \right) = 0.28.
\end{align*}

\subsubsection{aLRP Loss}
This section calculates aLRP Loss value in the scenario, and therefore we believe that at the same time it is also a toy example to present more insight on aLRP Loss.

 First, let us recall the definition of aLRP Loss from the paper to simplify tracking this section. aLRP Loss is defined as:
 \begin{align}
    \mathcal{L}^\mathrm{aLRP}:=\frac{1}{|\mathcal{P}|}\sum \limits_{i \in \mathcal{P}} \ell^{\mathrm{LRP}}(i),
\end{align}
such that
\begin{align}
\label{eq:LRPReformulation1}
    \ell^{\mathrm{LRP}}(i) 
    = \frac{1}{\mathrm{rank(i)}}
    \left(N_{FP}(i) + \mathcal{E}_{loc}(i) +  \sum \limits_{k \in \mathcal{P}, k \neq i}  \mathcal{E}_{loc}(k) H(x_{ik}) \right),
\end{align}
where $\mathcal{E}_{loc}(k) = (1-\mathrm{IoU}(k))/(1-\tau)$. Here, we take $H(x)$ as a step function instead of its approximation for simplicity.

Table \ref{tab:alrp-loss_calc} presents the computation of aLRP values including all by-products for each of the four positive anchors in $C \& R_1$, $C \& R_2$ and $C \& R_3$. Given the table presented in Figure 1(a) in the paper, we present how each column is derived in the following steps:
\begin{enumerate}
    \item $1-\mathrm{IoU(i)}$ is simply the IoU Loss of the positive anchors after prediction.
    \item $\mathcal{E}_{loc}(i) = (1-\mathrm{IoU}(i))/(1-\tau)$ such that $\tau=0.5$.
    \item Define a cumulative sum: $\mathrm{cumsum (\mathcal{E}_{loc})}(i) = \mathcal{E}_{loc}(i) +  \sum \limits_{k \in \mathcal{P}, k \neq i}  \mathcal{E}_{loc}(k) H(x_{ik})$ (see Eq. \ref{eq:LRPReformulation1}). Note that this simply corresponds to a cumulative sum on a positive example using the examples with larger scores and itself. Accordingly, in Table \ref{tab:alrp-loss_calc}, $\mathrm{cumsum (\mathcal{E}_{loc})}(i)$ is calculated by summing $\mathcal{E}_{loc})(i)$ column over anchors until (and including) $i$th example.
    \item $N_{FP}(i)$ is the number of negative examples with larger scores than the $i$th positive anchor. (See Section 3 for the formal definition.)
    \item $\mathrm{rank}(i)$ is the rank of an example within positives and negatives. (See Section 2 for the formal definition.)
    \item Then using $\mathrm{cumsum (\mathcal{E}_{loc})}(i)$, $N_{FP}(i)$ and $\mathrm{rank}(i)$, LRP error on a positive example can be computed as:
    \begin{align}
    \ell^{\mathrm{LRP}}(i) =  \frac{N_{FP}(i)+\mathrm{cumsum (\mathcal{E}_{loc})}(i)}{\mathrm{rank}(i)}.
    \end{align}
    \item In the rightmost column, aLRP Loss of a detector, $\mathcal{L}^{aLRP}$, is determined simply averaging over these single LRP values (i.e. $\ell^{\mathrm{LRP}}(i)$ ) on positives.
\end{enumerate}

\begin{table}[t]

    \centering
    \caption{Per-box calculation of $\mathcal{L}_{aLRP}$}
    \label{tab:alrp-loss_calc}
    \setlength{\tabcolsep}{0.25em}
    \begin{tabular}{|c|c|c|c|c|c|c|c|c|}
        \hline
         Output&Anchor&$\mathrm{1-IoU(i)}$&$\mathcal{E}_{loc}(i)$&  $\mathrm{cumsum (\mathcal{E}_{loc})}(i)$ &$N_{FP}(i)$&$\mathrm{rank(i)}$&$\ell^{\mathrm{LRP}}(i)$&$\mathcal{L}^{\mathrm{aLRP}}$ \\\hline
         \multirow{4}{*}{C\&R1}&$a_1$&$0.05$&$0.10$&$0.10$&$0.00$&$1.00$&$0.10$&\multirow{4}{*}{$0.53$}\\\cline{2-8}
         &$a_3$&$0.20$&$0.40$&$0.50$&$1.00$&$3.00$&$0.50$&\\\cline{2-8}
         &$a_6$&$0.35$&$0.70$&$1.20$&$3.00$&$6.00$&$0.70$&\\\cline{2-8}
         &$a_{10}$&$0.50$&$1.00$&$2.20$&$6.00$&$10.00$&$0.82$&\\\hhline{=========}
         \multirow{4}{*}{C\&R2}&$a_1$&$0.20$&$0.40$&$0.40$&$0.00$&$1.00$&$0.40$&\multirow{4}{*}{$0.69$}\\\cline{2-8}
         &$a_3$&$0.35$&$0.70$&$1.10$&$1.00$&$3.00$&$0.70$&\\\cline{2-8}
         &$a_6$&$0.50$&$1.00$&$2.10$&$3.00$&$6.00$&$0.85$&\\\cline{2-8}
         &$a_{10}$&$0.20$&$0.40$&$2.50$&$6.00$&$10.00$&$0.82$&\\\hhline{=========}
         \multirow{4}{*}{C\&R3}&$a_1$&$0.50$&$1.00$&$1.00$&$0.00$&$1.00$&$1.00$&\multirow{4}{*}{$0.89$}\\\cline{2-8}
         &$a_3$&$0.35$&$0.70$&$1.70$&$1.00$&$3.00$&$0.90$&\\\cline{2-8}
         &$a_6$&$0.20$&$0.40$&$2.10$&$3.00$&$6.00$&$0.85$&\\\cline{2-8}
         &$a_{10}$&$0.05$&$0.10$&$2.20$&$6.00$&$10.00$&$0.82$&\\\hline
    \end{tabular}
\end{table}

\section{Details of Table 1: Hyperparameters of the Loss Functions and Models}
\label{sec:Htperparameters}
This section presents the hyperparameters of the common loss functions in object detection and how they are combined by different models in Table 1.

\subsection{Hyperparameters of the Individual Loss Functions}
Table \ref{tab:Hyperparameters2} presents common loss functions and their hyperparameters. Note that since any change in these hyperparameter change the value of the loss function and affects its contribution to the multi-task learning nature of object detection, and, therefore $w_r$ also needs to be retuned. 

\begin{table}[t]
    \centering
    \caption{Common loss functions and the hyperparameters in their definitions.}
    \label{tab:Hyperparameters2}
    \begin{tabular}{|c|l|c|c|l|}\hline
         &Loss Function&Type & \multicolumn{2}{c|}{Number \& Usage of the Hyper-parameters\quad}\\ \hline \hline
         \multirow{9}{*}{\rotatebox[origin=c]{90}{$\mathcal{L}_c$ }}
         
         
         &Cross-entropy \cite{SSD,FasterRCNN}& Score-based&0& Sampling methods are required \\ \cline{2-5}
         
         &$\alpha$-bal. Cross-entropy\cite{FocalLoss}&Score-based&1& The weight of the foreground anchors\\ \cline{2-5}

        & \multirow{2}{*}{Focal Loss \cite{FocalLoss}}& \multirow{2}{*}{Score-based}& \multirow{2}{*}{2} & The weight of the foreground anchors\\ \cline{5-5}
         & & & & Modulating factor for hard examples \\ \cline{2-5}  

         &AP Loss \cite{APLoss}&Ranking-based&1&Smoothness of the step function   \\  \cline{2-5}

        & \multirow{3}{*}{DR Loss \cite{DRLoss}}& \multirow{3}{*}{Ranking-based}& \multirow{3}{*}{3} &Regularizer for foreground distribution\\ \cline{5-5}
         & & & &Regularizer for background distribution \\ \cline{5-5}  
         & & & &Smoothness of the loss \\ \hline  
         
         \multirow{4}{*}{\rotatebox[origin=c]{90}{$\mathcal{L}_r$ }}&Smooth $L_1$ \cite{FastRCNN}&$l_p$-based&1&Cut-off from $L_1$ loss to $L_2$ loss\\   \cline{2-5}

        & \multirow{2}{*}{Balanced $L_1$ \cite{LibraRCNN}}& \multirow{2}{*}{$l_p$-based}& \multirow{2}{*}{2} & The weight of the inlier anchors\\ \cline{5-5}
         & & & & Upper bound of the loss value \\ \cline{2-5} 
      
         &IoU Loss \cite{GIoULoss}&IoU-based&0&- \\  \hline
    \end{tabular}
\end{table}

\subsection{Hyperparameters of the Loss Functions of the Models}
This section discusses the loss functions of the methods discussed in Table 1 in the paper. Obviously, AP Loss \cite{APLoss}, Focal Loss \cite{FocalLoss} and DR Loss \cite{DRLoss} follow the formulation in Equation 1. Hence using Table \ref{tab:Hyperparameters2}, their total number of hyperparameters is easy to see. For example, DR Loss with three hyper-parameters is combined with Smooth L1, which has one hyperparameter. Including the weight of the localisation component, five hyper-parameters are required to be tuned.

Other architectures in Table 1 use more than two loss functions in order to learn different aspects to improve the performance:
\begin{itemize}
    \item FCOS \cite{FCOS} includes an additional centerness branch to predict the centerness of the pixels, which is trained by an additional cross entropy loss.
    \item FreeAnchor \cite{FreeAnchor} aims simultaneously to learn the assignment of the anchors to the ground truths by modeling the loss function based on maximum likelihood estimation. In Table 1, one can easily identify six hyper-parameters from the loss formulation of the Free Anchor and exploiting Table \ref{tab:Hyperparameters2}. Moreover, the inputs of the focal loss are subject to a saturated linear function with two hyperparameters, which makes eight in total.
    \item  A different set of approaches, an example of which is Faster R-CNN \cite{FasterRCNN}, uses directly cross entropy loss. However, cross entropy loss requires to be accompanied by a sampling method by which a set of positive and negative examples are sampled from the set of labelled anchors to alleviate the significant class imbalance. Even for random sampler, two of the following needs to be tuned in order to ensure stable training: (i) Number of positive examples (ii) Number of negative examples (iii) The rate between positives and negatives. Moreover, for a two-stage detector, these should be tuned for both stages, which brings about additional four hyper-parameters. That`s why Faster R-CNN \cite{FasterRCNN} in Table 1 requires nine hyperparameters.
    \item  Finally, CenterNet \cite{CenterNet}, as a state-of-the-art bottom-up method, has a loss function with several components while learning to predict the centers and the corners. It combines six individual losses, one of which is Hinge Loss with one hyperparameter. Considering the type of each, the loss function of CenterNet \cite{CenterNet} has 10 hyper-parameters in total.
\end{itemize}

\section{Proofs of Theorem 1 and Theorem 2}
\label{sec:Proofs}
This section presents the proofs for the theorems presented in our paper.

\begin{customthm}{1}
$\mathcal{L}= \frac{1}{Z}\sum \limits_{i \in \mathcal{P} } \ell(i) =
\frac{1}{Z}\sum \limits_{i \in \mathcal{P} }\sum \limits_{j \in \mathcal{N} }  L_{ij}$.
\end{customthm}
\begin{proof}
The ranking function is defined as:
\begin{align}
    \label{eq:RankingDefinition}
    \mathcal{L}=\frac{1}{Z}\sum_{i \in \mathcal{P}} \ell(i).
\end{align}
Since $\forall i \sum \limits_{j \in \mathcal{N}} p(j|i) = 1$, we can rewrite the definition as follows:
\begin{align}
    \frac{1}{Z} \sum \limits_{i \in \mathcal{P}} \ell(i) \left( \sum \limits_{j \in  \mathcal{N} } p(j|i)\right).
\end{align}
Reorganizing the terms concludes the proof as follows:
\begin{align}
    \frac{1}{Z} \sum \limits_{i \in \mathcal{P} }\sum \limits_{j \in \mathcal{N} } \ell(i)  p(j|i)    
    = \frac{1}{Z} \sum \limits_{i \in \mathcal{P} }\sum \limits_{j \in \mathcal{N} } L_{ij} .
\end{align}
\end{proof}

\begin{customthm}{2}
Training is balanced between positive and negative examples at each iteration; i.e. the summed gradient magnitudes of positives and negatives are equal:
\begin{align}
\sum \limits_{i \in \mathcal{P}} \abs{\frac{\partial \mathcal{L}}{\partial s_i}} = \sum \limits_{i \in \mathcal{N}} \abs{\frac{\partial \mathcal{L}}{\partial s_i}}.
\end{align}
\end{customthm}
\begin{proof}
The gradients of a ranking-based loss function are derived as (see Algorithm 1 and Equation 5 in the paper):
\begin{align}
    \label{eq:APGradients2}
    \frac{\partial \mathcal{L}}{\partial s_i} 
    = \frac{1}{Z} \left( \sum \limits_{j} \Delta x_{ij} - \sum \limits_{j} \Delta x_{ji} \right) = \frac{1}{Z} \sum \limits_{j} \Delta x_{ij} - \frac{1}{Z} \sum \limits_{j} \Delta x_{ji},
\end{align}
such that $\Delta x_{ij}$ is the update for $x_{ij}$s and defined as $\Delta x_{ij}=L^*_{ij}-L_{ij}$. Note that both $L_{ij}$ and $L^*_{ij}$ can be non-zero only if $i \in \mathcal{P}$ and $j \in \mathcal{N}$ following the definition of the primary term. Hence, the same applies to $\Delta x_{ij}$: if $i \notin \mathcal{P}$ or $j \notin \mathcal{N}$, then $\Delta x_{ij}=0$. Then using these facts, we can state in Eq. \eqref{eq:APGradients2} that if $i \in \mathcal{P}$, then $\sum \limits_{j} \Delta x_{ji}=0$; and if $i \in \mathcal{N}$, then $\sum \limits_{j} \Delta x_{ij}=0$. Then, we can say that, only one of the terms is active in Eq. \eqref{eq:APGradients2} for positives and negatives:
\begin{align}
    \label{eq:APGradientsActive}
    \frac{\partial \mathcal{L}}{\partial s_i} 
    = \underbrace{\frac{1}{Z} \sum \limits_{j} \Delta x_{ij}}_{\text{Active if $i\in \mathcal{P}$}}  - \underbrace{ \frac{1}{Z} \sum \limits_{j} \Delta x_{ji}}_{\text{Active if $i\in \mathcal{N}$}}. 
\end{align}

Considering that the value of a primary term cannot be less than its target, we have $\Delta x_{ij} \leq 0$, which implies  $\frac{\partial \mathcal{L}}{\partial s_i} \leq 0$. So, we can take the absolute value outside of summation:
\begin{align}
\label{eq:GradientsToUpdates2}
\sum \limits_{i \in \mathcal{P}} \abs{\frac{\partial \mathcal{L}}{\partial s_i}} =\abs{ \sum \limits_{i \in \mathcal{P}} \frac{\partial \mathcal{L}}{\partial s_i}},
\end{align}

and using the fact identified in Eq. \eqref{eq:APGradientsActive} (i.e. for $i \in \mathcal{P}$, $\frac{\partial \mathcal{L}}{\partial s_i} =  \frac{1}{Z} \sum \limits_{j \in \mathcal{N}} \Delta x_{ij}$):
\begin{align}
\label{eq:GradientsToUpdates}
\abs{ \sum \limits_{i \in \mathcal{P}} \frac{1}{Z} \sum \limits_{j \in \mathcal{N}} \Delta x_{ij}} =  \abs{\frac{1}{Z} \sum \limits_{i \in \mathcal{P}} \sum \limits_{j \in \mathcal{N}} \Delta x_{ij}} .
\end{align}

Simply interchanging the indices and the order of summations, and then reorganizing the constant $\frac{1}{Z}$ respectively yields:
\begin{align}
\abs{ \frac{1}{Z} \sum \limits_{j \in \mathcal{P}} \sum \limits_{i \in \mathcal{N}} \Delta x_{ji}}
= \abs{  \frac{1}{Z} \sum \limits_{i \in \mathcal{N}} \sum \limits_{j \in \mathcal{P}} \Delta x_{ji}} 
= \abs{  \sum \limits_{i \in \mathcal{N}} \frac{1}{Z} \sum \limits_{j \in \mathcal{P}} \Delta x_{ji}}.
\end{align}

Note that for $i \in \mathcal{N}$, $\frac{\partial \mathcal{L}}{\partial s_i} = - \frac{1}{Z} \sum \limits_{j \in \mathcal{P}} \Delta x_{ji}$, and hence $ \frac{1}{Z} \sum \limits_{j \in \mathcal{P}} \Delta x_{ji} = - \frac{\partial \mathcal{L}}{\partial s_i} $. Replacing $ \frac{1}{Z} \sum \limits_{j \in \mathcal{P}} \Delta x_{ji}$;
\begin{align}
\abs{  \sum \limits_{i \in \mathcal{N}} - \frac{\partial \mathcal{L}}{\partial s_i}}.
\end{align}

Since, for $i \in \mathcal{N}$, $\frac{\partial \mathcal{L}}{\partial s_i} = - \frac{1}{Z} \sum \limits_{j \in \mathcal{P}} \Delta x_{ji}$ is greater or equal to zero, the proof follows:
\begin{align}
\abs{  \sum \limits_{i \in \mathcal{N}} - \frac{\partial \mathcal{L}}{\partial s_i}}
=\abs{  \sum \limits_{i \in \mathcal{N}} \frac{\partial \mathcal{L}}{\partial s_i}}
=\sum \limits_{i \in \mathcal{N}}\abs{ \frac{\partial \mathcal{L}}{\partial s_i}}.
\end{align}

\end{proof}

\comment{
\section{Example Gradient Derivations}
\label{sec:GradDerivations}
Here, we first show that Average Precision (AP) Loss and Normalized Discounted Cumulative Gain (NDCG) Loss follow our ranking function, that is,
\begin{align}
    \label{eq:RankingDefinition}
    \mathcal{R}(\mathcal{D}) = \frac{1}{Z} \sum \limits_{i \in \mathcal{C}}  \mathcal{E}^\mathcal{R} (i),
\end{align}{}
and then derive their gradients. Compared to the derivation by Chen et al.\cite{APLoss}, we show that obtaining the gradients of AP Loss is very easy using our formulation, and on NDCG Loss we show that the update rules of loss functions from other domains can be derived following our well-defined steps. 

\subsection{Average Precision (AP) Loss and Its Gradients}
In the following we define and derive the gradients of AP Loss following our methodology:
\textbf{Definition:} AP Loss is defined as:
\begin{align}
\mathcal{L}^{AP} = 1- \mathrm{AP}_{50} = 1 - \frac{1}{|\mathcal{P}|} \sum \limits_{i \in \mathcal{P}} \mathrm{precision}(i) = \frac{1}{|\mathcal{P}|} \sum \limits_{i \in \mathcal{P}} (1-\mathrm{precision}(i)), 
\end{align}
Then, $\mathcal{A}=\mathcal{P}$, $\mathcal{B}=\mathcal{N}$, $Z=|\mathcal{P}|$ and $\mathcal{E}^{AP} (i) = (1-\mathrm{precision}(i))$ in Eq. \ref{eq:RankingDefinition}. Hence, it follows our ranking-based function formulation and we can use our two step formulation as follows:

\textbf{Step 1: Identifying Primary Terms:} Setting $p^{\mathcal{R}}(i, j) = \frac{H(x_{ij})}{N_{FP}(i)}$, the primary terms of AP Loss can be defined as:
\begin{align}
    \label{eq:APGeneralPrimaryTermDefinition}
    \Psi^{AP}_{ij} = (1-\mathrm{precision}(i)) \times \frac{H(x_{ij})}{N_{FP}(i)} = \frac{N_{FP}(i) }{\mathrm{rank}(i)} \times \frac{H(x_{ij})}{N_{FP}(i)}= \frac{H(x_{ij})}{\mathrm{rank}(i)}.
\end{align}

\textbf{Step 2: Identifying Gradients:} When $i$th positive is ranked accordingly, ${\Psi^{AP}_{ij}}^* = 0$ for AP loss. Hence, $G_{ij} ^{AP}$ is:
\begin{align}
    \label{eq:APClassificationGrads}
    G_{ij} ^{AP} = \Psi^{AP}_{ij} - {\Psi^{AP}_{ij}}^* 
    = \frac{H(x_{ij})}{\mathrm{rank}(i)} - 0 = \frac{H(x_{ij})}{\mathrm{rank}(i)}.
\end{align}

\subsection{Classification: Average F1 Loss and Its Gradients}
\textbf{Definition:} The AF1 loss is defined as 

\begin{align}
\frac{1}{|\mathcal{P}|} \sum \limits_{i \in \mathcal{P}} (1-\mathrm{F1}(i)), 
\end{align}
where $\mathcal{A}=\mathcal{P}$, $\mathcal{B}=\mathcal{N}$, $Z=|\mathcal{P}|$ and $\mathcal{E}^{AF1} (i) = (1-\mathrm{F1}(i))$. Hence, it follows our ranking-based function formulation. Note that f1 score of an example can be defined as:

\begin{align}
    \mathrm{F1}(i) = \frac{2 \mathrm{precision}(i) \mathrm{recall}(i)}{\mathrm{precision}(i) + \mathrm{recall}(i)},
\end{align}
such that
\begin{align}
    \mathrm{precision}(i) = \frac{N_{TP}(i) }{\mathrm{rank}(i)} \text{, and } \mathrm{recall}(i) = \frac{N_{TP}(i) }{|X|},
\end{align}
$|X|$ is the number of examples with the same class of $i$ in the batch.

\textbf{Step 1: Identifying Primary Terms:} Setting $p^{\mathcal{R}}(i, j) = \frac{H(x_{ij})}{N_{FP}(i)}$, the primary terms of the AF1 Loss is:
\begin{align}
    \label{eq:AF1GeneralPrimaryTermDefinition}
    \Psi^{AF1}_{ij} = (1-\mathrm{F1}(i)) \times \frac{H(x_{ij})}{N_{FP}(i)}.
\end{align}

\textbf{Step 2: Identifying Gradients:} When $i$th positive is ranked accordingly, $G(i) = \frac{1}{\log_2(1+1)}=1$ and the local error is $\mathcal{E}^{AF1} (i) =0 $. Then, ${\Psi^{AF1}_{ij}}^* =0$. Hence, $G_{ij} ^{AF1}$ is:
\begin{align}
    \label{eq:AF1ClassificationGrads}
    G_{ij} ^{AF1} &= \Psi^{AF1}_{ij} - {\Psi^{AF1}_{ij}}^* 
    = (1-\mathrm{F1}(i)) \times \frac{H(x_{ij})}{N_{FP}(i)}.
\end{align}
}

\section{Normalized Discounted Cumulative Gain (NDCG) Loss and Its Gradients: Another Case Example for our Generalized Framework}
In the following we define and derive the gradients of the NDCG Loss \cite{OptimizingUpperBound} following our generalized framework presented in Section 3 of our main paper.

The NDCG loss is defined as:
\begin{align}
\label{eq:NDCGDef}
    \mathcal{L}^{\mathrm{NDCG}} =1 - \frac{1}{G_{max}} \sum \limits_{i \in \mathcal{P}}G(i) = \frac{G_{max}-\sum \limits_{i \in \mathcal{P}}G(i)}{G_{max}} =  \sum \limits_{i \in \mathcal{P}} \frac{G_{max}/|\mathcal{P}| - G(i)}{G_{max}}.
\end{align}
Note that different from AP Loss and aLRP Loss, here $Z$ turns out to be $1$, which makes sense since NDCG is normalized by definition. Also, based on Eq. \ref{eq:NDCGDef}, one can identify NDCG Error on a positive as: $\ell^{\mathrm{NDCG}}(i) = \frac{G_{max}/|\mathcal{P}| - G(i)}{G_{max}}$ such that $G(i) = \frac{1}{\log_2(1+\mathrm{rank}(i))}$ and  $G_{max} = \sum \limits_{i =1}^{ |\mathcal{P}|}\log_2(1+i)$.  

Similar to AP and aLRP Loss, using $p(j|i) = \frac{H(x_{ij})}{N_{FP}(i)}$, the primary term of the NDCG Loss is $L^\mathrm{NDCG}_{ij} = \ell^\mathrm{NDCG} (i) p(j|i)$ (line 1 of Algorithm 1 in the paper). When the positive example $i$ is ranked properly, $G(i) = \frac{1}{\log_2(1+1)}=1$, and resulting desired NDCG Error is (line 2 of Algorithm 1):
\begin{align}
{\ell^\mathrm{NDCG}(i)}^* =\frac{G_{max}/|\mathcal{P}|-1}{G_{max}},
\end{align}
yielding a target primary term ${L^\mathrm{NDCG}_{ij}}^*={\ell^\mathrm{NDCG}_{i}}^* p(j|i)$. Using ${L^\mathrm{NDCG}_{ij}}$ and ${L^\mathrm{NDCG}_{ij}}^*$, the update can be calculated as follows (line 3 of Algorithm 1):
\begin{align}
    \label{eq:NDCGClassificationGrads}
     \Delta x_{ij}&={L_{ij}^{\mathrm{NDCG}}}^*-L^{\mathrm{NDCG}}_{ij}=\left({\ell^\mathrm{NDCG}(i)}^*- \ell^\mathrm{NDCG}(i) \right) p(j|i), \\
    &= \left( \frac{G_{max}/|\mathcal{P}| - G(i)}{G_{max}} - \frac{G_{max}/|\mathcal{P}|-1}{G_{max}} \right) \frac{H(x_{ij})}{N_{FP}(i)} ,  \\
    &= \frac{1- G(i)}{G_{max}} \frac{H(x_{ij})}{N_{FP}(i)},
\end{align}
and one can compute the gradients using Eq. 5 in the paper (line 4 of Algorithm 1).
\section{Computing aLRP Loss and its Gradients}
This section presents the algorithm to compute aLRP Loss in detail along with an analysis of space and time complexity. For better understanding, bold font denotes multi-dimensional data structures (which can be implemented by vectors, matrices or tensors). Algorithm \ref{alg:aLRPLoss} describes the steps to compute aLRP Loss along with the gradients for a given mini-batch. 

\textbf{Description of the inputs}: $\mathbf{S}$ is the raw output of the classification branch, namely logits. For localisation, as done by IoU-based localisation losses \cite{UnitBox,GIoULoss}, the raw localisation outputs need to be converted to the boxes, which are denoted by $\mathbf{B}$. We assume that $\mathbf{M}$ stores $-1$ for ignored anchors and $0$ for negative anchors. For positive anchors, $\mathbf{M}$ stores the index of the ground truth (i.e. $\{1,...,|\mathbf{\hat{B}}|\}$, where $\mathbf{\hat{B}}$ is a list of ground boxes for the mini-batch). Hence, we can find the corresponding ground truth for a positive anchor only by using $\mathbf{M}$. $\delta$ is the smoothness of the piecewise linear function defined in Eq. \ref{eq:PiecewiseLinear} and set to $1$ following AP Loss. We use the self-balance ratio, $\frac{\mathcal{L}^\mathrm{aLRP}}{\mathcal{L}^\mathrm{aLRP}_{cls}}$, by averaging over its values from the previous epoch. We initialize it as $50$ (i.e. see Table 4 in the paper).

\textbf{Part 1: Initializing Variables}: Lines 2-10  aim to initialize the necessary data from the inputs. While this part is obvious, please note that line 8 determines a threshold to select the relevant negative outputs. This is simply due to Eq. \ref{eq:PiecewiseLinear} and the gradients of these negative examples with scores under this threshold are zero. Therefore, for the sake of time and space efficiency, they are ignored.

\textbf{Part 2: Computing Unnormalized Localisation Errors}: Lines 12-14 compute unnormalized localisation error on each positive example. Line 12 simply finds the localisation error of each positive example and line 13 sorts these errors with respect to their scores in descending order, and Line 14 computes the cumulative sum of the sorted errors with $\mathrm{cumsum}$ function. In such a way, the example with the larger scores contributes to the error computed for each positive anchor with smaller scores. Note that while computing the nominator of the $\mathcal{L}^\mathrm{aLRP}_{loc}$, we employ the step function (not the piecewise linear function), since we can safely use backpropagation. 

\textbf{Part 3: Computing Gradient and Error Contribution from Each Positive}: Lines 16-32 compute the gradient and error contribution from each positive example. To do so, Line 16 initializes necessary data structures. Among these data structures, while  $\mathbf{\mathcal{L}^\mathrm{LRP}_{loc}}$, $\mathbf{\mathcal{L}^\mathrm{LRP}_{cls}}$ and $\frac{\partial \mathcal{L}^\mathrm{aLRP}}{ \partial \mathbf{S_+}}$ are all with size $|\mathcal{P}|$, $\frac{\partial \mathcal{L}^\mathrm{aLRP}}{ \partial \mathbf{S_-}}$ has size $|\hat{\mathcal{N}}|$, where $\hat{\mathcal{N}}$ is the number of negative examples after ignoring the ones with scores less than $\tau$ in Line 8, and obviously $|\hat{\mathcal{N}}| \leq |\mathcal{N}|$. The loop iterates over each positive example by computing LRP values and gradients since aLRP is defined as the average LRP values over positives (see Eq. 9 in the paper). Lines 18-22 computes the relation between the corresponding positive with positives and relevant negatives, each of which requires the difference transformation followed by piecewise linear function:
\begin{align}
    \label{eq:PiecewiseLinear}
    H(x) = \begin{cases} 
      0,  &  x < -\delta \\
      \frac{x}{2 \delta}+0.5, &  -\delta \leq x \leq \delta \\
      1, &  \delta < x.
      \end{cases}
\end{align}
Then, using these relations, lines 23-25 compute the rank of the $i$th examples within positive examples, number of negative examples with larger scores (i.e. false positives) and rank of the example. Lines 26 and 27 compute aLRP classification and localisation errors on the corresponding positive example. Note that to have a consistent denominator for total aLRP, we use $\mathrm{rank}$ to normalize both of the components. Lines 28-30 compute the gradients. While the local error is enough to determine the unnormalized gradient of a positive example, the gradient of a negative example is accumulated through the loop.

\textbf{Part 4: Computing aLRP Loss and Gradients}: Lines 34-40 simply derive the final aLRP value by averaging over LRP values (lines 34-36), normalize the gradients (lines 37-38) and compute gradients wrt the boxes (line 39) and applies self balancing (line 40).

\subsection{Time Complexity} 
\begin{itemize}
    \item First 16 lines of Algorithm \ref{alg:aLRPLoss} require time between $\mathcal{O}(|\mathcal{P}|)$ and $\mathcal{O}(|\mathcal{N}|)$. Since for the object detection problem, the number of negative examples is quite larger than number of positive anchors (i.e. $|\mathcal{P}| << |\mathcal{N}|$), we can conclude that the time complexity of first 13 lines is $\mathcal{O}(|\mathcal{N}|)$.
    \item  The bottleneck of the algorithm is the loop on lines 17-32. The loop iterates over each positive example, and in each iteration while lines 21, 24 and 30 are executed for relevant negative examples, the rest of the lines is executed for positive examples. Hence the number of operations for each iteration is $\max(|\mathcal{P}|, |\hat{\mathcal{N}}|)$ (i.e. number of relevant negatives, see lines 8-9), and overall these lines require $\mathcal{O}(|\mathcal{P}| \times \max(|\mathcal{P}|, |\hat{\mathcal{N}}|))$.  Note that, while in the early training epochs, $|\hat{\mathcal{N}}| \approx |\mathcal{N}|$, as the training proceeds, the classifier tends to distinguish positive examples from negative examples very well, and $|\hat{\mathcal{N}}|$ significantly decreases implying faster mini-batch iterations. 
    \item The remaining lines between 26-33 again require time between $\mathcal{O}(|\mathcal{P}|)$ and $\mathcal{O}(|\mathcal{N}|)$.
\end{itemize}
 Hence, we conclude that the time complexity of Algorithm \ref{alg:aLRPLoss} is $\mathcal{O}(|\mathcal{N}|+|\mathcal{P}| \times \max(|\mathcal{P}|, |\hat{\mathcal{N}}|))$.

Compared to AP Loss; 
\begin{itemize}
    \item aLRP Loss includes an extra computation of aLRP localisation component (i.e. lines 12-14, 27. Each of these lines requires $\mathcal{O}(|\mathcal{P}|)$).
    \item aLRP Loss includes an additional summation while computing the gradients with respect to the scores of the positive examples in line 29 requiring $\mathcal{O}(|\mathcal{P}|^2)$.
    \item aLRP Loss discards interpolation (i.e. using interpolated AP curve), which can take up to $\mathcal{O}(|\mathcal{P}| \times |\hat{\mathcal{N}}|)$.
\end{itemize}

\begin{algorithm}
\caption{The algorithm to compute aLRP Loss for a mini-batch. \label{alg:aLRPLoss}}
\begin{flushleft}
\hspace*{\algorithmicindent} \textbf{Input:} $\mathbf{S}$: Logit predictions of the classifier for each anchor, \\ 
\hspace*{15mm} $\mathbf{B}$: Box predictions of the localization branch from each anchor,\\
\hspace*{15mm} $\mathbf{\hat{B}}$: Ground truth (GT) boxes, \\
\hspace*{15mm} $\mathbf{M}$: Matching of the anchors with the GT boxes. \\
\hspace*{15mm} $\delta$: Smoothness of the piece-wise linear function ($\delta=1$ by default). \\
\hspace*{15mm} $w_{ASB}$: ASB weight, computed using $\frac{\mathcal{L}^\mathrm{aLRP}}{\mathcal{L}^\mathrm{aLRP}_{cls}}$ values from previous epoch. \\
 \hspace*{\algorithmicindent} \textbf{Output:} $\mathcal{L}^\mathrm{aLRP}$: aLRP loss, $\frac{\partial \mathcal{L}^\mathrm{aLRP}}{ \partial \mathbf{S}}$: Gradients wrt logits, $\frac{\partial \mathcal{L}^\mathrm{aLRP}}{ \partial \mathbf{B}}$: Gradients wrt boxes.
\end{flushleft}
\begin{algorithmic}[1]
\State // Part 1: Initializing Variables
\State $\mathbf{idx}_+ :=$ The indices of $\mathbf{M}$ where $\mathbf{M} > 0$.
\State $\mathbf{M}_+ :=$ The values of $\mathbf{M}$ where $\mathbf{M} > 0$.
\State $\mathbf{B_+}:=$ The values of $\mathbf{B}$ at indices $\mathbf{idx}_+$.
\State $\mathbf{S_+}:=$ The values of $\mathbf{S}$ at indices $\mathbf{idx}_+$.
\State $\mathbf{idx^{sorted}_+} :=$ The indices of $\mathbf{S_+}$ once it is sorted in descending order.
\State $\mathbf{S^{sorted}_+} :=$ The values of $\mathbf{S_+}$ when ordered according to $\mathbf{idx^{sorted}_+}$.
\State $\tau=\min (\mathbf{S_+})-\delta$.
\State $\mathbf{idx}_- :=$ The indices of $\mathbf{M}$ where $\mathbf{M} = 0$ and $s_j \geq \tau$ (i.e. relevant negatives only).
\State $\mathbf{S_-}:=$ The values of $\mathbf{S}$ at indices $\mathbf{idx}_-$.
\State // Part 2: Computing Unnormalized Localisation Errors
\State $\mathbf{\mathcal{E}_{Loc}} = \frac{1-\mathrm{IoU}(\mathbf{B_+}, \mathbf{\hat{B}_+})}{1-\tau}$. (or $\mathbf{\mathcal{E}_{Loc}} = \frac{(1-\mathrm{GIoU}(\mathbf{B_+}, \mathbf{\hat{B}_+}))/2}{1-\tau}$ for $\mathrm{GIoU}$ Loss \cite{GIoULoss}.)
\State $\mathbf{\mathcal{E}_{Loc}^{sorted}}:=$ The values of $\mathbf{\mathcal{E}_{Loc}}$ when ordered according to  $\mathbf{idx^{sorted}_+}$.
\State $\mathbf{\mathcal{E}_{Loc}^{cumsum}} =\mathrm{cumsum}(\mathbf{\mathcal{E}_{Loc}^{sorted}})$
\State // Part 3: Computing Gradient and Error Contribution from Each Positive
\State Initialize  , $\mathbf{\mathcal{L}^\mathrm{LRP}_{loc}}$, $\mathbf{\mathcal{L}^\mathrm{LRP}_{cls}}$, $\frac{\partial \mathcal{L}^\mathrm{aLRP}}{ \partial \mathbf{S_+}}$ and $\frac{\partial \mathcal{L}^\mathrm{aLRP}}{ \partial \mathbf{S_-}}$. 
\ForEach {$s_i \in \mathbf{S^{sorted}_+}$}
\State $\mathbf{X_+}:=$ Difference transform of $s_i$ with the logit of each positive example.
\State $\mathbf{R_+}:=$ The relation of $i \in \mathcal{P}$ with each $j \in \mathcal{P}$ using Eq. \ref{eq:PiecewiseLinear} with input $\mathbf{X_+}$.
\State $\mathbf{R_+}[i] = 0$
\State $\mathbf{X_-}:=$ Difference transform of $s_i$ with the logit of each negative example.
\State $\mathbf{R_-}:=$  The relation of $i \in \mathcal{P}$ with each $j \in \mathcal{N}$ using Eq. \ref{eq:PiecewiseLinear} with input $\mathbf{X_+}$.
\State $\mathrm{rank}_+=1+\mathrm{sum}(\mathbf{R_+})$
\State $\mathrm{FP}=\mathrm{sum}(\mathbf{R_-})$
\State $\mathrm{rank}=\mathrm{rank}_+ +\mathrm{FP}$
\State $\mathbf{\mathcal{L}^\mathrm{LRP}_{cls}}[i] =  \mathrm{FP}/\mathrm{rank}$
\State $\mathbf{\mathcal{L}^\mathrm{LRP}_{loc}}[i] = \mathbf{\mathcal{E}_{Loc}^{cumsum}}[i]/\mathrm{rank}$
\If{$\mathrm{FP} \geq \epsilon$} //For stability set $\epsilon$ to a small value (e.g. $1e-5$)
\State $\frac{\partial \mathcal{L}^\mathrm{aLRP}}{ \partial \mathbf{S_+}}[i] = - \left( \mathrm{FP} + \sum \limits_{i \in P} \mathbf{R_+}[i] \times \mathcal{E}_{Loc}^{cumsum}[i] \right) /\mathrm{rank}$
\State $\frac{\partial \mathcal{L}^\mathrm{aLRP}}{ \partial \mathbf{S_-}} +=\left(- \mathbf{ \frac{\partial \mathcal{L}^\mathrm{aLRP}}{ \partial S_+}}[i] \times \frac{\mathbf{R_-}}{\mathrm{FP}} \right)$
\EndIf
\EndFor
\State // Part 4: Computing the aLRP Loss and Gradients
\State $\mathcal{L}^\mathrm{aLRP}_{cls} = \mathrm{mean}(\mathbf{\mathcal{L}^\mathrm{LRP}_{cls}})$
\State $\mathcal{L}^\mathrm{aLRP}_{loc} = \mathrm{mean}(\mathbf{\mathcal{L}^\mathrm{LRP}_{loc}})$
\State $\mathcal{L}^\mathrm{aLRP} =\mathcal{L}^\mathrm{aLRP}_{cls} + \mathcal{L}^\mathrm{aLRP}_{loc}$
\State Place $\frac{\partial \mathcal{L}^\mathrm{aLRP}}{ \partial \mathbf{S_+}}$ and $\frac{\partial \mathcal{L}^\mathrm{aLRP}}{ \partial \mathbf{S_-}}$ into $\mathbf{\frac{\partial \mathcal{L}^\mathrm{aLRP}}{ \partial S}}$ also by setting the gradients of remaining examples to $0$.
\State $\mathbf{\frac{\partial \mathcal{L}^\mathrm{aLRP}}{ \partial S}}/=|\mathcal{P}|$
\State Compute $\frac{\partial \mathcal{L}^\mathrm{aLRP}}{ \partial \mathbf{B}}$ (possibly using autograd property of a deep learning library or refer to the supp. mat. of \cite{GIoULoss} for the gradients of GIoU and IoU Losses.
\State $\frac{\partial \mathcal{L}^\mathrm{aLRP}_{loc}}{ \partial \mathbf{B}} \times =w_{ASB}$
\State \textbf{return} $\frac{\partial \mathcal{L}^\mathrm{aLRP}}{ \partial \mathbf{S}}$, $\frac{\partial \mathcal{L}^\mathrm{aLRP}}{ \mathbf{\partial \mathbf{B}}}$ and  $\mathcal{L}^\mathrm{aLRP}$.
\end{algorithmic}
\end{algorithm}

\subsection{Space Complexity} 
Algorithm \ref{alg:aLRPLoss} does not require any data structure larger than network outputs (i.e. $\mathbf{B}$, $\mathbf{S}$). Then, we can safely conclude that the space complexity is similar to all of the common loss functions that is $\mathcal{O}(|\mathbf{S}|)$.

\section{Details of aLRP Loss}
\label{sec:aLRP}
This section provides details for aLRP Loss.

\subsection{A Soft Sampling Perspective for aLRP Localisation Component}
In sampling methods, the contribution ($w_i$) of the $i$th bounding box to the loss function is adjusted as follows:
\begin{equation}
    \label{eq:SamplingEq}
    \mathcal{L} = \sum \limits_{i \in \mathcal{P} \cup \mathcal{N}} w_i \mathcal{L}(i),
\end{equation}
where $\mathcal{L}(i)$ is the loss of the $i$th example. Hard and soft sampling approaches differ on the possible values of $w_i$. For the hard sampling approaches, $w_i \in \{0,1\}$, thus a BB is either selected or discarded. For soft sampling approaches, $w_i \in [0,1]$, i.e. the contribution of a sample is adjusted with a weight and each BB is somehow included in training. While this perspective is quite common to train the classification branch \cite{PrimeSample,FocalLoss}; the localisation branch is conventionally trained by hard sampling with some exceptions (e.g. CARL \cite{PrimeSample} sets $w_i = s_i$ where $s_i$ is the classification score).

Here, we show that, in fact, what aLRP localisation component does is soft sampling. To see this, first let us recall the definition of the localisation component: 
\begin{align}
    \label{eq:aLRPRegression}
     \mathcal{L}^{\mathrm{aLRP}}_{loc} &= \frac{1}{|\mathcal{P}|}\sum \limits_{i \in \mathcal{P}} \frac{1}{\mathrm{rank}(i) }\left( \mathcal{E}_{loc}(i)+\sum \limits_{k \in \mathcal{P}, k \neq i}  \mathcal{E}_{loc}(k) H(x_{ik})  \right),
\end{align}
which is differentiable with respect to the box parameters as discussed in the paper. With a ranking-based formulation, note that (i) the localisation error of a positive example $i$ (i.e. $\mathcal{E}_{loc}(i)$) contributes each LRP value computed on a positive example $j$ where $s_i \geq s_j$ (also see Fig. 2 in the paper), and (ii) each LRP value computed on a positive example $i$ is normalized by $\mathrm{rank}(i)$. Then, setting $\mathcal{L}(i) = \mathcal{E}_{loc}(i)$ in Eq. \ref{eq:SamplingEq} and accordingly taking Eq. \ref{eq:aLRPRegression} in $\mathcal{E}_{loc}(i)$ paranthesis, the weights of the positive examples (i.e. $w_i =0$ for negatives for the localisation component) are:
\begin{align}
    \label{eq:aLRPRegressionSoft}
    w_i = \frac{1}{|\mathcal{P}|} \left( \left( {\sum \limits_{k \in \mathcal{P}, k \neq i}}  \frac{H(x_{ki})}{\mathrm{rank}(k)}\right) + \frac{1}{\mathrm{rank}(i)} \right).
\end{align}
Note that $\mathcal{L}(i)$ is based on a differentiable IoU-based regression loss and $w_i$ is its weight, which is a scaler. As a result $H(x_{ki})$ in Eq. \ref{eq:aLRPRegressionSoft} does not need to be smoothed and we use a unit-step function (see line 14 in Algorithm \ref{alg:aLRPLoss}).

\comment{
\subsection{Convexity of aLRP Loss}
This section discusses the convexity of aLRP Loss, and concludes that it is not convex.

\begin{lemma}
\label{lemma:IoU}
IoU Loss (i.e. $1-\mathrm{IoU}(\cdot)$) is not convex.
\end{lemma}

\begin{proof}
This is proof by counterexample. If $\mathrm{IoU}(\cdot)$ were convex, then for any two sets of boxes, $A \in \mathcal{R}^8$ and $B \in \mathcal{R}^8$, Jensen`s inequality for $\alpha \in [0,1]$ for concave functions would hold for $IoU()$:
\begin{align}
    \mathrm{IoU}(\alpha A + (1-\alpha)B) \geq \alpha \mathrm{IoU}(A) + (1-\alpha) \mathrm{IoU}(B)
\end{align}
However, simply setting $A=[10,10,20,20,10,10,20,20]$ and $B = [5,5,10,10,10,10,20,20]$ (note that $\mathrm{IoU}(A)=1$, $\mathrm{IoU}(B)=0$) yields the following values for RHS and LHS of Jensen`s inequality at $\alpha=0.5$:
\begin{align}
    \mathrm{IoU}(\alpha A + (1-\alpha)B) &= 21/131.25 \\
    \alpha \mathrm{IoU}(A) + (1-\alpha) \mathrm{IoU}(B) &= 0.50
\end{align}{}
Hence IoU is not concave, and IoU Loss is not convex. 
\end{proof}{}

\begin{theorem}
\label{theorem:convex}
$\mathrm{aLRP}$ is not convex.
\end{theorem}

\begin{proof}
This is proof by contradiction. Assume that $\mathrm{aLRP}$ is convex. Following Theorem \ref{theorem:PrimaryTerms}, we can express aLRP as follows:
\begin{align}
    \mathcal{L}^{aLRP} = \frac{1}{Z} \sum \limits_{i \in \mathcal{P} } \sum \limits_{j \in \mathcal{N} }  L^{aLRP}_{ij}.
\end{align}{}
Since summation and division by a constant are convexity-preserving operations, if $\mathrm{aLRP}$ is convex, then $L^{aLRP}_{ij}$ is convex for any $i$ and $j$, that is,
\begin{align}
    \label{eq:aLRPGeneralPrimaryTermDefinition}
    L^\mathrm{aLRP}_{ij} =  \frac{1}{\mathrm{rank(i)}}
    \left(N_{FP}(i) + \mathcal{E}_{loc}(i) +  \sum \limits_{k \in \mathcal{P}, k \neq i}  \mathcal{E}_{loc}(k) H(x_{ik}) \right)  \frac{H(x_{ij})}{N_{FP}(i)}.
\end{align}
Without loss of generalization, for a single $i$ and $j$, note that $N_{FP}(i)$ and $\mathrm{rank}(i)$ are constants. Hence, if  $L^{aLRP}_{ij}$, then  $\sum \limits_{k \in \mathcal{P}, k \neq i}  \mathcal{E}_{loc}(k) H(x_{ik})$ is convex, and similarly $\mathcal{E}_{loc}(k)$ and IoU Loss is convex. However due to Lemma \ref{lemma:IoU}, this is a contradiction and hence we conclude that aLRP loss is not convex.
\end{proof}{}
}
\subsection{The Relation between aLRP Loss Value and Total Gradient Magnitudes}
\label{subsec:SB}
Here, we identify the relation between the loss value and the total magnitudes of the gradients following the generalized framework due to the fact that it is a basis for our self-balancing strategy introduced in Section 4.2 as follows:
\begin{align}
    \sum_{i \in \mathcal{P}} \abs{\frac{\partial \mathcal{L}}{\partial s_i}} = \sum_{i \in \mathcal{N}} \abs{\frac{\partial \mathcal{L}}{\partial s_i}} \approx \mathcal{L}^{\mathrm{aLRP}}.
\end{align}

Since we showed in Section \ref{sec:Proofs} that $\sum_{i \in \mathcal{P}} \abs{\frac{\partial \mathcal{L}}{\partial s_i}} = \sum_{i \in \mathcal{N}} \abs{\frac{\partial \mathcal{L}}{\partial s_i}}$, here we show that the loss value is approximated by the total magnitude of gradients. Recall from Eq. \eqref{eq:GradientsToUpdates} that total gradients of the positives can be expressed as:
\begin{align}
    \sum_{i \in \mathcal{P}} \abs{\frac{\partial \mathcal{L}}{\partial s_i}} =
    \abs{\frac{1}{|\mathcal{P}|} \sum \limits_{i \in \mathcal{P}} \sum \limits_{j \in \mathcal{N}} \Delta x_{ij}}.
\end{align}
Since $\Delta x_{ij} \leq 0$, we can discard the absolute value by multiplying it by $-1$:
\begin{align}
     - \frac{1}{|\mathcal{P}|} \sum \limits_{i \in \mathcal{P}} \sum \limits_{j \in \mathcal{N}} \Delta x_{ij}.
\end{align}
Replacing the definition of the $\Delta x_{ij}$ by $L_{ij}^*-L_{ij}$ yields:
\begin{align}
     &- \frac{1}{|\mathcal{P}|} \sum \limits_{i \in \mathcal{P}} \sum \limits_{j \in \mathcal{N}} (L_{ij}^*-L_{ij})
     = - \frac{1}{|\mathcal{P}|} \left( \sum \limits_{i \in \mathcal{P}} \sum \limits_{j \in \mathcal{N}} L_{ij}^* - \sum \limits_{i \in \mathcal{P}} \sum \limits_{j \in \mathcal{N}} L_{ij} \right) \\
     &= \frac{1}{|\mathcal{P}|}\sum \limits_{i \in \mathcal{P}} \sum \limits_{j \in \mathcal{N}} L_{ij} - \frac{1}{|\mathcal{P}|} \sum \limits_{i \in \mathcal{P}} \sum \limits_{j \in \mathcal{N}} L_{ij}^* .
\end{align}
Using Theorem \ref{theorem:PrimaryTerms}, the first part (i.e $\frac{1}{|\mathcal{P}|} \sum \limits_{i \in \mathcal{P}} \sum \limits_{j \in \mathcal{N}} L_{ij}$) yields the loss value, $\mathcal{L}$.  Hence:
\begin{align}
  \sum_{i \in \mathcal{P}} \abs{\frac{\partial \mathcal{L}}{\partial s_i}} = \mathcal{L} - \frac{1}{|\mathcal{P}|} \sum \limits_{i \in \mathcal{P}} \sum \limits_{j \in \mathcal{N}} L_{ij}^* .
\end{align}
Reorganizing the terms, the difference between the total gradients of positives (or negatives, since they are equal -- see Theorem \ref{theorem:BalancedTraining}) and the loss values itself is the sum of the targets normalized by number of positives:
\begin{align}
  \mathcal{L}- \sum_{i \in \mathcal{P}} \abs{\frac{\partial \mathcal{L}}{\partial s_i}} = \frac{1}{|\mathcal{P}|} \sum \limits_{i \in \mathcal{P}} \sum \limits_{j \in \mathcal{N}} L_{ij}^* .
\end{align}
Compared to the primary terms, the targets are very small values (if not $0$). For example, for AP Loss ${L_{ij}^\mathrm{AP}}^*=0$, and hence, loss is  equal to the sum of the gradients: $\mathcal{L} = \sum_{i \in \mathcal{P}} \abs{\frac{\partial \mathcal{L}}{\partial s_i}}$.

As for aLRP Loss, the target of a primary term is $\frac{ \mathcal{E}_{loc}(i)}{\mathrm{rank}(i)} \frac{H(x_{ij})}{N_{FP}(i)}$, hence if $H(x_{ij})=0$, then the target is also $0$. Else if $H(x_{ij})=1$, then it implies that there are some negative examples with larger scores, and $\mathrm{rank}(i)$ and $N_{FP}(i)$ are getting larger depending on these number of negative examples, which causes the denominator to grow, and hence  yielding a small target as well. Then ignoring this term, we conclude that:
\begin{align}
     \sum_{i \in \mathcal{P}} \abs{\frac{\partial \mathcal{L}}{\partial s_i}} =  \sum_{i \in \mathcal{N}} \abs{\frac{\partial \mathcal{L}}{\partial s_i}} \approx \mathcal{L}^{\mathrm{aLRP}} .
\end{align}

\subsection{Self-balancing the Gradients Instead of the Loss Value}
Instead of localisation the loss, $\mathcal{L}^{\mathrm{aLRP}}_{loc}$, we multiply ${\partial \mathcal{L}}/{\partial B}$ by the average $\mathcal{L}^{\mathrm{aLRP}}/\mathcal{L}^{\mathrm{aLRP}}_{loc}$ of the previous epoch. This is because formulating aLRP Loss as $\mathcal{L}^{\mathrm{aLRP}}_{loc}+ w_r \mathcal{L}^{\mathrm{aLRP}}_{loc}$ where $w_r$ is a weight to balance the tasks is different from weighing the gradients with respect to the localisation output, $B$, since weighting the loss value (i.e. $\mathcal{L}^{\mathrm{aLRP}}_{loc}+ w_r \mathcal{L}^{\mathrm{aLRP}}_{loc}$) changes the gradients of aLRP Loss with respect to the classification output as well since $\mathcal{L}^{\mathrm{aLRP}}_{loc}$, now weighed by $w_r$, is also ranking-based (has $\mathrm{rank}(i)$ term - see Eq. 11 in the paper). Therefore, we directly add the self balance term as a multiplier of ${\partial \mathcal{L}}/{\partial B}$ and backpropagate accordingly. On the other hand, from a practical perspective, this can simply be implemented by weighing the loss value, $\mathcal{L}^{\mathrm{aLRP}}_{loc}$ without modifying the gradient formulation for $\mathcal{L}^{\mathrm{aLRP}}_{cls}$.

\section{Additional Experiments}
This section presents more ablation experiments, the anchor configuration we use in our models and the effect of using a wrong target for the primary term in the error-driven update rule.

\subsection{More Ablation Experiments: Using Self Balance and GIoU with AP Loss}
We also test the effect of GIoU and our Self-balance approach on AP Loss, and present the results in Table \ref{tab:minival2}:
\begin{itemize}
    \item Using IoU-based losses with AP Loss improves the performance up to 1.0 AP as well and reaches 36.5 AP with GIoU loss.
    \item Our SB approach also improves AP Loss between 0.7 - 1.2 AP, resulting in 37.2AP as the best performing model without using $w_r$. However, it may not be inferred that SB performs better than constant weighting for AP Loss without a more thorough tuning of AP Loss since SB is devised to balance the gradients of localisation and classification outputs for aLRP Loss (see Section \ref{subsec:SB}).
    \item Comparing with the best performing model of AP Loss with 37.2AP, (i) aLRP Loss has a 1.7AP and 1.3oLRP points better performance, (ii) the gap is 4.0AP for $\mathrm{AP}_{90}$, and (iii) the correlation coeffient of aLRP Loss, preserves the same gap (0.48 vs 0.44 comparing the best models for AP and aLRP Losses), since applying these improvements (IoU-based losses and SB) to AP Loss does not have an effect on unifying branches.
\end{itemize}

\begin{table}[]
    \centering
    \footnotesize
    \caption{Using Self Balance and GIoU with AP Loss. For optimal LRP (oLRP), lower is better.}
    \label{tab:minival2}
    \begin{tabular}{|c|c|c|c|c|c|c|c||c|} \hline
       $\mathcal{L}_c$&$\mathcal{L}_r$&SB&$\mathrm{AP}$&$\mathrm{AP_{50}}$&$\mathrm{AP_{75}}$&$\mathrm{AP_{90}}$&$\mathrm{oLRP}$&$\rho$\\ \hline \hline
        \multirow{6}{*}{AP Loss \cite{APLoss}}&Smooth L1& & $35.5$&$58.0$&$37.0$&$9.0$&$71.0$ &$0.45$
        \\
        &Smooth L1&\checkmark&$36.7$&$58.2$&$39.0$&$10.8$&$70.2$&$0.44$\\
        &IoU Loss& &$36.3$&$57.9$&$37.9$&$11.8$&$70.4$ &$0.44$\\        
        &IoU Loss&\checkmark&$37.2$&$58.1$&$39.2$&$13.1$&$69.6$ &$0.44$\\
        &GIoU Loss& & $36.5$&$58.1$&$38.1$&$11.9$&$70.2$&$0.45$\\ 
        &GIoU Loss&\checkmark&$37.2$&$58.3$&$39.0$&$13.4$&$69.7$&$0.44$\\  
        \hline 
        \multirow{3}{*}{aLRP Loss}&with IoU& &$36.9$&$57.7$&$38.4$&$13.9$&$69.9$ &$0.49$
         \\ 
         &with IoU&\checkmark&$38.7$&$58.1$&$40.6$&$17.4$&$68.5$&$0.48$
         \\ 
         &with GIoU&\checkmark&$38.9$&$58.5$&$40.5$&$17.4$&$68.4$&$0.48$
         \\ 

        \hline
    \end{tabular}
\end{table}

\subsection{Anchor Configuration}
The number of anchors has a notable affect on the efficiency of training due to the time and space complexity of optimizing ranking-based loss functions by combining error-driven update and backpropagation. For this reason, different from original RetinaNet using three aspect ratios (i.e. $[0.5, 1, 2]$) and three scales (i.e. $[2^{0/2}, 2^{1/2}, 2^{2/2}]$) on each location, Chen et al. \cite{APLoss} preferred the same three aspect ratios, but reduced the scales to two as $[2^{0/2}, 2^{1/2}]$ to increase the efficiency of AP Loss. In our ablation experiments, except the one that we used ATSS \cite{ATSS}, we also followed the same anchor configuration of Chen et al. \cite{APLoss}.

One main contribution of ATSS is to simplify the anchor design by reducing the number of required anchors to a single scale and aspect ratio (i.e. ATSS uses 1/9 and 1/6 of the anchors of RetinaNet \cite{FocalLoss} and AP Loss \cite{APLoss} respectively), which is a perfect fit for our optimization strategy. For this reason, we used ATSS, however, we observed that the configuration in the original ATSS with a single aspect ratio and scale does not yield the best result for aLRP Loss, which may be related to the ranking nature of aLRP Loss which favors more examples to impose a more accurate ranking, loss and gradient computation. Therefore, different from ATSS configuration, we find it useful to set anchor scales $[2^{0/2}, 2^{1/2}]$ and $[2^{0/2}, 2^{1/2}, 2^{2/2}]$ for aLRPLoss500 and aLRPLoss800 respectively and use a single aspect ratio with $1$ following the original design of ATSS. 

\subsection{Using a Wrong Target for the Primary Term in the Error-driven Update Rule}
\begin{figure*}[t!]
    \begin{subfigure}[t]{0.5\textwidth}
        \centering
        \includegraphics[width=1\textwidth]{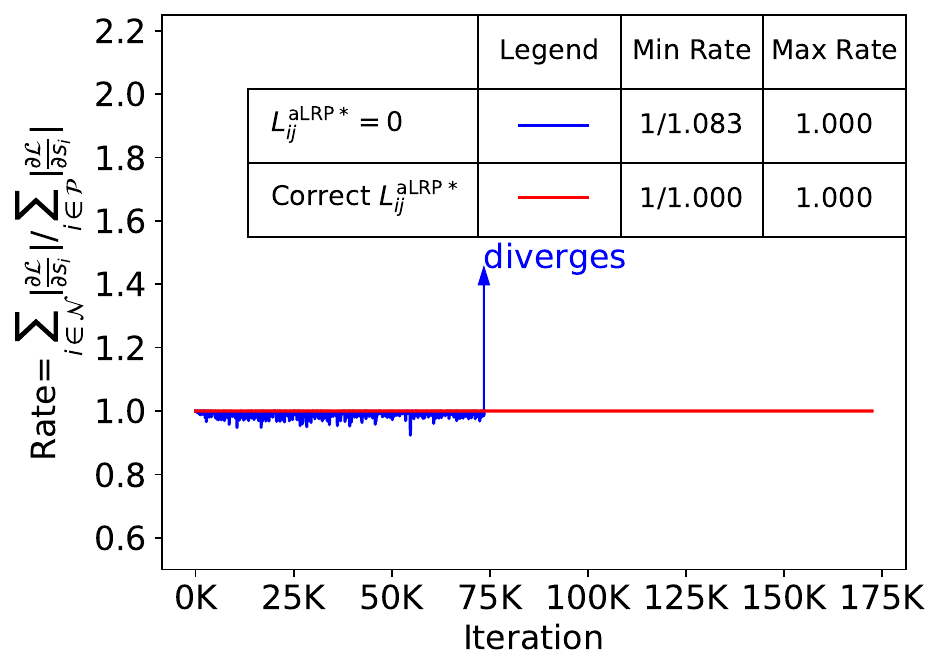}
    \end{subfigure}
    ~
    \begin{subfigure}[t]{0.47\textwidth}
        \centering
        \includegraphics[width=1\textwidth]{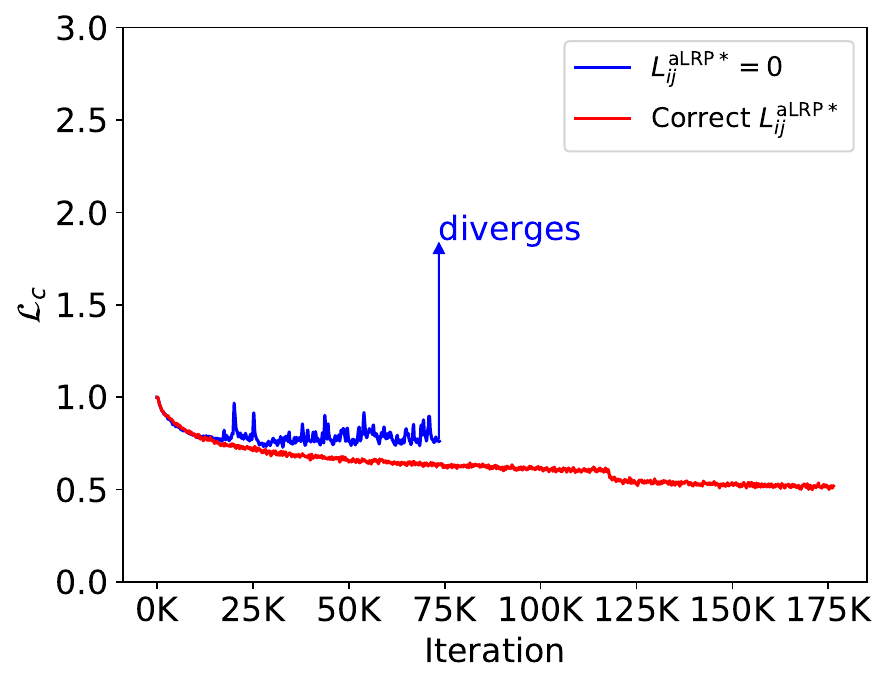}
    \end{subfigure}    
    \caption{\textbf{(left)} The rate of the total gradient magnitudes of negatives to positives. \textbf{(right)} Loss values. \label{fig:GradComp2}}
\end{figure*}

As discussed in our paper (Section 4.1, Equation 13), ${L_{ij}}^*$, the target value of the primary term ${L_{ij}}$ is non-zero due to the localisation error. It is easy to overlook this fact and assume that the target is zero. Fig. \ref{fig:GradComp2} presents this case where ${L_{ij}}^*$ is set to 0 (i.e. minimum value of aLRP). In such a case, the training continues properly, similar to that of the correct case, up to a point and then diverges. Note that this occurs when the positives start to be ranked properly but are still assigned gradients since ${L_{ij}}^*-{L_{ij}} \neq 0$ due to the nonzero localisation error. This causes $\sum \limits_{i \in \mathcal{P}} \abs{\frac{\partial \mathcal{L}}{\partial s_i}} > \sum \limits_{i \in \mathcal{N}} \abs{\frac{\partial \mathcal{L}}{\partial s_i}}$, violating Theorem \ref{theorem:BalancedTraining} (compare min-rate and max-rate in Fig. \ref{fig:GradComp2}). Therefore, assigning proper targets as indicated in Section 3 in the paper is crucial for balanced training. 

\subsection{Implementation Details for FoveaBox and Faster R-CNN}
In this section, we provide more implementation details on the FoveaBox and Faster R-CNN models that we trained with different loss functions. All the models in this section are tested on COCO \textit{minival}.

\textbf{Implementation Details of FoveaBox:} We train the models for 100 epochs with a learning rate decay at epochs 75 and 95. For aLRP Loss and AP Loss, we preserve the same learning rates used for RetinaNet (i.e. $0.008$ and $0.002$ for aLRP Loss and AP Loss respectively). As for the Focal Loss, we set the initial learning rate to $0.02$ following the linear scheduling hypothesis \cite{MegDet} (i.e. Kong et al. set learning rate to $0.01$ and use a batch size of $16$). Following AP Loss official implementation, the gradients of the regression loss (i.e. Smooth L1) are averaged over the output parameters of positive boxes for AP Loss. As for Focal Loss, we follow the mmdetection implementation which averages the total regression loss by the number of positive examples. The models are tested on COCO \textit{minival} by preserving the standard design by mmdetection framework.

\textbf{Implementation Details of Faster R-CNN:} To train Faster R-CNN, we first replace the softmax classifier of  Fast R-CNN by the class-wise sigmoid classifiers. Instead of heuristic sampling rules, we use all anchors to train RPN and top-1000 scoring proposals per image obtained from RPN to train Fast R-CNN (i.e. same with the default training except for discarding sampling). Note that, with aLRP Loss, the loss function consists of two independent losses instead of four in the original pipeline, hence instead of three scalar weights, aLRP Loss requires a single weight for RPN head, which we tuned as $0.20$. Following the positive-negative assignment rule of RPN, different from all the experiments, which use $\tau=0.50$, $\tau=0.70$ for aLRP Loss of RPN. We set the initial learning rate to $0.04$ following the linear scheduling hypothesis \cite{MegDet} for the baselines, and decreased by a factor of $0.10$ at epochs 75 and 95. Localisation loss weight is kept as 1 for L1 Loss and to 10 for GIoU Loss \cite{mmdetection,GIoULoss}. The models are tested on COCO \textit{minival} by preserving the standard design by mmdetection framework. We do not train Faster R-CNN with AP Loss due to the difficulty to tune Faster R-CNN for a different loss function.




\end{document}